\documentclass[11pt]{article}

\usepackage[a4paper,margin=1in]{geometry}

\usepackage[utf8]{inputenc}
\usepackage[T1]{fontenc}
\usepackage{lmodern}
\usepackage{microtype}

\usepackage{amsmath}
\usepackage{amssymb}
\usepackage{amsfonts}
\usepackage{amsthm}
\usepackage{mathtools}
\usepackage{bm}
\usepackage{nicefrac}

\usepackage{graphicx}
\usepackage{subcaption}
\usepackage{booktabs}
\usepackage{multirow}
\usepackage{float}
\usepackage{xcolor}

\usepackage{algorithm}
\usepackage{algorithmic}

\usepackage[numbers,sort&compress]{natbib}
\usepackage{url}
\usepackage{etoc}

\usepackage{aliascnt}

\theoremstyle{plain}

\newtheorem{theorem}{Theorem}[section]

\newaliascnt{proposition}{theorem}
\newtheorem{proposition}[proposition]{Proposition}
\aliascntresetthe{proposition}

\newaliascnt{lemma}{theorem}

\aliascntresetthe{lemma}

\newaliascnt{corollary}{theorem}
\newtheorem{corollary}[corollary]{Corollary}
\aliascntresetthe{corollary}

\theoremstyle{definition}

\newaliascnt{definition}{theorem}

\aliascntresetthe{definition}

\newaliascnt{assumption}{theorem}

\aliascntresetthe{assumption}

\theoremstyle{remark}

\newaliascnt{remark}{theorem}
\newtheorem{remark}[remark]{Remark}
\aliascntresetthe{remark}

\usepackage[
    colorlinks=true,
    linkcolor=blue,
    citecolor=blue,
    urlcolor=blue
]{hyperref}

\usepackage[capitalize,noabbrev]{cleveref}

\crefname{proposition}{Proposition}{Propositions}
\crefname{lemma}{Lemma}{Lemmas}
\crefname{corollary}{Corollary}{Corollaries}
\crefname{definition}{Definition}{Definitions}
\crefname{assumption}{Assumption}{Assumptions}
\crefname{remark}{Remark}{Remarks}

\newcommand{\appsection}[2]{%
  \noindent
  \textbf{\hyperref[#2]{#1}}
  \dotfill
  \pageref{#2}
  \par\vspace{0.5em}
}

\newcommand{\appsubsection}[2]{%
  \noindent
  \hspace{1.5em}
  \hyperref[#2]{#1}
  \dotfill
  \pageref{#2}
  \par\vspace{0.3em}
}

\title{%
  \Large\bfseries
  TracingFlow: A Simulation-Free Trajectory Inference Framework\\
  Based on Second-Order Dynamics
}

\author{
\normalsize
  Yuhao Sun\textsuperscript{1}\thanks{Equal contribution} ,
  Zekun Wu\textsuperscript{2}\footnotemark[1] ,
  Zixun Huang\textsuperscript{2} ,
  {Peijie Zhou}\textsuperscript{1,3,4,5}
  \thanks{Corresponding author: \texttt{pjzhou@pku.edu.cn}}
  \\
  \\
  \normalsize\textsuperscript{1}Center for Machine Learning Research, Peking University
  \\
  \normalsize\textsuperscript{2}School of Mathematical Sciences, Peking University
  \\
  \normalsize\textsuperscript{3}Center for Quantitative Biology, Peking University
  \\
  \normalsize\textsuperscript{4}National Engineering Laboratory for Big Data Analysis and Applications, Beijing
  \\
  \normalsize\textsuperscript{5}AI for Science Institute, Beijing
  \\
}

\date{}

\begin{document}

\maketitle

\begingroup
\renewcommand{\thefootnote}{}
\footnotetext{

  \medskip
\textit{Preprint. August 21, 2026.}
}
\addtocounter{footnote}{-1}
\endgroup

\begin{abstract}
  Inferring continuous system evolution from sparse temporal snapshots is a key challenge in generative modeling and single-cell omics. While Optimal Transport (OT) is popular, existing frameworks are largely restricted to first-order dynamics, assuming memoryless velocity fields. This limits expressiveness, as first-order systems fail to account for regulatory momentum and time-delayed responses inherent in processes like cell differentiation. Here, we introduce TracingFlow, a simulation-free Flow Matching framework generalizing to second-order dynamics. By using neural networks to regress the acceleration field, TracingFlow provides an exact, efficient solution to the Dynamical Optimal Acceleration Transport (DOAT) problem. Unlike first-order methods yielding over-smoothed trajectories, our second-order formulation captures high-curvature transitions and nonlinear evolutions by learning the underlying force fields. Evaluated on complex synthetic and large-scale scRNA-seq datasets, TracingFlow achieves superior accuracy in distributional reconstruction and trajectory faithfulness. Moreover, by integrating lineage tracing priors, it recovers dynamical structures that are both mathematically optimal and biologically plausible.
\end{abstract}

\section{Introduction}








Recovering underlying dynamics from discrete observations is a pivotal task in single-cell omics, known as Trajectory Inference (TI) \citep{chen2018neural,zheng2017massively}. To identify the least costly evolution between distributions, Optimal Transport (OT) frameworks have emerged \citep{bunne2024optimal,zhang2025review}. These generally categorize into Static OT methods \citep{waddingot,moscot,halmos2025dest}, which learn direct mappings, and Dynamical OT methods \citep{trajectorynet,mioflow,scnode,peng2024stvcr}, which capture continuous evolution via flow maps.

Early Dynamical OT frameworks relied on Neural ODEs \citep{chen2018neural}, incurring high computational costs due to numerical integration. To address this, Flow Matching \citep{cfm_lipman,liu2022flow,pooladian2023multisample} was proposed as a simulation-free alternative, regressing velocity fields to push source distributions to targets. By decomposing transport costs into single-particle trajectories, Flow Matching efficiently solves Dynamical OT problems \citep{cfm_tong,klein2024genot,rohbeck2025modeling}, including its various variants \citep{sflowmatch,eyring2024unbalancedness,cao2025taming,WFRFlowMatching} .

However, most TI frameworks assume first-order dynamics ($\boldsymbol{\dot x} = \boldsymbol{v} (\boldsymbol{x}, t)$), inherently restricting $\boldsymbol{v}$ to be single-valued w.r.t $\boldsymbol{x}$. This limits expressiveness when modeling complex biological priors, such as lineage tracing \citep{Kretzschmar2012, Mao2025}, potentially yielding dynamics that contradict with additional biological prior information. Furthermore, \citep{gorin2020protein, li2023multi, hong2025multi} pointed out that if more modalities such as chromatin accessibility or proteomics are considered, the biological kinetics may follow higher-order dynamics or could be modelled in augmented space. While 3MSBM \citep{theodoropoulos2025momentum} recently introduced a second-order Momentum Schrödinger Bridge for smoother trajectories, it relies on iterative retraining the acceleration field similar to Rectified Flow \citep{liu2022flow} to determine optimal couplings. Furthermore, it employs a heuristic method to estimate the initial velocity, rather than incorporating this estimation as part of the optimal control problem.

To overcome these limitations, we introduce \textbf{TracingFlow}, a simulation-free framework utilizing second-order dynamical systems. We propose the Dynamical Optimal Acceleration Transport (DOAT) problem, which determines evolution by minimizing acceleration costs. TracingFlow directly regresses the acceleration field and initial velocity, avoiding ODE simulation. Our experiments show that TracingFlow achieves competitive and often improved reconstruction accuracy relative to existing simulation-free frameworks \citep{cfm_tong, sflowmatch, hierachicalRFM, park2024constant}, while effectively incorporating biological priors. Our contributions include:

\begin{itemize}
    \item We propose TracingFlow, a simulation-free TI framework based on second-order dynamics. It solves the proposed DOAT problem by regressing acceleration fields,  reducing computational costs compared to ODE-based methods.
    \item We provide theoretical guarantees by decoupling the design of the optimal transport plan and single-particle control. We further introduce an iterative strategy to transform the real-world Velocity Missed-DOAT (VM-DOAT) problem into a standard DOAT problem, enabling approximate solutions.
    \item We demonstrate TracingFlow's effectiveness on multi-time-point real world datasets. Results indicate superior distribution reconstruction accuracy and enhanced capability in preserving biological priors compared to state-of-the-art methods.
\end{itemize}

\begin{figure}
    \centering
    \includegraphics[width=0.5\linewidth]{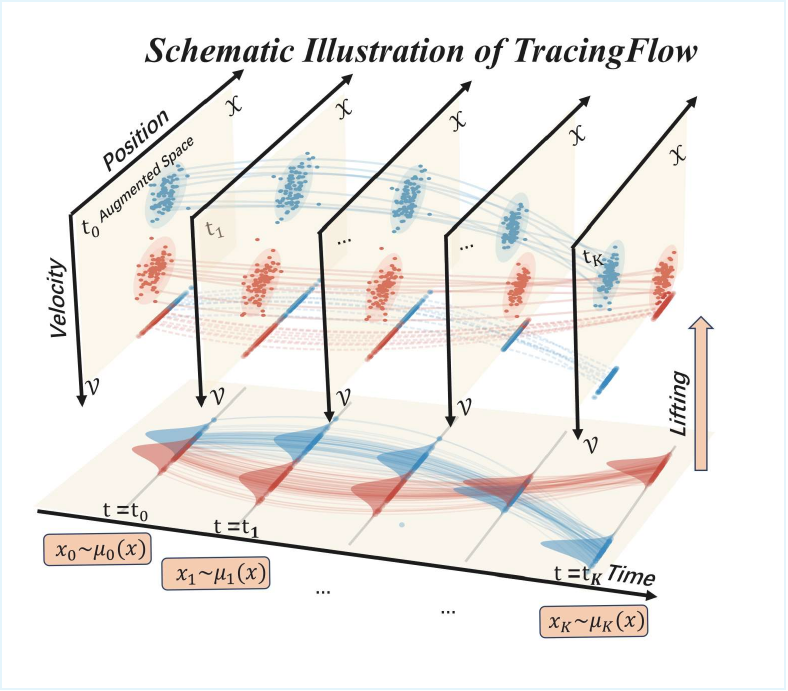}
    \caption{An illustration figure for TracingFlow}
    \label{fig:TracingFlow}
\end{figure}

\section{Related Works}

\paragraph{Solving Optimal Transport via Flow Matching.} 
Flow Matching \citep{cfm_lipman,liu2022flow,pooladian2023multisample,albergo2022building} transports probability distributions via learned flow maps, offering scalable simulation-free training. As an optimal control problem, Optimal Transport (OT)~\cite{peyre2019computational} decomposes into Optimal Coupling and single-particle geodesics, facilitating solutions via Flow Matching \citep{cfm_tong,klein2024genot,rohbeck2025modeling,eyring2024unbalancedness}. This approach extends to various variants and generalizations \citep{sflowmatch, eyring2024unbalancedness, cao2025taming, corso2025composing, wang2025joint, WFRFlowMatching, kapusniak2024metric, zhang2024trajectory, meta_flow_matching, petrovic2025curly}. However, these typically employ first-order dynamics. While second-order approaches have emerged \citep{theodoropoulos2025momentum}, they require iterative training of the acceleration field to achieve optimal coupling. Moreover, they rely on heuristic methods for initial velocity estimation, rather than integrating it as a component of the optimal control problem. TracingFlow addresses this by decomposing the Dynamical Acceleration Optimal Transport (DOAT) problem into optimal coupling and single particle optimal trajectory, achieving a truly simulation-free process for DOAT.

\paragraph{Overcoming Single-Valued Velocity Constraint in Flow Matching.} 
In standard Flow Matching, the velocity field is modeled as a function solely dependent on $\boldsymbol{x}$ and $t$. This single-valued dependency prevents the model from representing intersecting trajectories with distinct velocities, often resulting in curved inference paths and increased computational costs. To decouple the velocity from the current position, the velocity network can incorporate additional inputs like class labels \citep{switchedFM,diversifiedFM}, initial positions \citep{augmentedFM}, or hidden states \citep{variationalRFM}. Additionally, \citep{hierachicalRFM} employs hierarchical generation to mitigate this issue. However, these methods do not extend to second-order dynamics. While \citep{park2024constant} utilized a second-order system, it is limited to constant-acceleration paths. In contrast, TracingFlow offers a flexible second-order setting. By designing minimal-cost paths, it effectively resolves the limitation of the single-valued velocity field.

\paragraph{Single-cell Trajectory Inference and Lineage Integration.} 
Optimal Transport (OT) is a robust framework for inferring cellular dynamics from scRNA-seq data \citep{waddingot, moscot, trajectorynet, mioflow, action_matching,tong_action,albergo2023stochastic,Linwei2024governing,maddu2024inferring,PRESCIENT,GeofftrajectoryMFLP,GeofftrajectoryVentre,gwot,shi2024diffusion,dyn_sb_koshizuka2023neural,bunne_dynam_SB,chen2022likelihood,jiang2024physics,DeepRUOT,sun2025variational, peng2024stvcr, yang2025topological,TIGON, chen2019multi}.
However, transcriptomic similarity alone often fails to resolve complex trajectories. To mitigate this, studies incorporate lineage tracing priors such as clonal barcode information. Existing approaches integrate such info via regularization \citep{Forrow2021, prasad2020}, structural alignment \citep{Lange2024}, velocity mapping \citep{Wang2023}, or sparse transition modeling \citep{Wang2022, Guo2025, Gao2025}. While improving accuracy, these generally focus on discrete couplings or static matrices. TracingFlow advances this by embedding priors into a continuous OT-based second-order flow matching framework, capturing complex dynamics consistent with lineage information.

\section{Mathematical Background}

In this section, we provide the mathematical formulation of the problem addressed by TracingFlow.

\paragraph{Dynamical Optimal Acceleration Transport (DOAT) Problem}

Let $\mathcal{X}\subset\mathbb{R}^d$ and $\mathcal{V}\subset\mathbb{R}^d$ denote the \emph{position} and \emph{velocity} spaces, respectively. 
  We define the \emph{augmented space} as $\mathcal{S}\coloneqq \mathcal{X}\times\mathcal{V}$, so that the full system state is $\bm s=(\bm x,\bm v)\in\mathcal S$.
 Consider the second-order dynamic system:
\begin{equation}\label{dyn}
        \dot{\bm {x}}=\bm {v} \quad 
        \dot{\bm {v}}=\bm {a}(\bm {x}, \bm {v}, t), \quad t\in[t_0, t_K].
\end{equation}
Modeling a large ensemble of such particles via a probability density $\rho_t$ in $\mathcal{S}$, we follow \cite{benamou2019second, chen2019multi} to formulate the following optimal control problem:
\begin{equation}
    \label{eq:doat_problem}
    \small{\min_{\boldsymbol{a}, \rho} \mathcal{J}_{\text{DOAT}} (\boldsymbol{a}, \rho) =  \int_{t_0}^{t_K } \int_{\mathcal{S}} \frac{1}{2} \|\boldsymbol{a}(\boldsymbol{x}, \boldsymbol{v},t) \|^{2} \rho_{t}(\boldsymbol{x}, \boldsymbol{v}) \, \mathrm{d} \boldsymbol{x} \mathrm{d} \boldsymbol{v} \mathrm{d} t }
\end{equation}
\begin{align}
    \text{s.t.} \quad & \partial_{t} \rho_{t} + \nabla_{\boldsymbol{x} }\cdot (\boldsymbol{v} \rho_{t}) +  \nabla_{\boldsymbol{v}}\cdot( \boldsymbol{a} \rho_{t}) =  0, \label{eq:continuity_eq} \\
    & \rho_{t_{i}}(\boldsymbol{x}, \boldsymbol{v}) =  \mu_{i}(\boldsymbol{x}, \boldsymbol{v}), \quad i = 0, \dots ,K.
\end{align}
Here, constraints are imposed at $K+1$ timestamps $t_i$ with given densities $\mu_i$. \cref{eq:continuity_eq} represents the continuity equation in the augmented space. Analogous to standard Dynamical Optimal Transport \citep{peyre2019computational}, we term this the \emph{Dynamical Optimal Acceleration Transport (DOAT)} problem. To facilitate the subsequent numerical solution via particle methods, we assume the existence of an optimal flow map.

\paragraph{Velocity-Missed Dynamical Optimal Acceleration Transport (VM-DOAT) Problem}

However, in practical applications such as single-cell datasets, only particle positions are observable, rendering velocities as unknown quantities. This presents a discrepancy with the DOAT formulation. 

Consequently, TracingFlow addresses a relaxation of the problem: 
\begin{equation}
    \small{\min_{\boldsymbol{a}, \rho} \mathcal{J}_{\text{VM-DOAT}} (\boldsymbol{a}, \rho) =  \int_{t_0}^{t_K} \int_{\mathcal{S}} \dfrac{1}{2} \|\boldsymbol{a}(\boldsymbol{x}, \boldsymbol{v},t) \|^{2} \rho_{t}(\boldsymbol{x}, \boldsymbol{v}) \, \mathrm{d} \boldsymbol{x} \mathrm{d} \boldsymbol{v} \mathrm{d} t}
\end{equation}
\begin{align}
 \text{s.t.}  & \quad \partial_{t} \rho_{t}(\boldsymbol{x}, \boldsymbol{v}) + \nabla_{\boldsymbol{x} } \cdot(\boldsymbol{v} \rho_{t}) +  \nabla_{\boldsymbol{v}}\cdot( \boldsymbol{a} \rho_{t}) =  0,  \\  & \int  \rho_{t_{i}}(\boldsymbol{x},   \boldsymbol{v}) \, \mathrm{d} \boldsymbol{v} =  \mu_{i}^{(\text{pos})}(\boldsymbol{x}), \ i = 0,1, \dots ,K. 
\end{align}
Here, the constraints are imposed only on the marginal distribution of positions, $\int \rho_{t_{j}} (\boldsymbol{x}, \boldsymbol{v}) \, \mathrm{d} \boldsymbol{v}$, at each time point, with no constraints on the velocity distribution. We refer to this problem as \emph{Velocity Missed-Dynamical Optimal Acceleration Transport (VM-DOAT)}.

\section{Methodology of TracingFlow }

In this section, we first introduce the key properties of the DOAT problem that are essential for our algorithm design. Subsequently, drawing inspiration from Flow Matching approaches for standard Dynamical OT, we devise an algorithm to exactly solve the DOAT problem by combining Optimal Coupling with single-particle optimal trajectories. Finally, we introduce a global path-based velocity completion scheme that effectively bridges the gap between VM-DOAT and the standard DOAT framework. By inferring missing velocities through the lens of global trajectory consistency, our approach enables the application of optimal transport theory to solve the VM-DOAT problem in a principled manner.

\subsection{Key Properties of DOAT Problem}

\label{article:key_properties_of_doat}

\paragraph{Optimal Single Particle Trajectory} 
The DOAT problem is an optimal control problem over distributions. We begin by considering the corresponding single-particle optimal control problem. Specifically, the optimal trajectory is a \emph{cubic interpolation} determined by the boundary states.

\begin{proposition}
\label{proposition: optimal_traj_of_particle}
Let $\gamma:[0,T]\to\mathbb{R}^d$ be a twice continuously differentiable curve satisfying the boundary conditions $\gamma(0)=\bm{x}_0$, $\gamma'(0)=\bm{v}_0$, $\gamma(T)=\bm{x}_T$, and $\gamma'(T)=\bm{v}_T$. The variational problem
\begin{equation}
\label{eq:single_particle_problem}
    \min_{\gamma} \int_0^T \frac{1}{2} \|\gamma''(t)\|_2^2\,\mathrm{d} t
\end{equation}
admits a unique minimizer, which is a cubic polynomial of the form
\begin{equation}
\label{eq:single_particle_minimizer}
    \gamma(t)=c_3 t^3 + c_2 t^2 + c_1 t + c_0,
\end{equation}
where the coefficients are detailed in \cref{app:proof_optimal_traj_of_particle}.This result is got by \emph{Euler-Lagrange} formula. See a detailed proof in \cref{app:proof_optimal_traj_of_particle}.
\end{proposition}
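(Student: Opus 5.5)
The plan is to prove existence of a minimizer by explicit construction and then get uniqueness from strict convexity. This avoids appealing to general existence theory for the Euler--Lagrange equation.

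\emph{Existence.} The cubic $\gamma(t)=c_3t^3+c_2t^2+c_1t+c_0$ has four vector coefficients, and the boundary data give four vector conditions. Setting $c_0=\bm x_0$ and $c_1=\bm v_0$ leaves two equations:
\begin{equation*}
c_3T^3+c_2T^2=\bm x_T-\bm x_0-\bm v_0T,\qquad 3c_3T^2+2c_2T=\bm v_T-\bm v_0 .
\end{equation*}
The $2\times 2$ matrix of this system has determinant $2T^4-3T^4=-T^4\neq 0$, so the system is uniquely solvable. Solving it gives
\begin{equation*}
c_2=\frac{3(\bm x_T-\bm x_0)-(2\bm v_0+\bm v_T)T}{T^2},\qquad c_3=\frac{(\bm v_0+\bm v_T)T-2(\bm x_T-\bm x_0)}{T^3}.
\end{equation*}
These are the coefficients to be recorded in the appendix. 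The Euler--Lagrange equation of $L=\tfrac12\|\gamma''\|^2$ is $\frac{\mathrm d^2}{\mathrm dt^2}\partial_{\gamma''}L=\gamma^{(4)}=0$, which explains why a cubic is the natural candidate. To avoid assuming $\gamma\in C^4$, I will not derive optimality from this equation but verify it directly.

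\emph{Minimality and uniqueness.} Let $\gamma^*$ be the cubic above and $\gamma$ any admissible $C^2$ curve. Write $\eta=\gamma-\gamma^*$, so that $\eta(0)=\eta(T)=\eta'(0)=\eta'(T)=0$. Then
\begin{equation*}
\int_0^T\tfrac12\|\gamma''\|^2=\int_0^T\tfrac12\|\gamma^{*\prime\prime}\|^2+\int_0^T\langle\gamma^{*\prime\prime},\eta''\rangle+\int_0^T\tfrac12\|\eta''\|^2 .
\end{equation*}
The main step is showing that the cross term vanishes. Since $\gamma^*$ is a polynomial, it is smooth, so all integrations by parts fall on $\gamma^*$ and only $\eta\in C^2$ is needed. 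Integrating by parts twice gives
\begin{equation*}
\int_0^T\langle\gamma^{*\prime\prime},\eta''\rangle=\big[\langle\gamma^{*\prime\prime},\eta'\rangle-\langle\gamma^{*\prime\prime\prime},\eta\rangle\big]_0^T+\int_0^T\langle\gamma^{*(4)},\eta\rangle .
\end{equation*}
The boundary terms vanish because of the conditions on $\eta$ and $\eta'$, and the integral vanishes because $\gamma^{*(4)}=0$.

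Hence $J(\gamma)=J(\gamma^*)+\tfrac12\int_0^T\|\eta''\|^2\ge J(\gamma^*)$. Equality forces $\eta''\equiv 0$, so $\eta$ is affine. An affine function with $\eta(0)=\eta'(0)=0$ is identically zero. Therefore $\gamma=\gamma^*$, which gives both minimality and uniqueness. The argument has no real obstacle; the only care needed is the regularity bookkeeping when integrating by parts, plus the small linear solve for $c_2,c_3$.
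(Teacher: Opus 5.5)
Your proof is correct, but it runs in the opposite logical direction from the paper's. The paper argues by necessity. It writes the Euler--Lagrange equation for $L=\tfrac12\|\gamma''\|^2$, concludes $\gamma^{(4)}=0$, declares the optimal trajectory to be a cubic, and then solves the same boundary system you solve; your $c_2,c_3$ agree with its coefficients. That argument tacitly assumes that a minimizer exists. It also assumes the minimizer is regular enough for the strong Euler--Lagrange equation, i.e.\ $C^4$, or else a du Bois-Reymond-type lemma is needed under the stated $C^2$ hypothesis. So it only shows that \emph{if} a minimizer exists, it must be this cubic.

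You instead use Euler--Lagrange only as a heuristic to find the candidate. You then verify global optimality through the orthogonal decomposition $J(\gamma)=J(\gamma^*)+\tfrac12\int_0^T\|\eta''\|^2\,\mathrm{d}t$, with every derivative moved onto the polynomial. This gives existence, minimality and uniqueness at once, for all $C^2$ competitors, with no regularity bootstrap.

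The paper's route buys a derivation of the cubic form rather than a guess; yours buys rigor and an exact excess-cost identity. That identity is also closer to what the paper uses later. It extends verbatim to $C^1$, piecewise-$C^2$ competitors, since the interior boundary terms telescope because $\eta'$ and $\gamma^{*\prime\prime}$ are continuous. This directly yields the global lower bound $\int_{t_i}^{t_{i+1}}\tfrac12\|\gamma''\|^2\,\mathrm{d}t\ge\mathcal{C}_{i\to i+1}$ invoked in the proof that TracingFlow solves DOAT. It also gives both the inequality and its equality case in the lemma behind the non-crossing remark, which the paper proves there by a separate gradient computation.
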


\paragraph{Optimal Coupling} 
By utilizing the single-particle optimal trajectory, we can immediately obtain the minimum cost for transporting a particle from $(\boldsymbol{x}_{0}, \boldsymbol{v}_{0})$ to $(\boldsymbol{x}_{T}, \boldsymbol{v}_{T})$:

\begin{corollary}\label{cost}
    Substitute \cref{eq:single_particle_minimizer} to \cref{eq:single_particle_problem}, we get the minimum cost :
    \begin{align}
    \label{eq:single_particle_cost}
    \mathcal{C}_{[0, T]} & =  \frac{2}{T^3} \bigg\{  (\|\bm{v}_0\|^2+\langle \bm{v}_0,\bm{v}_T\rangle+ \|\bm{v}_T\|^2)T^2   + 3\langle \bm{v}_0+\bm{v}_T, \bm{x}_0-\bm{x}_T\rangle T+3\|\bm{x}_0-\bm{x}_T\|^2 \bigg\}
 \end{align}
\end{corollary}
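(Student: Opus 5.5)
We obtain the cubic minimizer of \cref{proposition: optimal_traj_of_particle} explicitly, compute its (constant-in-time) third derivative and linear second derivative, and integrate $\tfrac12\|\gamma''\|^2$ in closed form. Rather than carry the coefficients $c_0,\dots,c_3$, we write the minimizer in Hermite form. Set $\Delta \coloneqq \bm x_T-\bm x_0$. Then $\gamma(t)=\bm x_0+\bm v_0 t + c_2 t^2 + c_3 t^3$, and the two endpoint conditions at $T$ give a $2\times 2$ linear system for $c_2,c_3$. Its solution is
\begin{equation*}
c_2=\frac{3\Delta - (2\bm v_0+\bm v_T)T}{T^2},\qquad c_3=\frac{(\bm v_0+\bm v_T)T-2\Delta}{T^3}.
\end{equation*}
These should agree with the coefficients recorded in \cref{app:proof_optimal_traj_of_particle}; we would check this against the appendix.

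Next we integrate. We have $\gamma''(t)=2c_2+6c_3 t$, so
\begin{equation*}
\int_0^T \tfrac12\|\gamma''\|^2\,\mathrm dt = 2\|c_2\|^2T + 6\langle c_2,c_3\rangle T^2 + 6\|c_3\|^2T^3 .
\end{equation*}
A cleaner route is to set $\bm a\coloneqq (\bm v_0+\bm v_T)T-2\Delta$ and $\bm b\coloneqq(\bm v_T-\bm v_0)T$. Then $T^2c_2 = -\tfrac{3}{2}\bm a-\tfrac12\bm b$ and $T^3c_3=\bm a$. The acceleration becomes $\gamma''(t)=T^{-2}\bigl(\bm b + 3\bm a(2s-1)\bigr)$ with $s=t/T$. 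Because $2s-1$ is orthogonal to constants on $[0,1]$ and $\int_0^1(2s-1)^2\,\mathrm ds=\tfrac13$, the cross terms vanish. The cost is therefore
\begin{equation*}
\mathcal C_{[0,T]}=\frac{1}{2T^3}\bigl(\|\bm b\|^2+3\|\bm a\|^2\bigr).
\end{equation*}

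Finally, we expand in the data. We have $\|\bm b\|^2=T^2(\|\bm v_0\|^2-2\langle\bm v_0,\bm v_T\rangle+\|\bm v_T\|^2)$ and
\begin{equation*}
\|\bm a\|^2=T^2\|\bm v_0+\bm v_T\|^2-4T\langle \bm v_0+\bm v_T,\Delta\rangle+4\|\Delta\|^2 .
\end{equation*}
Summing gives $4T^2(\|\bm v_0\|^2+\langle\bm v_0,\bm v_T\rangle+\|\bm v_T\|^2)+12T\langle\bm v_0+\bm v_T,\bm x_0-\bm x_T\rangle+12\|\bm x_0-\bm x_T\|^2$. Multiplying by $\tfrac1{2T^3}$ yields exactly \cref{eq:single_particle_cost}.

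The only real risk is bookkeeping: sign conventions on $\Delta$ versus $\bm x_0-\bm x_T$, and the factors of $T$. The orthogonal decomposition into $\bm b$ and $\bm a(2s-1)$ is the step that keeps this clean, so we would verify it first, for instance by checking the case $\bm v_0=\bm v_T=0$. There the cost should be $6\|\Delta\|^2/T^3$, which matches the formula.
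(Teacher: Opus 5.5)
Your proposal is correct and follows essentially the paper's route: the paper simply substitutes the cubic minimizer into $\int_0^T\tfrac12\|\gamma''\|^2\,\mathrm dt$ and integrates. Your splitting $\gamma''(t)=T^{-2}\bigl(\bm b+3\bm a(2s-1)\bigr)$ is a cleaner way to organize that integral, and it also exhibits the cost as the sum of squares $\frac{1}{2T^3}(\|\bm b\|^2+3\|\bm a\|^2)$, hence positive semidefinite. One bookkeeping slip: $T^2c_2=-\tfrac32\bm a+\tfrac12\bm b$, not $-\tfrac12\bm b$ (your own expression for $\gamma''$ at $s=0$ requires the plus sign), but this is harmless since only $\|\bm b\|^2$ enters the cost.
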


The minimum cost allows us to define an optimal transport problem between $t=0$ and $t=T$:
\begin{equation}
\label{eq:SOAT_problem}
\pi^*_{[0,T]} = \min_{\pi_{[0,T]}} \int  \mathcal{C}_{[0,T]} [(\boldsymbol{x}_{0} , \boldsymbol{v}_{0}), (\boldsymbol{x}_{T} , \boldsymbol{v}_{T})] \mathrm{d} \pi _{[0,T]}
\end{equation}
where $\pi_{[0,T]}[(\boldsymbol{x}_{0} , \boldsymbol{v}_{0} ), (\boldsymbol{x}_{T}, \boldsymbol{v}_{T})]$ is any coupling satisfying the constraints $\int \pi_{[0,T]} \mathrm{d} \boldsymbol{x}_{0} \mathrm{d}\boldsymbol{v}_{0} = \mu_{T}(\boldsymbol{x}, \boldsymbol{v})$ and $\int \pi_{[0,T]} \mathrm{d} \boldsymbol{x}_{T} \mathrm{d}\boldsymbol{v}_{T} = \mu_{0}(\boldsymbol{x}, \boldsymbol{v})$ , $\mu_{0}(\boldsymbol{x}, \boldsymbol{v})$ and $\mu_{T}(\boldsymbol{x}, \boldsymbol{v})$ are two distributions in augmented space. We term this formulation the \textbf{Static Optimal Acceleration Transport (SOAT)} problem.

It is worth noting that while DOAT permits crossings in the projected position space, it prohibits trajectory crossing in the augmented space, which guarentees the neatness of trajectories. The following remark provides the mathematical justification for this non-crossing property in the augmented space, demonstrating that any such intersection implies a strictly higher transport cost.

\begin{remark}
\label{thm:non_collision}
    Consider two pairs of points in $\mathbb{R}^{2d}$, $(\bm{x}_0,\bm{v}_0), (\bm{x}_T, \bm{v}_T )\text{ and } (\bm{\tilde{x}}_{0}, \bm{\tilde{v}}_{0}), (\bm{\tilde{x}}_T, {\bm{\tilde{v}}}_T)$ with no points overlapping. Denote the cubic interpolation curves between each pair by $\gamma(t), {\tilde\gamma}(t) (t\in[0, T])$, respectively. If the curves cross with each other at time point $t_0\in (0, T)$, then we have
    \begin{equation*}
    \begin{aligned}
        \mathcal{C}_{[0, T]}(\bm{x}_0, \bm{v}_0, \bm{x}_T, \bm{v}_{T})+\mathcal{C}_{[0, T]}(\bm{\tilde{x}}_0, \bm{\tilde{v}}_0, \bm{\tilde{x}}_T, \bm{\tilde{v}}_{T})>
        \mathcal{C}_{[0, T]}(\bm{x}_{0}, \bm{v}_{0}, \bm{\tilde{x}}_{T}, \bm{\tilde{v}}_{T})+\mathcal{C}_{[0, T]}(\bm{\tilde{x}}_{0}, \bm{\tilde{v}}_{0}, \bm{x}_T, \bm{v}_T).
    \end{aligned}
    \end{equation*}
    See a detailed proof in ~\cref{app:Non_collision}.
\end{remark}

\cref{eq:doat_problem} represents a multi-marginal optimal transport problem over the interval $[t_{0}, t_{K}]$. To address this, we formulate sequential SOAT problems between adjacent time steps $t_i$ and $t_{i+1}$ by applying the substitutions $\boldsymbol{x}_{0}\rightarrow \boldsymbol{x}_{i}$, $\boldsymbol{x}_{T}\rightarrow \boldsymbol{x}_{i+1}$, $\boldsymbol{v}_{0}\rightarrow \boldsymbol{v}_{i}$, $\boldsymbol{v}_{T}\rightarrow \boldsymbol{v}_{i+1}$, and $T \rightarrow t_{i+1} - t_{i}$ to \cref{eq:SOAT_problem}. For simplicity, let $\mathcal{C}_{i \rightarrow i+1}$ denote the resulting pairwise cost and $\pi^*_{i \rightarrow i+1}$ the optimal plan. Due to the Markovian nature of the dynamics, the full multi-marginal plan decomposes into the sequence of these local plans $\{{\pi^{*}_{i \rightarrow i+1}}\}$ for $i=0, \dots, K-1$.

\subsection{Conditional and Marginal Probability Path}

\label{article:acc_and_prob_path_construct}

\paragraph{Obtaining Marginal Probability Path via Mixing Conditional Probability Paths} 
Similar to the approach in standard Flow Matching \cite{cfm_lipman, cfm_tong}, directly constructing the Marginal Probability Path is difficult. Therefore, we condition on $\boldsymbol{z}$ and define the Marginal Probability Path as a mixture of Conditional Probability Paths:
\begin{equation}
\label{eq:marginal_prob_path}
    \rho_{t}(\boldsymbol{x}, \boldsymbol{v}) = \int \rho_{t}(\boldsymbol{x}, \boldsymbol{v}|\boldsymbol{z}) q(\boldsymbol{z}) \mathrm{d} \boldsymbol{z}
\end{equation}
The condition $\boldsymbol{z}$ consists of $K+1$ points selected respectively from the snapshot data at $K+1$ time points $t=0, 1, \cdots, K$. If $\rho_{t}(\boldsymbol{x}, \boldsymbol{v}|\boldsymbol{z})$ is generated by a conditional acceleration field $\boldsymbol{a}(\boldsymbol{x}, \boldsymbol{v}, t|\boldsymbol{z})$ starting from the initial condition $\rho_{0}(\boldsymbol{x}, \boldsymbol{v}|\boldsymbol{z})$, then the marginal acceleration field:
\begin{equation}
    \boldsymbol{a}(\boldsymbol{x}, \boldsymbol{v}, t) := \mathbb{E}_{q(\boldsymbol{z})}\left\{ \dfrac{\boldsymbol{a}(\boldsymbol{x}, \boldsymbol{v}, t|\boldsymbol{z}) \rho_{t}(\boldsymbol{x}, \boldsymbol{v}|\boldsymbol{z})}{\rho_{t}(\boldsymbol{x}, \boldsymbol{v})} \right\}
    \label{eq:marginal_acc_field}
\end{equation}
will generate the marginal probability path $\rho_{t}(\boldsymbol{x}, \boldsymbol{v})$.

\begin{theorem}
The marginal acceleration field in Eq.~\eqref{eq:marginal_acc_field} generates the marginal probability path. See the proof in \cref{app:Marginal}.
\end{theorem}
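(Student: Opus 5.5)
The approach mirrors the standard Flow Matching argument of \cite{cfm_lipman, cfm_tong}, carried out in the augmented space $\mathcal{S}$. The claim is that $\rho_t$ in \cref{eq:marginal_prob_path}, together with the field $\boldsymbol{a}$ in \cref{eq:marginal_acc_field}, satisfies the augmented continuity equation \cref{eq:continuity_eq}, and that its initial condition is $\rho_0 = \int \rho_0(\cdot\mid\boldsymbol{z})q(\boldsymbol{z})\,\mathrm{d}\boldsymbol{z}$. The initial condition is immediate from the definition. It then remains to verify the PDE, which I will do in the weak sense against test functions $\varphi\in C_c^\infty(\mathcal{S})$. This has two advantages: it avoids differentiability assumptions on the conditional densities, and it treats the $\boldsymbol{x}$-transport term and the $\boldsymbol{v}$-transport term on the same footing.

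\textbf{Step 1: the conditional equation.} By hypothesis, each conditional path satisfies
\[
\partial_t \rho_t(\cdot\mid\boldsymbol{z}) + \nabla_{\boldsymbol{x}}\cdot\big(\boldsymbol{v}\rho_t(\cdot\mid\boldsymbol{z})\big) + \nabla_{\boldsymbol{v}}\cdot\big(\boldsymbol{a}(\cdot,t\mid\boldsymbol{z})\rho_t(\cdot\mid\boldsymbol{z})\big)=0 .
\]
Paired with $\varphi$ and integrated by parts, this reads
\[
\frac{\mathrm{d}}{\mathrm{d}t}\int\varphi\,\rho_t(\cdot\mid\boldsymbol{z}) = \int\big(\boldsymbol{v}\cdot\nabla_{\boldsymbol{x}}\varphi + \boldsymbol{a}(\cdot\mid\boldsymbol{z})\cdot\nabla_{\boldsymbol{v}}\varphi\big)\rho_t(\cdot\mid\boldsymbol{z}).
\]
If the conditional paths are the pushforwards of the cubic interpolants from \cref{proposition: optimal_traj_of_particle}, this identity can also be checked directly: differentiate $\varphi(\gamma_t,\dot\gamma_t)$ along each trajectory.

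\textbf{Step 2: integrate against $q(\boldsymbol{z})$.} I integrate the identity from Step 1 against $q(\boldsymbol{z})\,\mathrm{d}\boldsymbol{z}$ and exchange the $t$-derivative with the $\boldsymbol{z}$-integral, and also the order of the $\boldsymbol{z}$ and $\mathcal{S}$ integrals. The left side becomes $\frac{\mathrm{d}}{\mathrm{d}t}\int\varphi\rho_t$. The drift term in $\boldsymbol{v}$ does not depend on $\boldsymbol{z}$, so it becomes $\int\boldsymbol{v}\cdot\nabla_{\boldsymbol{x}}\varphi\,\rho_t$ by linearity. This is the point where the second-order structure is harmless: the position drift is just the state variable $\boldsymbol{v}$, so no "marginal velocity" needs to be defined. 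The acceleration term becomes
\[
\int\nabla_{\boldsymbol{v}}\varphi\cdot\Big(\int\boldsymbol{a}(\cdot\mid\boldsymbol{z})\rho_t(\cdot\mid\boldsymbol{z})q(\boldsymbol{z})\,\mathrm{d}\boldsymbol{z}\Big).
\]
By \cref{eq:marginal_acc_field}, the inner integral equals $\boldsymbol{a}(\boldsymbol{x},\boldsymbol{v},t)\rho_t(\boldsymbol{x},\boldsymbol{v})$ wherever $\rho_t>0$. Where $\rho_t=0$, every $\rho_t(\cdot\mid\boldsymbol{z})$ vanishes for $q$-a.e.\ $\boldsymbol{z}$, so both sides are zero. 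Collecting the three terms gives the weak form of \cref{eq:continuity_eq} for $(\rho_t,\boldsymbol{a})$, which is the claim.

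\textbf{Main obstacle.} The delicate step is justifying the Fubini and Leibniz interchanges. Here $\boldsymbol{z}$ is a multi-point condition, so the conditional laws may be degenerate, for instance Dirac masses along cubic paths. For that reason I would state the result for measures and not rely on pointwise densities. Sufficient regularity conditions are: $t\mapsto\int\varphi\,\rho_t(\cdot\mid\boldsymbol{z})$ is absolutely continuous, and
\[
\int_{t_0}^{t_K}\!\!\int\!\!\int\big(\|\boldsymbol{v}\|+\|\boldsymbol{a}(\cdot\mid\boldsymbol{z})\|\big)\rho_t(\cdot\mid\boldsymbol{z})\,q(\boldsymbol{z})\,\mathrm{d}\boldsymbol{z}\,\mathrm{d}\boldsymbol{x}\,\mathrm{d}\boldsymbol{v}\,\mathrm{d}t<\infty .
\]
For cubic interpolants between compactly supported snapshot data this holds trivially, because all positions, velocities and accelerations are bounded. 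I would add these conditions explicitly as regularity assumptions, and if needed define the marginal field via the Radon–Nikodym derivative of the vector measure $\int\boldsymbol{a}(\cdot\mid\boldsymbol{z})\rho_t(\cdot\mid\boldsymbol{z})q(\mathrm{d}\boldsymbol{z})$ with respect to $\rho_t$.
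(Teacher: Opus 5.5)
Your proposal is correct and follows the same argument as the paper's proof in \cref{app:Marginal}. You average the conditional continuity equations against $q(\boldsymbol{z})$, note that the position drift $\boldsymbol{v}$ does not depend on $\boldsymbol{z}$ so that term mixes linearly into $\nabla_{\boldsymbol{x}}\cdot(\boldsymbol{v}\rho_t)$, and identify $\int \boldsymbol{a}(\cdot\mid\boldsymbol{z})\rho_t(\cdot\mid\boldsymbol{z})q(\boldsymbol{z})\,\mathrm{d}\boldsymbol{z}=\boldsymbol{a}\rho_t$ via \cref{eq:marginal_acc_field}. The paper does this in strong form, differentiating under the integral and moving the divergences outside it without comment; your weak formulation, explicit integrability hypotheses and treatment of the set $\{\rho_t=0\}$ make those interchanges rigorous and also cover degenerate, measure-valued conditional paths.
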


\paragraph{Equivalence between Regressing Conditional Acceleration Fields and Marginal Acceleration Fields} 
Assuming the marginal acceleration field $\boldsymbol{a}(\boldsymbol{x}, \boldsymbol{v}, t)$ is known and we can sample from the marginal probability path $\rho_{t}(\boldsymbol{x}, \boldsymbol{v})$, we can directly regress $\boldsymbol{a}(\boldsymbol{x}, \boldsymbol{v}, t)$ using a neural network. Let $\boldsymbol{a}_{\boldsymbol{\theta}}(\cdot, \cdot, \cdot): \mathbb{R}^{d} \times \mathbb{R}^{d} \times \mathbb{R}^{1} \rightarrow \mathbb{R}^{d}$ be a time-dependent acceleration field parameterized by a neural network with parameters $\theta$. The acceleration flow matching loss is
\begin{equation*}
\begin{split}
    \mathcal{L}_{\text{AFM}} = \mathbb{E}_{(\boldsymbol{x}, \boldsymbol{v}) \sim \rho_{t}(\boldsymbol{x}, \boldsymbol{v}), t \sim \mathcal{U}[t_{0}, t_{K}]} 
    \left\{ \| \boldsymbol{a}_{\boldsymbol{\theta} }- \boldsymbol{a}(\boldsymbol{x}, \boldsymbol{v}, t) \|^{2} \right\}
\end{split}
\end{equation*}
However, $\boldsymbol{a}(\boldsymbol{x}, \boldsymbol{v}, t)$ is generally intractable because it is defined via an expectation (integral), and its denominator $\rho_{t}(\boldsymbol{x}, \boldsymbol{v})$ also requires integration to compute. In contrast, the conditional acceleration field $\boldsymbol{a}(\boldsymbol{x}, \boldsymbol{v}, t|z)$ and the conditional probability path $\rho_{t}(\boldsymbol{x}, \boldsymbol{v}|z)$ have simple forms. Therefore, in actual training, we use the following Conditional Acceleration Flow Matching (CAFM) objective:
\begin{equation*}
\begin{split}
    \mathcal{L}_{\text{CAFM}} = \mathbb{E}_{\substack{(\boldsymbol{x}, \boldsymbol{v}) \sim \rho_{t}(\boldsymbol{x}, \boldsymbol{v}|z), \\ t \sim \mathcal{U}[t_{0}, t_{K}], z \sim q(z)}} \Big[ \| \boldsymbol{a}_\theta(\boldsymbol{x}, \boldsymbol{v}, t) 
    - \boldsymbol{a}(\boldsymbol{x}, \boldsymbol{v}, t|z) \|^{2} \Big]
\end{split}
\end{equation*}
These two objectives are equivalent in training, as described by the following theorem:

\begin{theorem}
If $\rho_{t}(\boldsymbol{x}, \boldsymbol{v}) > 0$ for any $\boldsymbol{x} \in \mathbb{R}^{d}, \boldsymbol{v} \in \mathbb{R}^{d}, t \in [t_{0}, t_{K}]$, then $\mathcal{L}_{\text{AFM}}$ and $\mathcal{L}_{\text{CAFM}}$ differ only by a constant independent of $\boldsymbol{\theta}$. In other words, $\nabla_{\boldsymbol{\theta}}\mathcal{L}_{\text{AFM}} = \nabla_{\boldsymbol{\theta}}\mathcal{L}_{\text{CAFM}}$. The proof is left to \cref{app:AFM_and_CAFM}.
\end{theorem}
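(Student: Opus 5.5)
Following the standard conditional flow matching argument of \cite{cfm_lipman, cfm_tong}, I will expand both squared norms and compare them term by term. Write $\boldsymbol{s}=(\boldsymbol{x},\boldsymbol{v})$ and abbreviate $\boldsymbol{a}_\theta=\boldsymbol{a}_\theta(\boldsymbol{s},t)$, $\boldsymbol{a}=\boldsymbol{a}(\boldsymbol{s},t)$, $\boldsymbol{a}_z=\boldsymbol{a}(\boldsymbol{s},t|\boldsymbol{z})$. Then
\begin{align*}
\|\boldsymbol{a}_\theta-\boldsymbol{a}\|^2 &= \|\boldsymbol{a}_\theta\|^2-2\langle \boldsymbol{a}_\theta,\boldsymbol{a}\rangle+\|\boldsymbol{a}\|^2,\\
\|\boldsymbol{a}_\theta-\boldsymbol{a}_z\|^2 &= \|\boldsymbol{a}_\theta\|^2-2\langle \boldsymbol{a}_\theta,\boldsymbol{a}_z\rangle+\|\boldsymbol{a}_z\|^2.
\end{align*}
The terms $\|\boldsymbol{a}\|^2$ and $\|\boldsymbol{a}_z\|^2$ do not depend on $\boldsymbol{\theta}$, so they only contribute to the constant. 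It therefore suffices to show that, for each fixed $t$, the expectations of the quadratic term and of the cross term coincide under the two sampling schemes.

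For the quadratic term, I will use the mixture definition \cref{eq:marginal_prob_path} and Fubini. This gives
\begin{equation*}
\mathbb{E}_{q(\boldsymbol{z})}\mathbb{E}_{\rho_t(\boldsymbol{s}|\boldsymbol{z})}\|\boldsymbol{a}_\theta(\boldsymbol{s},t)\|^2=\int \|\boldsymbol{a}_\theta(\boldsymbol{s},t)\|^2\rho_t(\boldsymbol{s})\,\mathrm{d}\boldsymbol{s},
\end{equation*}
which is exactly the quadratic term of $\mathcal{L}_{\text{AFM}}$.

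For the cross term, I will insert the definition \cref{eq:marginal_acc_field}. The hypothesis $\rho_t>0$ makes the quotient well defined everywhere, and we get
\begin{equation*}
\int \langle \boldsymbol{a}_\theta,\boldsymbol{a}\rangle \rho_t(\boldsymbol{s})\,\mathrm{d}\boldsymbol{s}=\int\!\!\int \langle \boldsymbol{a}_\theta(\boldsymbol{s},t),\boldsymbol{a}(\boldsymbol{s},t|\boldsymbol{z})\rangle\rho_t(\boldsymbol{s}|\boldsymbol{z})q(\boldsymbol{z})\,\mathrm{d}\boldsymbol{z}\,\mathrm{d}\boldsymbol{s}.
\end{equation*}
The $\rho_t(\boldsymbol{s})$ factors cancel, and swapping the order of integration turns the right-hand side into the CAFM cross term. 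Averaging over $t\sim\mathcal{U}[t_0,t_K]$ preserves both identities. Hence $\mathcal{L}_{\text{AFM}}-\mathcal{L}_{\text{CAFM}}$ equals $\mathbb{E}\|\boldsymbol{a}\|^2-\mathbb{E}\|\boldsymbol{a}_z\|^2$, which is independent of $\boldsymbol{\theta}$. Differentiating then gives equality of the gradients.

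The main obstacle is justifying the interchanges of integration and the differentiation under the integral sign. I would handle this with mild integrability assumptions, as is standard in flow matching:
\begin{itemize}
\item $\mathbb{E}\|\boldsymbol{a}_z\|^2<\infty$, which by Jensen also gives $\mathbb{E}\|\boldsymbol{a}\|^2<\infty$;
\item $\boldsymbol{a}_\theta$ and $\nabla_\theta\boldsymbol{a}_\theta$ are square integrable, locally uniformly in $\boldsymbol{\theta}$.
\end{itemize}
Under these assumptions Fubini--Tonelli applies, with Cauchy--Schwarz bounding the cross terms, and dominated convergence justifies passing $\nabla_\theta$ inside the expectations. The positivity assumption $\rho_t>0$ is used only to make \cref{eq:marginal_acc_field} well defined at every point of $\mathcal{S}$.
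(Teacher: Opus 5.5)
Your proposal is correct and follows essentially the same route as the paper's proof in \cref{app:AFM_and_CAFM}: expand both squared norms, match the $\|\boldsymbol{a}_\theta\|^2$ terms via the mixture definition \cref{eq:marginal_prob_path}, and match the cross terms by substituting \cref{eq:marginal_acc_field} and exchanging the order of integration. Your explicit integrability assumptions, and your observation that $\mathcal{L}_{\text{AFM}}-\mathcal{L}_{\text{CAFM}}=\mathbb{E}\|\boldsymbol{a}\|^2-\mathbb{E}\|\boldsymbol{a}(\cdot|\boldsymbol{z})\|^2$, make rigorous the interchanges and the dropping of $\boldsymbol{\theta}$-independent terms that the paper does implicitly.
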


\paragraph{Design of the Conditional Probability Path}  
Consider the joint distributions $\mu_{0}(\boldsymbol{x}, \boldsymbol{v})$, $\mu_{1}(\boldsymbol{x}, \boldsymbol{v})$, ..., $\mu_{K}(\boldsymbol{x}, \boldsymbol{v})$ at $K+1$ time points. The optimal transport plans calculated via solving SOAT problem  are $\pi^*_{0 \rightarrow 1}, \pi^*_{1 \rightarrow 2}, \cdots, \pi^*_{K-1 \rightarrow K}$. Taking the condition variable $\boldsymbol{z} = [(\boldsymbol{x}_{0}, \boldsymbol{v}_{0}), \cdots, (\boldsymbol{x}_{K}, \boldsymbol{v}_{K})]$, and similar to standard Flow Matching, we choose a path with time-varying mean and invariant variance as the conditional probability path, i.e.:
\begin{equation}
\begin{split}
    \rho_{t}(\boldsymbol{x}, \boldsymbol{v}|z) = \mathcal{N}(\boldsymbol{x}|\boldsymbol{m}_{x}(t), \sigma_{x}^{2}I) 
    \cdot \mathcal{N}(\boldsymbol{v}| \boldsymbol{m}_{v}(t), \sigma_{v}^{2}I)
\end{split}
\end{equation}
Here, $\boldsymbol{m}_{x}, \boldsymbol{m}_{v}$ are given by the Minimum Cost Trajectory described in \cref{proposition: optimal_traj_of_particle}, where for $t \in [t_{i}, t_{i+1}]$:
\begin{align*}
    \boldsymbol{m}_{x}(t) = c_{3}(t-t_{i})^{3} + c_{2}(t-t_{i})^{2} + c_{1}(t-t_{i}) + c_{0} \quad 
    \boldsymbol{m}_{v}(t)  3c_{3}(t-t_{i})^{2} + 2c_{2}(t-t_{i}) + c_{1}
\end{align*}
,  the coefficients are identical to those given in \cref{app:proof_optimal_traj_of_particle}, with $\boldsymbol{x}_{0}, \boldsymbol{v}_{0}$ replaced by $\boldsymbol{x}_{i}, \boldsymbol{v}_{i}$, $\boldsymbol{x}_{T}, \boldsymbol{v}_{T}$ replaced by $\boldsymbol{x}_{i+1}, \boldsymbol{v}_{i+1}$, and $T$ replaced by $t_{i+1} - t_{i}$.
Based on the optimal transport plans, we define the optimal propagator (transition kernel) from time $t_{i}$ to $t_{i+1}$ as the probability of a particle at $(\boldsymbol{x}_{i}, \boldsymbol{v}_{i})$ at time $t_{i}$ being transported to $(\boldsymbol{x}_{i+1}, \boldsymbol{v}_{i+1})$ at time $t_{i+1}$:
\begin{equation}
    \mathcal{K}^*_{i \rightarrow i+1}[(\boldsymbol{x}_{i}, \boldsymbol{v}_{i}), (\boldsymbol{x}_{i+1}, \boldsymbol{v}_{i+1})] = \frac{\pi^*_{i \rightarrow i+1}[(\boldsymbol{x}_{i}, \boldsymbol{v}_{i}), (\boldsymbol{x}_{i+1}, \boldsymbol{v}_{i+1})]}{\mu_{i}(\boldsymbol{x}_{i}, \boldsymbol{v}_{i})}
\end{equation}

Then,
\begin{equation*}
    \boldsymbol{z} \sim \mu_{0}(\boldsymbol{x}_{0}, \boldsymbol{v}_{0}) \cdot \prod_{i=0}^{K-1} \mathcal{K}^*_{i \rightarrow i+1}[(\boldsymbol{x}_{i}, \boldsymbol{v}_{i}), (\boldsymbol{x}_{i+1}, \boldsymbol{v}_{i+1})]
\end{equation*}

In the following, we demonstrate that the constructed marginal probability path recovers the true distribution at every given time point.

\begin{theorem}
\label{thm:marginal_reconstruct}
When $\sigma_{x} \rightarrow 0, \sigma_{v} \rightarrow 0$, the marginal probability density constructed by \cref{eq:marginal_prob_path} satisfies $\rho_{t_{i}}(\boldsymbol{x}, \boldsymbol{v}) \rightarrow \mu_{i}(\boldsymbol{x}, \boldsymbol{v})$ at the $K+1$ given time points. Therefore, the Marginal Probability Path constructed by mixing Conditional Probability Paths can correctly reconstruct the marginal distributions. See  the proof in \cref{app:Marginal_prob_path}.
\end{theorem}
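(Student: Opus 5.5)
The plan is to evaluate the mixture \cref{eq:marginal_prob_path} at $t=t_i$ and use two facts. First, the conditional mean path passes exactly through the $i$-th component of $\boldsymbol z$. Second, the $i$-th marginal of $q(\boldsymbol z)$ is $\mu_i$.

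\emph{Step 1 (interpolation property).} By construction of the cubic in \cref{proposition: optimal_traj_of_particle}, on $[t_i,t_{i+1}]$ we have $\boldsymbol m_x(t_i)=c_0=\boldsymbol x_i$ and $\boldsymbol m_v(t_i)=c_1=\boldsymbol v_i$; likewise $\boldsymbol m_x(t_{i+1})=\boldsymbol x_{i+1}$ and $\boldsymbol m_v(t_{i+1})=\boldsymbol v_{i+1}$. This holds because these are the imposed boundary conditions. Hence the piecewise definition is consistent at the knots, and
\[
\rho_{t_i}(\boldsymbol x,\boldsymbol v\mid\boldsymbol z)=\mathcal N(\boldsymbol x\mid\boldsymbol x_i,\sigma_x^2I)\,\mathcal N(\boldsymbol v\mid\boldsymbol v_i,\sigma_v^2I),
\]
which depends on $\boldsymbol z$ only through $(\boldsymbol x_i,\boldsymbol v_i)$.

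\emph{Step 2 (marginal of $q$).} I would show by induction on $i$ that the law of $(\boldsymbol x_i,\boldsymbol v_i)$ under $q(\boldsymbol z)=\mu_0\prod_j\mathcal K^*_{j\to j+1}$ is $\mu_i$.
\begin{itemize}
\item The base case $i=0$ is immediate.
\item For the inductive step, integrate out $(\boldsymbol x_{i+2},\dots,\boldsymbol v_K)$; each kernel integrates to $1$ in its second argument, since $\pi^*_{j\to j+1}$ has first marginal $\mu_j$. Then integrate out $(\boldsymbol x_0,\dots,\boldsymbol v_{i-1})$ using the inductive hypothesis. This leaves $\int\mu_i(\boldsymbol s_i)\,\mathcal K^*_{i\to i+1}(\boldsymbol s_i,\boldsymbol s_{i+1})\,d\boldsymbol s_i=\int\pi^*_{i\to i+1}(\boldsymbol s_i,\boldsymbol s_{i+1})\,d\boldsymbol s_i=\mu_{i+1}(\boldsymbol s_{i+1})$, by the second marginal constraint of the SOAT problem \cref{eq:SOAT_problem}.
\end{itemize}
The Markov (chain) structure of $q$ is what makes this telescoping work.

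\emph{Step 3 (limit).} Combining the two steps gives
\[
\rho_{t_i}=\mu_i * \big(\mathcal N(0,\sigma_x^2I)\otimes\mathcal N(0,\sigma_v^2I)\big),
\]
a Gaussian mollification of $\mu_i$. As $\sigma_x,\sigma_v\to0$ this converges to $\mu_i$: weakly in general, and also pointwise at Lebesgue points, or in $L^1$, if $\mu_i$ has a density. The convergence follows from the standard approximate-identity argument.

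The main obstacle is only the precise sense of ``$\rightarrow$''. For empirical (discrete) snapshots $\mu_i$ is a sum of Diracs, so I would state the convergence as weak convergence of measures, tested against bounded continuous $f$ via dominated convergence. I would add the $L^1$/pointwise statement when a density exists. A secondary care point is defining $\mathcal K^*$ as a disintegration of $\pi^*$ rather than as a ratio of densities, so that Step 2 is rigorous for general measures.
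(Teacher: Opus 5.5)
Your proposal is correct and follows essentially the same route as the paper's proof. At $t_i$ the conditional path is a Gaussian centered at $(\boldsymbol x_i,\boldsymbol v_i)$; the future kernels integrate out by normalization, and the past ones telescope through the marginal constraints of $\pi^*_{j\to j+1}$, giving $\rho_{t_i}=\mu_i*g$; sending $\sigma_x,\sigma_v\to 0$ then yields convergence in distribution in general and in $L^1$ when $\mu_i$ has a density. Your remark about defining $\mathcal K^*$ as a disintegration rather than a density ratio is a useful tightening the paper omits, especially for the Dirac-sum snapshots used in practice.
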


Furthermore, we demonstrate that the design yields an exact solution to the DOAT problem.

\begin{proposition}
    As $\sigma_{x} \rightarrow 0$ and $\sigma_{v} \rightarrow 0$, the marginal probability path and the marginal acceleration field will converge to the solution to the DOAT problem. See the proof in \cref{app:exactly_solve}.
\end{proposition}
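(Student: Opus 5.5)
The proof is a lower-bound/achievability argument. I will show that the DOAT cost of any admissible pair $(\boldsymbol a,\rho)$ is at least the summed SOAT values $\sum_{i=0}^{K-1}\int \mathcal C_{i\to i+1}\,\mathrm d\pi^*_{i\to i+1}$. I will then show that the $\sigma\to0$ limit of our construction satisfies the constraints and attains exactly this sum.

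\textbf{Step 1 (lower bound).} Let $(\boldsymbol a,\rho)$ be admissible. Using the standing assumption that an optimal flow map exists, represent $\rho_t$ as the pushforward of $\mu_0$ under the characteristics $\dot{\bm x}=\bm v$, $\dot{\bm v}=\boldsymbol a$. More generally, the superposition principle gives a probability measure $Q$ on $C^1$ curves $\gamma$ with $(\gamma(t),\gamma'(t))\sim\rho_t$. The cost then becomes
\[
\mathbb E_Q\Big[\int_{t_0}^{t_K}\tfrac12\|\gamma''\|^2\,\mathrm dt\Big].
\]
Split the integral over $[t_i,t_{i+1}]$. On each piece, \cref{proposition: optimal_traj_of_particle} and \cref{cost} bound it below by $\mathcal C_{i\to i+1}$ evaluated at the endpoint states. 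The law of $\big((\gamma,\gamma')(t_i),(\gamma,\gamma')(t_{i+1})\big)$ is a coupling of $\mu_i$ and $\mu_{i+1}$, so each term is at least the SOAT optimum. Summing over $i$ gives the lower bound.

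\textbf{Step 2 (limit of the construction).} As $\sigma_x,\sigma_v\to0$, the conditional path collapses to $\delta_{(\bm m_x(t),\bm m_v(t))}$. Hence $\rho_t$ converges weakly to the law of the piecewise-cubic curve $(\bm m_x,\bm m_v)(t;\bm z)$ with $\bm z\sim q$. By \cref{thm:marginal_reconstruct}, the marginals $\mu_i$ are matched. By the Markov chain construction of $q$, the pair $(\bm z_i,\bm z_{i+1})$ has law $\pi^*_{i\to i+1}$. The marginal acceleration field \cref{eq:marginal_acc_field} becomes the conditional expectation of $\bm m_x''(t;\bm z)$ given the current state. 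By the theorem in \cref{app:Marginal}, this field generates $\rho_t$, so the continuity equation holds in the limit.

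\textbf{Step 3 (cost matching).} By Jensen,
\[
\int\|\boldsymbol a\|^2\rho_t\le \mathbb E_q\|\bm m_x''(t;\bm z)\|^2 .
\]
Integrating in time and using \cref{cost} on each interval, the limiting DOAT cost is at most the sum of the SOAT optima. Combined with Step 1, this gives equality, so the limit is optimal.

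\textbf{Main obstacle.} The hardest part is justifying the limit passage rigorously. Concretely, one must show that $\mathcal J_{\mathrm{DOAT}}$ of the $\sigma$-smoothed pair converges, and that the limiting field is well defined where $\rho_t$ is singular. A second issue is whether conditional trajectories cross in augmented space at intermediate times. I would handle the limit by working with the path-measure (Lagrangian) formulation throughout, where the convergence is trivial because the curves do not depend on $\sigma$. I would invoke \cref{thm:non_collision}, applied to $c$-cyclically monotone optimal plans, to argue that conditional trajectories from $\operatorname{supp}\pi^*_{i\to i+1}$ do not cross in the augmented space for $t\in(t_i,t_{i+1})$. The limiting marginal field therefore equals the conditional one $\rho_t$-a.e., which makes the Jensen step an equality.
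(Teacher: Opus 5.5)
Your proposal is correct and follows the paper's proof in \cref{app:exactly_solve}. It uses the same path-space lower bound, built from the cubic single-particle cost on each $[t_i,t_{i+1}]$ and the SOAT couplings, attained by the Markov-glued construction $q$, with \cref{thm:non_collision} supplying single-valuedness of the acceleration field. Your conditional Jensen step is a genuine refinement: the paper simply identifies the DOAT cost of the limiting marginal field with the mixture cost $\mathbb{E}_q\sum_i\mathcal{C}_{i\to i+1}$, tacitly relying on its later single-valuedness argument, whereas your sandwich with Step 1 makes optimality independent of non-crossing and even forces equality in Jensen (so for a.e.\ $t$ the conditional acceleration is $\rho_t$-a.e.\ a function of the state), leaving \cref{thm:non_collision} needed only for a pointwise version of single-valuedness.
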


\subsection{Transform VM-DOAT  to DOAT }

\label{article:VM-DOAT to DOAT}

\paragraph{Relations between DOAT and VM-DOAT} 

The optimal transport cost of the DOAT problem is a functional of the sequence of joint distributions $\mu_{i}(\boldsymbol{x}, \boldsymbol{v})$ for $i = 0, \dots, K$. Formally, we denote this dependency as $\mathcal{J}_{\text{DOAT}}[\{\mu_{i}(\boldsymbol{x}, \boldsymbol{v})\}|_{i=0}^{K}]$. 
In the VM-DOAT setting, we only have access to the marginal spatial distributions $\mu_{i}^{(\text{pos})}(\boldsymbol{x}) = \int \mu_{i}(\boldsymbol{x}, \boldsymbol{v})\mathrm{d}\boldsymbol{v}$. Consequently, the cost of VM-DOAT, denoted as $\mathcal{J}_{\text{VM-DOAT}}[\{\mu_{i}^{(\text{pos})}(\boldsymbol{x})\}|_{i=0}^{K}]$, relates to $\mathcal{J}_{\text{DOAT}}$ through the following minimization:
\begin{equation}
\mathcal{J}_{\text{VM-DOAT}} [\{\mu_{i}^{(\text{pos})}(\boldsymbol{x})\}] =\min_{\omega_{i} (\boldsymbol{v}| \boldsymbol{x})} \mathcal{J}_{\text{DOAT}}[\{\mu_{i}^{(\text{pos})}(\boldsymbol{x}) \cdot \omega_{i} (\boldsymbol{v} | \boldsymbol{x})\}],
\end{equation}
where $\omega_i(\boldsymbol{v} | \boldsymbol{x})$ represents the conditional distribution of velocity, such that the joint distribution factorizes as $\mu_i(\boldsymbol{x}, \boldsymbol{v}) = \mu_i^{(\text{pos})}(\boldsymbol{x}) \cdot \omega_i(\boldsymbol{v} | \boldsymbol{x})$. Thus, our objective is to determine the optimal $\omega_{i}(\boldsymbol{v}|\boldsymbol{x})$ that minimizes the expression above, thereby transforming the VM-DOAT problem into a solvable DOAT instance. 

This transformation rests on two key properties: 
1) The SOAT problem can be solved efficiently using the Sinkhorn method. Once the Transport Plan is computed, we can sample a trajectory $[(\boldsymbol{x}_{0}, \boldsymbol{v}_{0}),\cdots ,(\boldsymbol{x}_{K}, \boldsymbol{v}_{K})]$; 
2) Given the positions $\boldsymbol{x}_{0: K}$ of a trajectory, consider the following multi-marginal optimal control problem:
\begin{equation}
    \label{eq:multi_marginal_oc}
    \min \int_{t_0}^{t_K} \dfrac{1}{2} \|\boldsymbol{\gamma}'' (t) \|^2  \mathrm{d} t , \quad \text{s.t. } \boldsymbol{\gamma}(t_i)  = \boldsymbol{x}_i, \ i =  0 ,\dots, K.
\end{equation}
The velocities of the optimal trajectory, denoted as $V = [\boldsymbol{\gamma}'(t_0),\cdots, \boldsymbol{\gamma}'(t_K) ]$, can be obtained by solving a sparse linear system, as illustrated in the following proposition: 

\begin{proposition}
    \label{prop:cubic_spline}
    The velocities $V$ defined above are the solution to the sparse linear system $VA = B$,
    where $A \in \mathbb{R}^{(K+1)\times(K+1)}$ is a tridiagonal matrix and $B \in \mathbb{R}^{d\times(K+1)}$. A detailed derivation and the explicit expressions for $A$ and $B$ are provided in \cref{app:velocity}.
\end{proposition}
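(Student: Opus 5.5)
We reduce the multi-marginal problem \eqref{eq:multi_marginal_oc} to a family of the single-interval problems already solved in \cref{proposition: optimal_traj_of_particle}, then impose optimality in the unknown interior velocities.

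\textbf{Step 1: decomposition.} Write $h_i = t_{i+1}-t_i$. For any admissible $\gamma$, let $\boldsymbol{v}_i=\gamma'(t_i)$. The cost splits as a sum over $[t_i,t_{i+1}]$. For fixed knot velocities $\boldsymbol{v}_0,\dots,\boldsymbol{v}_K$, each piece is exactly the problem of \cref{proposition: optimal_traj_of_particle}. Hence it is minimized by the Hermite cubic, with value $\mathcal{C}_{i\to i+1}$ given by \cref{cost}. Therefore
\begin{equation*}
\min_\gamma \int_{t_0}^{t_K}\tfrac12\|\gamma''\|^2\,\mathrm{d}t=\min_{\boldsymbol{v}_0,\dots,\boldsymbol{v}_K}\ \sum_{i=0}^{K-1}\mathcal{C}_{i\to i+1}(\boldsymbol{x}_i,\boldsymbol{v}_i,\boldsymbol{x}_{i+1},\boldsymbol{v}_{i+1}).
\end{equation*}
Piecewise cubics with matched values and first derivatives are $C^1$ and piecewise $C^2$. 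We therefore work in the admissible class $H^2$, or note a posteriori that the minimizer is $C^2$.

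\textbf{Step 2: first-order conditions.} The right-hand side is a quadratic function $F(V)$ of $V$. By \cref{cost}, its dependence on $\boldsymbol{v}_i$ appears only through $\mathcal{C}_{i-1\to i}$ and $\mathcal{C}_{i\to i+1}$, which is the source of the tridiagonal structure. Differentiating \eqref{eq:single_particle_cost} in $\boldsymbol{v}_i$ and setting $\partial F/\partial \boldsymbol{v}_i=0$ gives, for interior $1\le i\le K-1$,
\begin{equation*}
\frac{2}{h_{i-1}}\boldsymbol{v}_{i-1}+\Big(\frac{4}{h_{i-1}}+\frac{4}{h_i}\Big)\boldsymbol{v}_i+\frac{2}{h_i}\boldsymbol{v}_{i+1}=\frac{6(\boldsymbol{x}_i-\boldsymbol{x}_{i-1})}{h_{i-1}^2}+\frac{6(\boldsymbol{x}_{i+1}-\boldsymbol{x}_i)}{h_i^2}.
\end{equation*}
The endpoint velocities are free, so the endpoint equations are the natural boundary conditions $2\boldsymbol{v}_0/h_0\cdot 2+\dots$. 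Concretely, $4\boldsymbol{v}_0/h_0+2\boldsymbol{v}_1/h_0=6(\boldsymbol{x}_1-\boldsymbol{x}_0)/h_0^2$, and symmetrically at $K$. Stacking the velocities as columns of $V\in\mathbb{R}^{d\times(K+1)}$ yields $VA=B$. Here $A$ is symmetric tridiagonal with the coefficients above, and the columns of $B$ are the right-hand sides.

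\textbf{Step 3: existence and uniqueness.} $A$ is strictly diagonally dominant: the diagonal entry $4/h_{i-1}+4/h_i$ exceeds the off-diagonal sum $2/h_{i-1}+2/h_i$, and similarly in the boundary rows. Hence $A$ is invertible, and positive definite since it is symmetric with positive diagonal. So $F$ is strictly convex, and its unique critical point $V=BA^{-1}$ is the global minimizer. Combined with Step 1, the minimizer $\gamma$ is the natural cubic spline, and its knot velocities are exactly $V$.

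The main obstacle is bookkeeping rather than conceptual. Differentiating \eqref{eq:single_particle_cost} with the correct signs, since $\boldsymbol{x}_0-\boldsymbol{x}_T$ appears there, and getting the boundary rows right are where errors are likely. As a cross-check, compare the result with the classical spline condition of $C^2$ continuity at the knots, $\gamma''(t_i^-)=\gamma''(t_i^+)$, with $\gamma''=0$ at the ends.
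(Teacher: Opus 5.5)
Your proposal is correct and takes the same route as the paper's proof in \cref{app:velocity}: reduce to minimizing the sum of the pairwise costs from \cref{cost} over the knot velocities, set the gradients to zero to obtain a tridiagonal system $VA=B$, and use strict diagonal dominance of $A$ for uniqueness. Your $A$ and $B$ are exactly twice the paper's, which leaves $V$ unchanged. Your positive-definiteness argument and the $C^2$ cross-check also make explicit that the unique critical point is the global minimizer, which the paper only indicates by calling the quadratic form semi-positive.
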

It is worth noting that solving this linear system requires $\mathcal{O}(K)$ time, ensuring high computational efficiency.

\paragraph{Iterative Scheme to Perform the Transformation} 
Based on this decomposition, we employ an iterative scheme to estimate the conditional velocity distributions across $t_{0}, \dots, t_{K}$. We initialize $\omega_{i}^{(0)} (\boldsymbol{v}|\boldsymbol{x}) = \delta(\boldsymbol{v} - \boldsymbol{0})$ and subsequently alternate between the following two steps:

\begin{itemize}
    \item \emph{Solve SOAT Plans}: Compute the $K$ optimal SOAT plans between the distributions $\mu_{i}^{(\text{pos})}(\boldsymbol{x}) \cdot \omega^{(m)}_{i}(\boldsymbol{v}|\boldsymbol{x})$ for adjacent time points. This yields the plans $\pi^{*(m)}_{i \rightarrow i+1}[(\boldsymbol{x}_{i}, \boldsymbol{v}_{i}), (\boldsymbol{x}_{i+1}, \boldsymbol{v}_{i+1})]$ for $i =0 ,\dots, K-1$, where $m$ denotes the iteration index. This step effectively provides the multi-marginal optimal transport plan given the current $\omega^{(m)}_{i}(\boldsymbol{v}|\boldsymbol{x})$.
    
    \item \emph{Update Velocity Distributions}: Update the conditional velocity distributions based on the computed SOAT plans:
    \begin{equation}
    \label{eq:velocity_iter_2}
    \omega^{(m+1)}_{i}(\boldsymbol{v} | \boldsymbol{x}_i) = \frac{\int q^{(m)}(\boldsymbol{z}) \delta(\boldsymbol{\hat v}_{i}[\boldsymbol{x}_0\cdots \boldsymbol{x}_K] - \boldsymbol{v}) \, \mathrm{d}\boldsymbol{z}_{\setminus i}}{\int q^{(m)}(\boldsymbol{z}) \, \mathrm{d}\boldsymbol{z}_{\setminus i}},
    \end{equation}
    where $\mathrm{d}\boldsymbol{z}_{\setminus i}$ denotes the volume element $\mathrm{d} \boldsymbol{x}_0 \mathrm{d} \boldsymbol{v}_0 \dots \mathrm{d} \boldsymbol{x}_K \mathrm{d} \boldsymbol{v}_K$ excluding $\mathrm{d} \boldsymbol{x}_i$ (i.e., integrating over all variables except $\boldsymbol{x}_i$). The term $\boldsymbol{\hat v}_{i}[\boldsymbol{x}_{0}, \dots, \boldsymbol{x}_{K}]$ denotes the optimal velocity at time $t_i$ derived from the sequence $\boldsymbol{x}_{0}, \dots, \boldsymbol{x}_{K}$ via \cref{prop:cubic_spline}. Intuitively, $\omega^{(m+1)}_{i}(\boldsymbol{v}|\boldsymbol{x}_i)$ is set to the distribution induced by the velocities of all optimal control trajectories passing through $\boldsymbol{x}_i$ at time $t_i$. Since the total transport cost is the sum of costs along individual trajectories, aligning $\boldsymbol{v}$ with the optimal velocity of each trajectory minimizes the global objective.
\end{itemize}

In the update step above, the trajectory variable $\boldsymbol{z} = [(\boldsymbol{x}_{0}, \boldsymbol{v}_{0}), \dots, (\boldsymbol{x}_{K}, \boldsymbol{v}_{K})]$ follows the joint distribution $q^{(m)}(\boldsymbol{z})$ defined as:
\begin{align}
q^{(m)}(\boldsymbol{z}) =  \ \mu_{0}^{(\text{pos})}(\boldsymbol{x}_{0}) \cdot \omega_{0}^{(m)}(\boldsymbol{v}_0|\boldsymbol{x}_0) \nonumber 
 \times \prod_{i=0}^{K-1} \mathcal{K}^{*(m)}_{i \rightarrow i+1}[(\boldsymbol{x}_{i}, \boldsymbol{v}_{i}), (\boldsymbol{x}_{i+1}, \boldsymbol{v}_{i+1})],
\end{align}
where the propagator $\mathcal{K}_{i \rightarrow i+1}^{*(m)}$ is given by:
\begin{align}
    \mathcal{K}^{*(m)}_{i \rightarrow i+1}[(\boldsymbol{x}_{i}, \boldsymbol{v}_{i}), (\boldsymbol{x}_{i+1}, \boldsymbol{v}_{i+1})]  \nonumber= \frac{\pi^{*(m)}_{i \rightarrow i+1}[(\boldsymbol{x}_{i}, \boldsymbol{v}_{i}), (\boldsymbol{x}_{i+1}, \boldsymbol{v}_{i+1})]}{\mu_{i}^{(\text{pos})}(\boldsymbol{x}_{i}) \cdot \omega_{i}^{(m)} (\boldsymbol{v}_{i}|\boldsymbol{x}_{i})}.
\end{align}

During the iteration process, the objective $\mathcal{L}_{\text{DOAT}}[\mu_{i}^{(\text{pos})}(\boldsymbol{x}) \cdot \omega_{i}^{(m)} (\boldsymbol{v} | \boldsymbol{x})]$ decreases monotonically. Since the cost is bounded below by $0$, the iterative algorithm is guaranteed to converge.

\section{TracingFlow Algorithm for Trajectory Inference}

In the practical implementation of TracingFlow, we first perform an approximation of the conditional velocity distribution to reduce the VM-DOAT problem to a DOAT problem. Subsequently, we employ flow matching to solve for the acceleration field of the DOAT problem. Notably, for lineage tracing data, we incorporate biological priors into this framework.

\subsection{Approximation of Conditional Velocity Distribution}

\label{article: approximation of conditional velocity distribution}

Determining the full conditional velocity distribution $\omega_{i}(\boldsymbol{v}| \boldsymbol{x})$ to transform VM-DOAT to DOAT (\cref{article:VM-DOAT to DOAT}) is computationally prohibitive, potentially requiring an auxiliary generative model. We therefore adopt a deterministic approximation, assigning a single velocity to each data point. Consequently, \cref{eq:velocity_iter_2} is modified to:
\begin{align}
\omega^{(m+1)}_{i}(\boldsymbol{v} | \boldsymbol{x}_{i}) &= \delta(\boldsymbol{v} - \boldsymbol{v}^{(m+1)}_{\text{exp}}(\boldsymbol{x}_{i})), \ \ 
\boldsymbol{v}_{\text{exp}}^{(m+1)}(\boldsymbol{x}_{i})= \frac{\int q^{(m)}(\boldsymbol{z}) \boldsymbol{\hat v}_{i}[\boldsymbol{x}_0\cdots \boldsymbol{x}_K]  \, \mathrm{d}\boldsymbol{z}_{\setminus i}}{\int q^{(m)}(\boldsymbol{z}) \, \mathrm{d}\boldsymbol{z}_{\setminus i}}.
\end{align}
Intuitively, this assigns $\boldsymbol{v}^{(m+1)}_{\text{exp}}(\boldsymbol{x}_{i})$ as the mean velocity of all optimal control trajectories passing through $\boldsymbol{x}_{i}$ at time $t_{i}$.

\subsection{MiniBatch-OT}
\label{article:minibatch-OT}

Solving SOAT repeatedly is computationally intensive for large-scale datasets. However, since the conditional velocity distribution $\omega_{i}(\boldsymbol{v}|\boldsymbol{x})$ is approximated as a Dirac delta, the joint empirical distribution $\mu_{i}(\boldsymbol{x}, \boldsymbol{v})$ effectively becomes a superposition of Dirac deltas. This allows us to employ a minibatch strategy: we partition $\mu_{i}$ and $\mu_{i+1}$ into $B$ corresponding minibatches $\{\mu_{i}^{(n)}\}_{n=1}^{B}$ and $\{\mu_{i+1}^{(n)}\}_{n=1}^{B}$. By solving the local plans $\pi_{i \rightarrow i+1}^{*(n)}$ independently and aggregating them as $\pi^*_{i \rightarrow i+1} = \oplus_{n=1}^{B }\pi_{i \rightarrow i+1}^{*(n)}$, we significantly enhance computational efficiency.

\subsection{Incorporation of Biological Priors}
\label{article:lineage}

In lineage tracing applications, let indices $l_i$ and $l_{i+1}$ denote individual samples at time points $t_i$ and $t_{i+1}$, respectively. Each data point $(\boldsymbol{x}_{i,l_i}, \boldsymbol{v}_{i,l_i})$ is associated with a barcode $b_{i,l_i} \in \mathbb{N}^{+}$. We incorporate this prior by modifying the cost matrix $\mathcal{C}_{i \rightarrow i+1}$. Specifically, when computing the cost between the $l$-th sample at $t_i$ and the $k$-th sample at $t_{i+1}$, we compare their barcodes: if $b_{i, l_i} \neq b_{i+1, l_{i+1}}$, the standard cost is multiplied by a penalty factor $p_0(p_0 >1)$. This soft constraint discourages transitions between distinct lineages, effectively embedding biological knowledge into the dynamics learning process.

\subsection{Algorithm of TracingFlow}

The TracingFlow algorithm comprises three main steps: First, we apply the iterative approximation from \cref{article: approximation of conditional velocity distribution} to determine initial velocities, effectively transforming the VM-DOAT problem into a DOAT problem. Second, we compute the Optimal Transport Plan for SOAT as defined in \cref{eq:SOAT_problem}. Third, we sample conditional probability paths to train the acceleration network $\boldsymbol{a}_{\boldsymbol{\theta}}(\boldsymbol{x}, \boldsymbol{v},t)$ via regression. Additionally, to provide initial conditions for inference, we parameterize the unique initial velocities derived in the first step using a neural network $\boldsymbol{v}_{0, \boldsymbol{\xi}}(\boldsymbol{x})$, trained similarly via regression. The algorithm's pseudocode is provided in \cref{app:pseudocode_for_algo}.

Theoretically, TracingFlow recovers the exact distribution if the acceleration and initial velocity are perfectly fitted. In practice, where losses are non-zero, we prove that the positional \(\mathcal{W}_2\) distance between the generated and true distributions is bounded by the sum of the velocity and AFM losses, and the relevant theorem is detailed in \cref{app:w2_bound}.

\section{Experiment Results}

To evaluate the effectiveness of our algorithm, TracingFlow, we conducted three categories of experiments: 1) verifying whether the acceleration field of TracingFlow can transport the initial data distribution $\mu_{0}^{(\text{pos})}(\boldsymbol{x}) = \int \rho_{t_{0}} (\boldsymbol{x}, \boldsymbol{v} ) \mathrm{d} \boldsymbol{v}$ to the data distribution $\mu_{i}^{(\text{pos})}(\boldsymbol{x})$ at other given times $t_{i}$; 2) verifying whether TracingFlow can effectively perform distribution interpolation and extrapolation; and 3) verifying whether TracingFlow can more effectively infer the dynamical laws in Lineage Tracing data given biological prior knowledge.

\paragraph{Distribution Transport}
We evaluated TracingFlow on a 2-dimensional synthetic dataset and real single-cell omics datasets of varying dimensions. The evaluation metrics were the 1-Wasserstein ($\mathcal{W}_{1}$) and 2-Wasserstein ($\mathcal{W}_{2}$) distances, measuring the discrepancy between the fitted and ground truth distributions. The second-order dynamics model of TracingFlow provides stronger expressive power than first-order algorithms, yielding lower average $\mathcal{W}_{1}$ and $\mathcal{W}_{2}$ values across all datasets (\cref{tab:exp_dist_matching}). We visualized the evolutionary trajectories learned by OT-CFM and TracingFlow on the synthetic dataset in \cref{fig:exp_dist_matching}. TracingFlow's second-order model allows it to learn trajectories that cross in the position space $\boldsymbol{x}_{1}, \boldsymbol{x}_{2}$, whereas OT-CFM is unable to do so.  In \cref{app:exp_on_datasets}, we present the distribution reconstruction accuracy at each time point, as well as experimental results on additional datasets (such as the Mexico Gulf dataset).

\begin{table}[t]
    \centering
    \caption{Average $\mathcal{W}_{1}$ and $\mathcal{W}_{2}$ distances between the generated and ground truth distributions at various time points on the 2D Simulation , Cite 5D, and Cite 100D datasets for different algorithms.}
    \label{tab:exp_dist_matching}
   
    \resizebox{0.90\linewidth}{!}{ 
    \begin{tabular}{llcccccc}
        \toprule
        \multirow{2}{*}{\textbf{Method}} & \multirow{2}{*}{\textbf{Dynamics}} & \multicolumn{2}{c}{\textbf{2D Simulation}} & \multicolumn{2}{c}{\textbf{Cite 5D}} & \multicolumn{2}{c}{\textbf{Cite 100D}} \\
        \cmidrule(lr){3-4} \cmidrule(lr){5-6} \cmidrule(lr){7-8}
         & & $\mathcal{W}_{1}$ & $\mathcal{W}_{2}$ & $\mathcal{W}_{1}$ & $\mathcal{W}_{2}$ & $\mathcal{W}_{1}$ & $\mathcal{W}_{2}$ \\
        \midrule
        OT-CFM & \multirow{2}{*}{1st Order} & 0.5270 {\scriptsize $\pm$ 0.0612} & 0.5922 {\scriptsize $\pm$ 0.0813} & 0.8298 {\scriptsize $\pm$ 0.0162} & 0.9155 {\scriptsize $\pm$ 0.0160} & 10.5498 {\scriptsize $\pm$ 0.0291} & 10.6220 {\scriptsize $\pm$ 0.0310} \\
        SF$^2$M & & 0.9926 {\scriptsize $\pm$ 0.1857} & 1.1318 {\scriptsize $\pm$ 0.1734} & 0.7650 {\scriptsize $\pm$ 0.0375} & 0.8956 {\scriptsize $\pm$ 0.0408} & 12.3426 {\scriptsize $\pm$ 0.0735} & 12.4757 {\scriptsize $\pm$ 0.0783} \\
        \midrule
        3MSBM & \multirow{5}{*}{2nd Order} & 1.1855 {\scriptsize $\pm$ 0.2636} & 1.2767 {\scriptsize $\pm$ 0.2215} & 3.6369 {\scriptsize $\pm$ 0.3508} & 3.7703 {\scriptsize $\pm$ 0.3178} & 15.7708 {\scriptsize $\pm$ 0.1473} & 15.8833 {\scriptsize $\pm$ 0.1832} \\
        MMFM & & 2.3689 {\scriptsize $\pm$ 0.1830} & 2.6466 {\scriptsize $\pm$ 0.1769} & 2.5346 {\scriptsize $\pm$ 0.1520} & 2.7576 {\scriptsize $\pm$ 0.1877} & 12.5353 {\scriptsize $\pm$ 0.6268} & 12.7868 {\scriptsize $\pm$ 0.7061} \\
        HRF & & 1.5390 {\scriptsize $\pm$ 0.1130} & 1.6631 {\scriptsize $\pm$ 0.1214} & 1.4016 {\scriptsize $\pm$ 0.0437} & 1.5313 {\scriptsize $\pm$ 0.0430} & 10.1532 {\scriptsize $\pm$ 0.0643} & 10.2336 {\scriptsize $\pm$ 0.0707} \\
        CAF & & 1.6385 {\scriptsize $\pm$ 0.2777} & 1.6431 {\scriptsize $\pm$ 0.2766} & 4.5589 {\scriptsize $\pm$ 0.1073} & 4.7411 {\scriptsize $\pm$ 0.1620} & 16.7429 {\scriptsize $\pm$ 0.4371} & 16.9837 {\scriptsize $\pm$ 0.4324} \\
        \textbf{TF (Ours)} & & \textbf{0.3748} {\scriptsize $\pm$ 0.0505} & \textbf{0.5180} {\scriptsize $\pm$ 0.0360} & \textbf{0.5861} {\scriptsize $\pm$ 0.0688} & \textbf{0.6386} {\scriptsize $\pm$ 0.0775} & \textbf{8.0700} {\scriptsize $\pm$ 0.1537} & \textbf{8.2530} {\scriptsize $\pm$ 0.1889} \\
        \bottomrule
    \end{tabular}
    }
\end{table}

\begin{figure}[hbt!]
    \centering
    \includegraphics[width=0.48\linewidth]{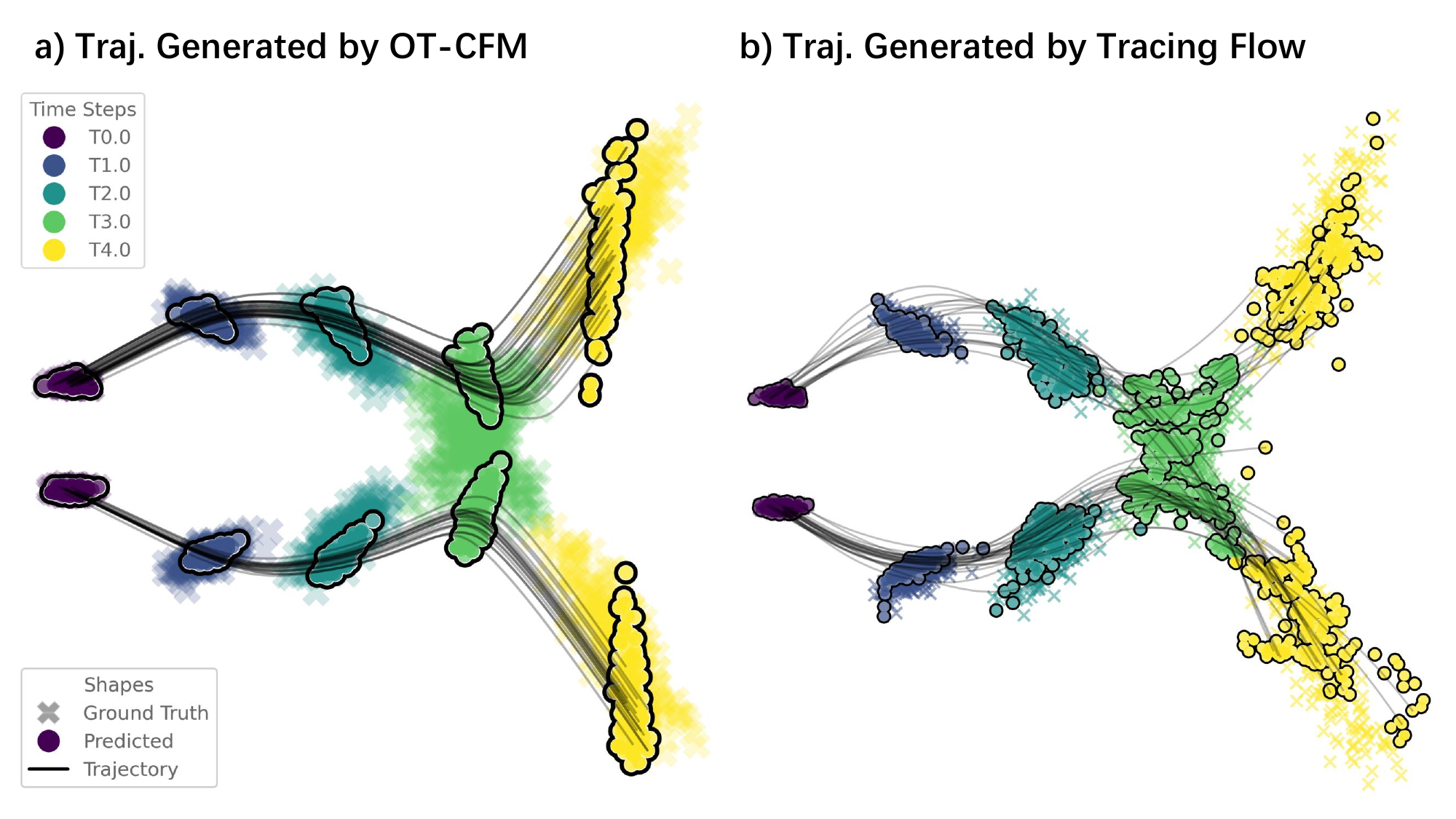}
    \caption{On the Simulation 2D dataset: a) non-crossing paths learned by OT-CFM, and b) crossing paths learned by TracingFlow.}
    \label{fig:exp_dist_matching}
\end{figure}

\paragraph{Interpolation and Extrapolation}
To verify TracingFlow's interpolation and extrapolation capabilities, we conducted a Hold-One-Out experiment on the 5-dimensional EB dataset (time points $t=0, \dots, 4$). In each iteration, we withheld one time point from the latter four for training and calculated the $\mathcal{W}_{1}$ and $\mathcal{W}_{2}$ distances between the interpolated and true distributions. As shown in \cref{tab:exp_hold_one_out}, TracingFlow achieved the highest accuracy, demonstrating that models based on second-order dynamics are superior in capturing high-curvature and non-linear trajectories.

\paragraph{Lineage Tracing Data}
To verify whether TracingFlow better handles lineage tracing data given biological priors, we experimented on a 3D Simulation Lineage Dataset and the Hematopoiesis Dataset. Following \cref{article:lineage}, we introduced biological priors into TracingFlow by modifying the transport cost matrix, other baselines also incorporates priors via conditional velocity fields. We used lineage-weighted $\mathcal{W}_{1}$ and $\mathcal{W}_{2}$ distances to evaluate consistency with biological priors while learning dynamics. Results in \cref{tab:exp_lineage_tracing} show that incorporating priors enables TracingFlow to outperform other algorithms. In \cref{fig:exp_lineage_tracing_1}, we visualized dynamics on the simulation dataset to demonstrate how TracingFlow learns trajectories conforming to priors. In \cref{fig:exp_lineage_tracing_2}, we plotted the PCA-reduced positions of two barcodes from the Hematopoiesis Dataset at $t=1$; the positions inferred by TracingFlow are significantly closer to the ground truth than those by OT-CFM.

\begin{table*}[t] 
    \centering
    
    \begin{minipage}[t]{0.37\textwidth}
        \centering
        \caption{Average $\mathcal{W}_{1}$ and $\mathcal{W}_{2}$ distances between the predicted and ground truth distributions at held-out time points on the EB 5D dataset for different algorithms.}
        \label{tab:exp_hold_one_out}
        \resizebox{\linewidth}{!}{ 
        \begin{tabular}{llcc}
            \toprule
            \multirow{2}{*}{\textbf{Method}} & \multirow{2}{*}{\textbf{Dynamics}} & \multicolumn{2}{c}{\textbf{EB 5D}} \\
            \cmidrule(lr){3-4}
             & & $\mathcal{W}_{1}$ & $\mathcal{W}_{2}$ \\
            \midrule
            OT-CFM & \multirow{2}{*}{1st Order} & 5.1205 {\scriptsize $\pm$ 0.2400} & 5.6910 {\scriptsize $\pm$ 0.2561} \\
            SF$^2$M & & 4.5602 {\scriptsize $\pm$ 0.1041} & 4.9905 {\scriptsize $\pm$ 0.1262} \\
            \midrule
            3MSBM & \multirow{3}{*}{2nd Order} & 7.1901 {\scriptsize $\pm$ 0.8667} & 7.4697 {\scriptsize $\pm$ 0.8588} \\
            MMFM & & 9.7777 {\scriptsize $\pm$ 0.9006} & 10.5980 {\scriptsize $\pm$ 0.9407} \\
            \textbf{TF (Ours)} & & \textbf{4.4974} {\scriptsize $\pm$ 0.0950} & \textbf{4.7859} {\scriptsize $\pm$ 0.1857} \\
            \bottomrule
        \end{tabular}
        }
    \end{minipage}%
    \hfill 
    \begin{minipage}[t]{0.60\textwidth}
        \centering
        \caption{Average lineage-weighted $\mathcal{W}_{1}$ and $\mathcal{W}_{2}$ distances between the generated and ground truth distributions at various time points on the 3D Simulation Lineage and Hematopoiesis datasets for different algorithms.}
        \label{tab:exp_lineage_tracing}
        \resizebox{\linewidth}{!}{ 
        \begin{tabular}{llcccc}
            \toprule
            \multirow{2}{*}{\textbf{Method}} & \multirow{2}{*}{\textbf{Dynamics}} & \multicolumn{2}{c}{\textbf{3D Sim-Lineage}} & \multicolumn{2}{c}{\textbf{Hematopoiesis}} \\
            \cmidrule(lr){3-4} \cmidrule(lr){5-6}
             & & $\mathcal{W}_{1}$ & $\mathcal{W}_{2}$ & $\mathcal{W}_{1}$ & $\mathcal{W}_{2}$ \\
            \midrule
            OT-CFM & \multirow{2}{*}{1st Order} & 2.2383 {\scriptsize $\pm$ 0.0205} & 2.3162 {\scriptsize $\pm$ 0.0216} & 14.9945 {\scriptsize $\pm$ 0.3442} & 15.3971 {\scriptsize $\pm$ 0.3180} \\
            SF$^2$M & & 1.5255 {\scriptsize $\pm$ 0.0513} & 1.5435 {\scriptsize $\pm$ 0.0515} & 18.7170 {\scriptsize $\pm$ 0.1124} & 18.9756 {\scriptsize $\pm$ 0.1168} \\
            \midrule
            3MSBM & \multirow{6}{*}{2nd Order} & 2.3126 {\scriptsize $\pm$ 0.4756} & 2.4905 {\scriptsize $\pm$ 0.4582} & 18.3313 {\scriptsize $\pm$ 0.9834} & 18.6427 {\scriptsize $\pm$ 0.9413} \\
            MMFM & & 2.6098 {\scriptsize $\pm$ 0.3149} & 3.0309 {\scriptsize $\pm$ 0.3265} & 27.0830 {\scriptsize $\pm$ 2.9299} & 29.5421 {\scriptsize $\pm$ 3.1420} \\
            HRF & & 1.7934 {\scriptsize $\pm$ 0.4043} & 1.8085 {\scriptsize $\pm$ 0.4022} & 15.4571 {\scriptsize $\pm$ 0.1266} & 15.7138 {\scriptsize $\pm$ 0.1282} \\
            CAF & & 2.6806 {\scriptsize $\pm$ 0.2777} & 2.7547 {\scriptsize $\pm$ 0.3521} & 15.2165 {\scriptsize $\pm$ 0.0685} & 15.4974 {\scriptsize $\pm$ 0.0576} \\
            TF w/o Bio-Prior & & 1.5292 {\scriptsize $\pm$ 0.1714}& 1.5532 {\scriptsize $\pm$ 0.2303} & 16.6166 {\scriptsize $\pm$ 0.1229}& 17.0110 {\scriptsize $\pm$ 0.1456}\\
            \textbf{TF (Ours)} & & \textbf{0.4538} {\scriptsize $\pm$ 0.2039} & \textbf{0.5328} {\scriptsize $\pm$ 0.2066} & \textbf{14.0470} {\scriptsize $\pm$ 0.1530} & \textbf{14.4589} {\scriptsize $\pm$ 0.1815} \\
            \bottomrule
        \end{tabular}
        }
    \end{minipage}
    
\end{table*}

\begin{figure}[hbt!]
    \centering
    \begin{minipage}[t]{0.48\linewidth}
        \centering
        \includegraphics[width=\linewidth, trim={0cm 0cm 0cm 2.1cm}, clip]{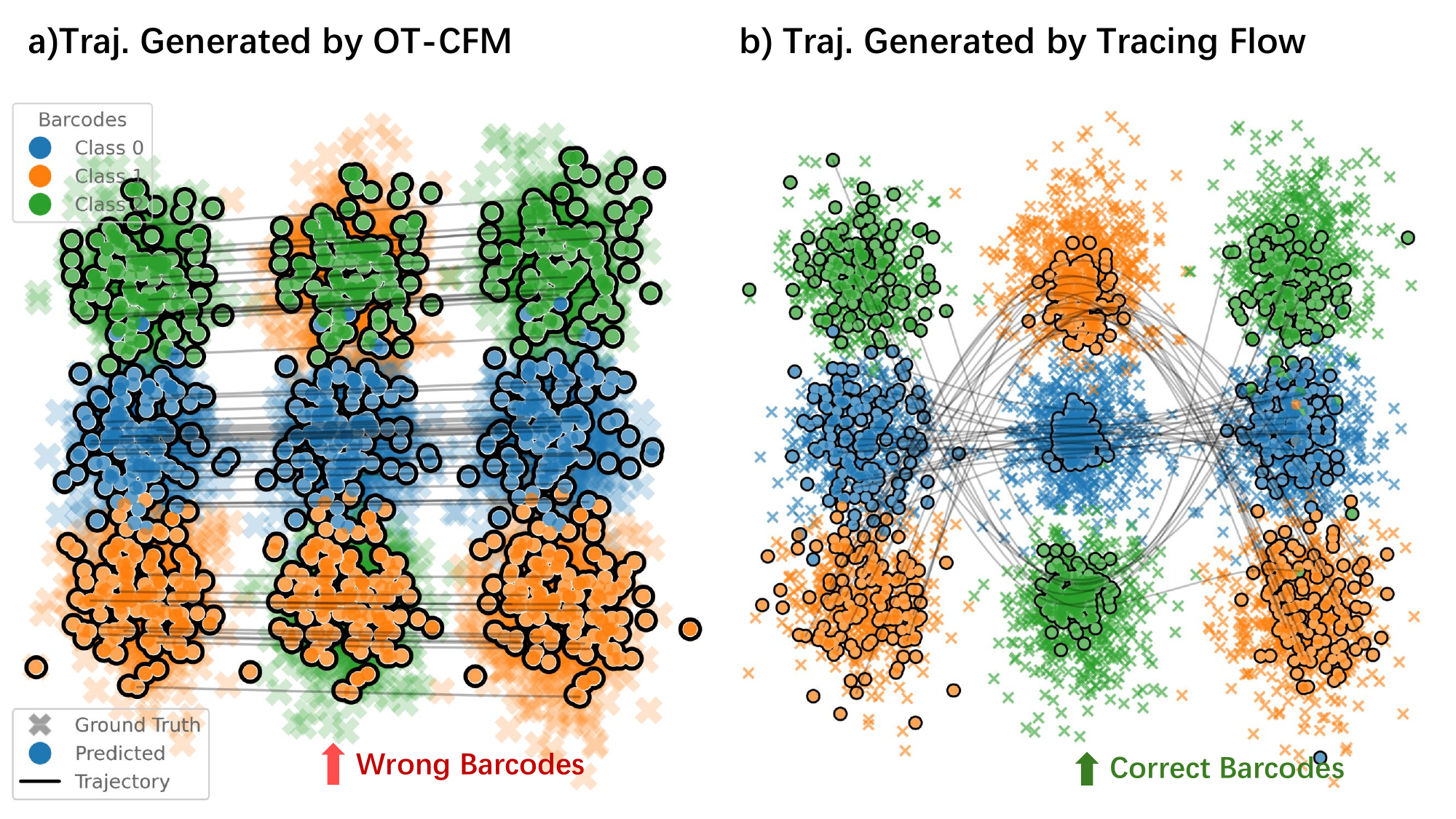}
        \caption{On the Sim-Lineage dataset: a) paths learned by OT-CFM without considering biological priors, and b) paths learned by TracingFlow that preserve biological priors.}
        \label{fig:exp_lineage_tracing_1}
    \end{minipage}
    \hfill
    \begin{minipage}[t]{0.48\linewidth}
        \centering
        \includegraphics[width=\linewidth]{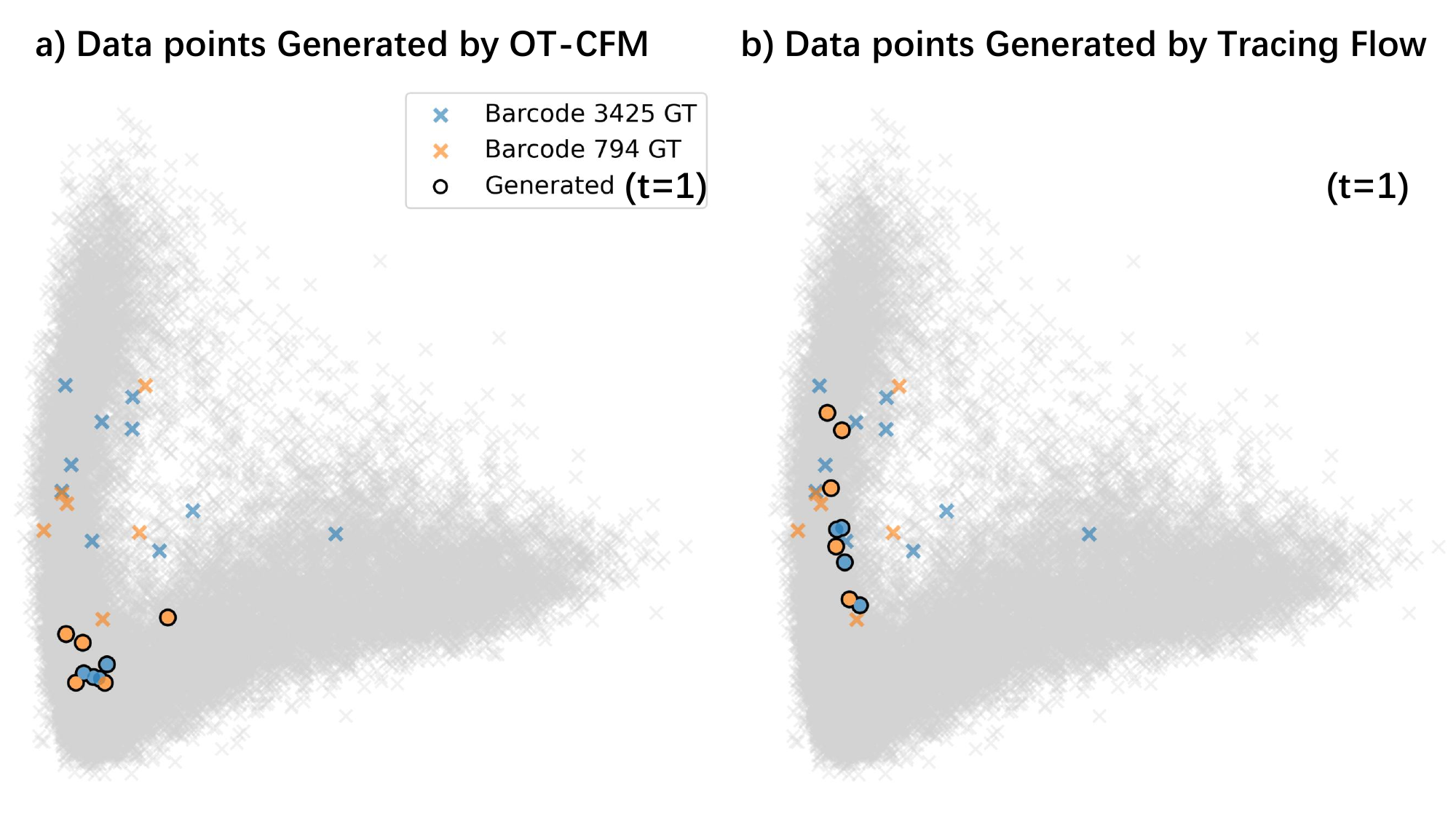}
        
        \caption{Comparison of generated (circles) and ground truth ('x') positions at $t=1$ for two barcodes on the hematopoiesis dataset \citep{weinreb2020lineage}. (a) OT-CFM. (b) TracingFlow. Visualized via 2D PCA.}
        \label{fig:exp_lineage_tracing_2}
    \end{minipage}
\end{figure}

\section{Conclusion and Limitation}

In this work, we introduce TracingFlow, a Flow Matching framework for the Dynamical Acceleration Optimal Transport (DOAT) problem. By pre-solving SOAT to learn acceleration and initial velocity, it enables efficient, simulation-free solutions for large-scale DOAT problem. Experiments on simulated and real world data demonstrate that second-order dynamics enhance expressivity, yielding precise temporal distribution recovery. Additionally, incorporating biological priors better preserves intercellular lineages.

A limitation is the computationally intensive SOAT pre-calculation, though this is mitigable via minibatch-OT. Furthermore, explicitly modeling the theoretical conditional velocity distribution per data point is prohibitive, so we approximate it using its expectation. The objective \(\int_{0}^{T} \frac{1}{2} \|\boldsymbol{a}\|^{2} \mathrm{d}t\) also lacks a clear physical interpretation, as standard classical actions exclude second time derivatives (see \cref{app:discussion_phase_space}). Despite current sequencing measurements lacking velocity data, priors suggest biological systems may follow higher-order dynamics due to the existence of complex regulations. Future work will extend TracingFlow to further model these dynamics.

\bibliographystyle{plainnat}
\bibliography{tracingflow_main}

\appendix

\newpage 

\section*{Contents of Appendix}

\appsection{A \quad Proofs of Main Theorems}{app:proof_of_main_thms}
\appsubsection{A.1 \quad Proof of Proposition 4.1 : Optimal Trajectory of Single Particle}{app:proof_optimal_traj_of_particle}

\appsubsection{A.3 \quad Proof of Remark 4.4 : No-Cross Property}{app:Non_collision}
\appsubsection{A.4 \quad Proof of Theorem 4.5 : Marginal Acceleration Field}{app:Marginal}
\appsubsection{A.5 \quad Proof of Theorem 4.6 : The Relation Between AFM Loss and CAFM Loss}{app:AFM_and_CAFM}
\appsubsection{A.6 \quad Proof of Theorem 4.7 : Marginal Probability Path}{app:Marginal_prob_path}
\appsubsection{A.7 \quad Proof of Proposition 4.8 : TracingFlow Exactly Solves DOAT Problem}{app:exactly_solve}
\appsubsection{A.8 \quad Proof of Proposition 4.9 : Solve Linear System to Get Optimal Velocity}{app:velocity}
\appsubsection{A.9 \quad Proof of Theorem A.1 : $\mathcal{W}_2$ Distance can be bounded by $\mathcal{L}_{\text{AFM}}$ and $\mathcal{L}_{v_0}$}{app:w2_bound}

\appsection{B \quad Datasets and Evaluation Metric}{app:datasets_and_eval_metric}
\appsubsection{B.1 \quad Experiment Setup}{app:experimental setup}
\appsubsection{B.2 \quad Evaluation Metrics}{app:evaluation metrics}
\appsubsection{B.3 \quad Datasets}{app:datasets}

\appsection{C \quad Experiment Details}{app:experiments}
\appsubsection{C.1 \quad Experiment Details across Various Datasets}{app:exp_on_datasets}
\appsubsection{C.2 \quad Training Time and Scalability of Tracing Flow}{app:scalability}
\appsubsection{C.3 \quad Sensitivity Analysis on Minibatch-OT}{app:minibatch}

\appsection{D \quad Pseudocode for Algorithm}{app:pseudocode_for_algo}

\appsection{E \quad Discussion}{app:discussion}
\appsubsection{E.1 \quad Relation with other Algorithms}{app:other_algo}
\appsubsection{E.2 \quad Is $\mathcal{S} = \mathcal{X} \times \mathcal{V}$ a phase space?}{app:discussion_phase_space}

\section{Proofs of Main Theorems} 

\label{app:proof_of_main_thms}

\subsection{Proof of Proposition 4.1}
\label{app:proof_optimal_traj_of_particle}
\begin{proof}
    First, we show that the unique minimizer is a cubic polynomial. Consider the energy functional defined on the interval $[0, T]$:
    \begin{equation}
        \mathcal{J}[\gamma] = \frac12 \int_0^T \|\gamma''(t)\|_2^2 \, \mathrm{d}t.
    \end{equation}
    The integrand $L(\gamma, \gamma', \gamma'') = \frac12 \|\gamma''\|_2^2$ depends on derivatives up to the second order. The necessary condition for optimality is given by the Euler--Lagrange equation:
    \begin{equation}
        \frac{\partial L}{\partial \gamma}
        - \frac{\mathrm{d}}{\mathrm{d} t}\left(\frac{\partial L}{\partial \gamma'}\right)
        + \frac{\mathrm{d}^2}{\mathrm{d} t^2}\left(\frac{\partial L}{\partial \gamma''}\right) = 0.
    \end{equation}
    Since $\partial L/\partial \gamma = 0$, $\partial L/\partial \gamma' = 0$, and $\partial L/\partial \gamma'' = \gamma''$, the equation simplifies to:
    \begin{equation}
        \frac{\mathrm{d}^2}{\mathrm{d} t^2}\gamma''(t) = \gamma^{(4)}(t) = 0.
    \end{equation}
    This implies that the optimal trajectory $\gamma(t)$ is a cubic polynomial:
    \begin{equation}
        \gamma(t) = c_3 t^3 + c_2 t^2 + c_1 t + c_0.
    \end{equation}
    Imposing the boundary conditions $\gamma(0)=\bm{x}_0, \gamma'(0)=\bm{v}_0$ and $\gamma(T)=\bm{x}_T, \gamma'(T)=\bm{v}_T$ leads to the linear system:
    \begin{equation}
        \begin{cases}
            c_0 = \bm{x}_0, \\
            c_1 = \bm{v}_0, \\
            c_3 T^3 + c_2 T^2 + c_1 T + c_0 = \bm{x}_T, \\
            3 c_3 T^2 + 2 c_2 T + c_1 = \bm{v}_T.
        \end{cases}
    \end{equation}
    Solving for the coefficients yields the unique solution:
    \begin{equation}
        \begin{cases}
            c_0 = \bm{x}_0, \\
            c_1 = \bm{v}_0, \\
            c_2 = \dfrac{3(\bm{x}_T-\bm{x}_0) - (2\bm{v}_0+\bm{v}_T)T}{T^2}, \\
            c_3 = \dfrac{(\bm{v}_0+\bm{v}_T)T - 2(\bm{x}_T-\bm{x}_0)}{T^3}.
        \end{cases}
    \end{equation}
\end{proof}

\subsection{Proof of Remark 4.3}\label{app:Non_collision}
\begin{proof}
First we propose a lemma:

\noindent\textit{Lemma.}
    Let $\mathcal{C}_{[0, t]}(\bm{x}_0, \bm{v}_0, \bm{x}_t, \bm{v}_t)$ be the least acceleration cost through $[0, t]$ ,$(\bm{x_0},\bm{v}_0), (\bm{x}_T, \bm{v}_T)\in \mathcal{X}\times\mathcal{V}$ are two different points. Then for arbitrary midpoint $\tau\in (0, 1)$ and state $(\bm{x}, \bm{v})\in \mathcal{X}\times\mathcal{V}$, the following inequality holds:
    \begin{equation}
        \mathcal{C}_{[0, \tau]}(\bm{x_0}, \bm{v_0}, \bm{x}, \bm{v})+\mathcal{C}_{[\tau, T]}(\bm{x}, \bm{v}, \bm{x}_T, \bm{v}_T)\geq\mathcal{C}_{[0, T]}(\bm{x}_0, \bm{v}_0, \bm{x}_T, \bm{v}_T)
    \end{equation}
    The "$=$" holds if and only if $(\bm{x, v})=\gamma^{*}(\tau)$, where $\gamma^{*}(t)(t\in [0, T])$ is the cubic interpolation trajectory connecting $(\bm{x_0}, \bm{v_0})$ and $(\bm{x}_T, \bm{v_T})$.  

\begin{proof}
    Fix $\tau\in(0, T)$. Define objective function $ F_{\tau}(\bm{x}, \bm{v})= \mathcal{C}_{[0, \tau]}(\bm{x_0}, \bm{v_0}, \bm{x}, \bm{v})+\mathcal{C}_{[\tau, T]}(\bm{x}, \bm{v}, \bm{x}_T, \bm{v}_T)$. By Corollary \ref{cost} , we can rewrite
    \begin{equation}
    \begin{aligned}
        F_{\tau}(\bm{x}, \bm{v})=\frac{2}{\tau^3}
        \left\{
        (\|\bm{v}_0\|^2+\langle \bm{v}_0,\bm{v}\rangle+ \|\bm{v}\|^2)\tau^2+3\langle \bm{v}_0+\bm{v}, \bm{x}_0-\bm{x}\rangle \tau+3\|\bm{x}_0-\bm{x}\|^2
        \right\}+\\
        \frac{2}{(T-\tau)^3}
        \left\{(\|\bm{v}_T\|^2+\langle \bm{v}_T,\bm{v}\rangle+ \|\bm{v}\|^2)(T-\tau)^2+3\langle \bm{v}_T+\bm{v}, \bm{x}-\bm{x}_T\rangle T+3\|\bm{x}_T-\bm{x}\|^2
        \right\}
    \end{aligned}
    \end{equation}
    Which is a quadratic form of $(\bm{x},\bm{v})$ up to a constant. $F_{\tau}(\bm{x}, \bm{v})\geq 0$ always holds, namely, the quadratic form has a finite lower bound, so it is semi-quadratic.  To minimize $F_\tau(\bm{x}, \bm{v})$, we can take differentiations:
        \begin{equation}\label{nabla_x}
        \nabla_{\bm{x}}F_{\tau}(\bm{x}, \bm{v}) = 3\left( \frac{-(\bm{v}_0+\bm{v})\tau+2(\bm{x}-\bm{x}_0)}{{\tau}^2} + \frac{(\bm{v}+\bm{v}_T)(T-\tau)+2(\bm{x}-\bm{x}_T)}{(T-{\tau})^2} \right) = 0.
    \end{equation}
    \begin{equation}\label{nabla_v}
        \nabla_{\bm{v}}F_{\tau}(\bm{x}, \bm{v}) = \frac{(\bm{v}_0+2\bm{v}){\tau} + 3(\bm{x}_0-\bm{x})}{{\tau}^2} + \frac{(\bm{v}_T+2\bm{v})(T-{\tau}) + 3(\bm{x}-\bm{x}_T)}{(T-{\tau})^2} = 0,   
    \end{equation}

    Denote the cubic interpolation from $(\bm{x_0}, \bm{v_0})$ to $(\bm{x}, \bm{v})$ on $[0, \tau]$ and $(\bm{x}, \bm{v})$ to $(\bm{x}_T, \bm{v}_T)$ on $[\tau, T]$ by $\gamma_{[0, \tau]}(t)$ and $\gamma_{[\tau, T]}({t})$, respectively.The coefficients are known according to \cref{app:proof_optimal_traj_of_particle}. Then we have:
    \begin{equation}
        \gamma_{[0, \tau]}^{''}(\tau^{-})=6\cdot\frac{(\bm{v}_0+\bm{v})\tau-2(\bm{x}_{}-\bm{x}_{0})}{\tau^{2}}+2\cdot\frac{3{(\bm{x}_{}-\bm{x}_{0})-(2\bm{v}_0+\bm{v})}\tau}{\tau^2}
        =2\cdot\frac{3(\bm{x_0}-\bm{x})+(\bm{v_0}+2\bm{v})\tau}{\tau^2},
    \end{equation}
    \begin{equation}
        \begin{aligned}
            \gamma_{[\tau, T]}^{''}(\tau^{+})= 6\cdot \frac{(\bm{v}+\bm{v}_T)\tau-2(\bm{x}_T-\bm{x})}{\tau^2}+2\cdot\frac{3(\bm{x}_T-\bm{x})-(2\bm{v}+\bm{v}_T)(T-\tau)}{(T-\tau)^2} \\ =2\cdot\frac{3(\bm{x}-\bm{x}_T)+(\bm{v}+2\bm{v}_T)(T-\tau)}{(T-\tau)^2}
        \end{aligned}
        \end{equation}
    \begin{equation}
        \gamma_{[0, \tau]}^{'''}(\tau^{-})= 6\cdot\frac{(\bm{v}_0+\bm{v})\tau-2(\bm{x}_{}-\bm{x}_{0})}{\tau^{2}}
    \end{equation}

    \begin{equation}
        \begin{aligned}
            \gamma_{[\tau, T]}^{'''}(\tau^{+})=6\cdot\frac{(\bm{v}+\bm{v}_T)(T-\tau)-2(\bm{x}_T-\bm{x})}{(T-\tau)^2}
        \end{aligned}
    \end{equation}

    Then we can find that ~\ref{nabla_x} and ~\ref{nabla_v} appear to be $ \gamma_{[0, \tau]}^{'''}(\tau^{-})=\gamma_{[\tau, T]}^{'''}(\tau^{+}), \gamma_{[0, \tau]}^{''}(\tau^{-})=\gamma_{[\tau, T]}^{''}(\tau^{+}).$ Because the two curves are cubic, and $\gamma^{(i)}_{[0, \tau]}(\tau^{-})=\gamma_{[\tau, T]}^{(i)}(\tau^{+}), i=0, 1, 2,3$, so we know that in the optimal case $\gamma_{[0, \tau]}=\gamma^{*}\big{|}_{[0, \tau]}, \gamma_{[\tau, T]}=\gamma^{*}\big{|}_{[\tau, T]}.$. That is, the unique solution of~\ref{nabla_x} and~\ref{nabla_v} is $(\gamma^{*}(\tau), {\gamma^{*}}^{'}(\tau)),$ which completes the proof.
\end{proof}

Back to our target, assume $\gamma$ and $\tilde{\gamma}$ cross at $(\bm{x}_\tau, \bm{v}_\tau)$, then we have
\begin{equation}
\begin{aligned}
    &\mathcal{C}_{[0, T]}(\bm{x}_0, \bm{v}_0, \bm{x}_T, \bm{v}_{T}) + \mathcal{C}_{[0, T]}(\tilde{\bm{x}}_0, \tilde{\bm{v}}_0, \tilde{\bm{x}}_T, \tilde{\bm{v}}_{T})\\
    =& \left\{\mathcal{C}_{[0, \tau]}(\bm{x}_0, \bm{v}_0, \bm{x}_\tau, \bm{v}_\tau) + \mathcal{C}_{[\tau, T]}(\bm{x}_\tau, \bm{v}_\tau, \bm{x}_{T}, \bm{v}_{T})\right\} + \left\{\mathcal{C}_{[0, \tau]}(\tilde{\bm{x}}_0, \tilde{\bm{v}}_0, \bm{x}_\tau, \bm{v}_\tau) + \mathcal{C}_{[\tau, T]}(\bm{x}_\tau, \bm{v}_\tau, \tilde{\bm{x}}_{T}, \tilde{\bm{v}}_{T})\right\}\\
    =& \left\{\mathcal{C}_{[0, \tau]}(\bm{x}_0, \bm{v}_0, \bm{x}_\tau, \bm{v}_\tau) + \mathcal{C}_{[\tau, T]}(\bm{x}_\tau, \bm{v}_\tau, \tilde{\bm{x}}_{T}, \tilde{\bm{v}}_{T})\right\} + \left\{\mathcal{C}_{[0, \tau]}(\tilde{\bm{x}}_0, \tilde{\bm{v}}_0, \bm{x}_\tau, \bm{v}_\tau) + \mathcal{C}_{[\tau, T]}(\bm{x}_\tau, \bm{v}_\tau, \bm{x}_{T}, \bm{v}_{T})\right\}\\
    \geq & \mathcal{C}_{[0, T]}(\bm{x}_{0}, \bm{v}_{0}, \tilde{\bm{x}}_{T}, \tilde{\bm{v}}_{T}) + \mathcal{C}_{[0, T]}(\tilde{\bm{x}}_{0}, \tilde{\bm{v}}_{0}, \bm{x}_{T}, \bm{v}_T).
\end{aligned}
\end{equation}

The equality holds if and only if $\gamma$ and $\tilde{\gamma}$ coincide. Consequently, the strict inequality holds for distinct trajectories, which implies the non-crossing property. This completes the proof.
\end{proof}

\subsection{Proof of Theorem 4.4}

\label{app:Marginal}

\begin{proof}
Consider the time derivative of the marginal distribution:
\begin{align}
    \partial_{t} \rho_{t}(\boldsymbol{x},\boldsymbol{v}) &= \partial_{t} \left\{\int \rho_{t}(\boldsymbol{x},\boldsymbol{v}|z) q(z) \mathrm{d} z \right\} \nonumber\\
    &= \int \partial_{t} \rho_{t}(\boldsymbol{x},\boldsymbol{v}|z) q(z) \mathrm{d} z \nonumber\\
    &= - \int \nabla_{\boldsymbol{x}} \cdot (\rho_{t}(\boldsymbol{x},\boldsymbol{v}|z) \boldsymbol{v}) q(z) \mathrm{d} z - \int \nabla_{\boldsymbol{v}} \cdot (\rho_{t}(\boldsymbol{x},\boldsymbol{v}|z) \boldsymbol{a}(\boldsymbol{x},\boldsymbol{v},t|z)) q(z) \mathrm{d} z \nonumber\\
    &= - \nabla_{\boldsymbol{x}} \cdot \left(\int \rho_{t}(\boldsymbol{x},\boldsymbol{v}|z) \boldsymbol{v} q(z) \mathrm{d} z \right) - \nabla_{\boldsymbol{v}} \cdot \left(\int \rho_{t}(\boldsymbol{x},\boldsymbol{v}|z) \boldsymbol{a}(\boldsymbol{x},\boldsymbol{v},t|z) q(z) \mathrm{d}z \right) \nonumber\\
    &= - \nabla_{\boldsymbol{x}} \cdot (\rho_{t}(\boldsymbol{x},\boldsymbol{v}) \boldsymbol{v} ) - \nabla_{\boldsymbol{v}} \cdot (\boldsymbol{a}_{t}(\boldsymbol{x},\boldsymbol{v}) \rho_{t}(\boldsymbol{x},\boldsymbol{v}))
\end{align}
Therefore, $\rho_{t}(\boldsymbol{x},\boldsymbol{v})$ satisfies the continuity equation:
\begin{equation}
    \partial_{t} \rho_{t}(\boldsymbol{x},\boldsymbol{v}) = - \nabla_{\boldsymbol{x}} \cdot (\rho_{t}(\boldsymbol{x},\boldsymbol{v}) \boldsymbol{v}) - \nabla_{\boldsymbol{v}} \cdot (\boldsymbol{a}_{t}(\boldsymbol{x},\boldsymbol{v}) \rho_{t}(\boldsymbol{x},\boldsymbol{v}))
\end{equation}
This completes the proof.
\end{proof}

\subsection{Proof of Theorem 4.5}

\label{app:AFM_and_CAFM}

\begin{proof}
Consider the gradient of the AFM Loss and CAFM Loss:
\begin{align}
\nabla_{\boldsymbol{\theta}}\mathcal{L}_{\text{AFM}} &=  \nabla_{\theta} \mathbb{E}_{\rho_{t}(\boldsymbol{x},\boldsymbol{v})} (\|\boldsymbol{a}_{\theta}(\boldsymbol{x},\boldsymbol{v},t)\|^{2} - 2 \boldsymbol{a}_{\theta}^{T} (\boldsymbol{x},\boldsymbol{v},t) \boldsymbol{a}(\boldsymbol{x},\boldsymbol{v},t))\\
\nabla_{\boldsymbol{\theta}}\mathcal{L}_{\text{CAFM}} &=  \nabla_{\theta} \mathbb{E}_{\rho_{t}(\boldsymbol{x},\boldsymbol{v}|\boldsymbol{z}) q(\boldsymbol{z})} (\|\boldsymbol{a}_{\theta}(\boldsymbol{x},\boldsymbol{v},t)\|^{2} - 2 \boldsymbol{a}_{\theta}^{T} (\boldsymbol{x},\boldsymbol{v},t) \boldsymbol{a}(\boldsymbol{x},\boldsymbol{v},t|\boldsymbol{z}))
\end{align}
For the first term of $\nabla_{\boldsymbol{\theta}}\mathcal{L}_{\text{AFM}}$:
\begin{align}
\mathbb{E}_{\rho_{t}(\boldsymbol{x},\boldsymbol{v})} \|\boldsymbol{a}_{\theta}(\boldsymbol{x},\boldsymbol{v},t)\|^{2} &=  \int \rho_{t}(\boldsymbol{x},\boldsymbol{v}) \| \boldsymbol{a}_{\theta}(\boldsymbol{x},\boldsymbol{v},t) \|^{2} \mathrm{d} \boldsymbol{x}  \mathrm{d} \boldsymbol{v} \nonumber\\
&= \int  \rho_{t}(\boldsymbol{x},\boldsymbol{v}|z) q(\boldsymbol{z})  \|\boldsymbol{a}_{\theta}(\boldsymbol{x},\boldsymbol{v},t) \|^{2} \mathrm{d}\boldsymbol{x}  \mathrm{d} \boldsymbol{v} \mathrm{d} z \nonumber\\
&= \mathbb{E}_{\rho_{t}(\boldsymbol{x},\boldsymbol{v}|\boldsymbol{z}) q(\boldsymbol{z})} \|\boldsymbol{a}_{\theta}(\boldsymbol{x},\boldsymbol{v},t)\|^{2} 
\end{align}
For the second term:
\begin{align}
\mathbb{E}_{\rho_{t}(\boldsymbol{x},\boldsymbol{v})} [ \boldsymbol{a}_{\theta}^{T}(\boldsymbol{x},\boldsymbol{v},t) \boldsymbol{a}(\boldsymbol{x},\boldsymbol{v},t)] &=  \int \rho_{t} (\boldsymbol{x},\boldsymbol{v}) \boldsymbol{a}_{\theta}^{T}(\boldsymbol{x},\boldsymbol{v},t)  \boldsymbol{a}(\boldsymbol{x},\boldsymbol{v},t) \mathrm{d} \boldsymbol{x}  \mathrm{d} \boldsymbol{v} \nonumber\\
&= \int \rho_{t}(\boldsymbol{x},\boldsymbol{v})  \boldsymbol{a}_{\theta}^{T}(\boldsymbol{x},\boldsymbol{v},t) \left( \int q(\boldsymbol{z}) \dfrac{\boldsymbol{a}(\boldsymbol{x},\boldsymbol{v},t|\boldsymbol{z}) \rho_{t}(\boldsymbol{x},\boldsymbol{v}|\boldsymbol{z})}{\rho_{t}(\boldsymbol{x},\boldsymbol{v})} \mathrm{d}z \right) \mathrm{d} \boldsymbol{x}  \mathrm{d} \boldsymbol{v} \nonumber\\
&= \int \int \boldsymbol{a}_{\theta}^{T}(\boldsymbol{x},\boldsymbol{v},t) \boldsymbol{a} (\boldsymbol{x},\boldsymbol{v},t|\boldsymbol{z})  \rho_{t}(\boldsymbol{x},\boldsymbol{v}|\boldsymbol{z})  q(z) \mathrm{d} \boldsymbol{x}  \mathrm{d} \boldsymbol{v} \mathrm{d} \boldsymbol{z} \nonumber\\
&= \mathbb{E}_{\rho_{t}(\boldsymbol{x},\boldsymbol{v}|\boldsymbol{z}) q(\boldsymbol{z})} (\boldsymbol{a}_{\theta}^{T}(\boldsymbol{x},\boldsymbol{v},t) \boldsymbol{a}(\boldsymbol{x},\boldsymbol{v},t|\boldsymbol{z})  )
\end{align}
\end{proof}

\subsection{Proof of Theorem 4.6}

\label{app:Marginal_prob_path}

\begin{proof}
Consider a given time $t_{i}$. Based on the expressions for $\boldsymbol{m}_{\boldsymbol{x}}, \boldsymbol{m}_{\boldsymbol{v}}$, at this moment we exactly have:
\begin{equation}
    \rho_{t_{i}}(\boldsymbol{x},\boldsymbol{v}|\boldsymbol{z})  = \mathcal{N}(\boldsymbol{x}|\boldsymbol{x}_{i}, \sigma_{x}^{2}\boldsymbol{I} ) \cdot  \mathcal{N}(\boldsymbol{v}| \boldsymbol{v}_{i} , \sigma_{v}^{2}\boldsymbol{I})
\end{equation}

For simplicity, let $\mathrm{d} \boldsymbol{s}_j  = \mathrm{d}\boldsymbol{x}_j \mathrm{d} \boldsymbol{v}_j$Therefore, the Marginal Probability Path is:
\begin{align}
\rho_{t_{i}}(\boldsymbol{x},\boldsymbol{v}) &= \int \rho_{t_{i}} (\boldsymbol{x},\boldsymbol{v}|\boldsymbol{z}) q(\boldsymbol{z}) \mathrm{d} \boldsymbol{z}  \nonumber\\
&= \int \rho_{t_{i}}(\boldsymbol{x},\boldsymbol{v}|(\boldsymbol{x}_{i},\boldsymbol{v}_{i})) \mu_{0}(\boldsymbol{x}_{0},\boldsymbol{v}_{0}) \cdot \prod_{j=0 }^{K-1}\mathcal{K}^{*}_{j \rightarrow j+1}[(\boldsymbol{x}_{j}, \boldsymbol{v}_{j}), (\boldsymbol{x}_{j+1}, \boldsymbol{v}_{j+1})]\mathrm{d} \boldsymbol{s}_{0} \cdots  \mathrm{d}  \boldsymbol{s}_{K} 
\end{align}
Using the definition of the propagator, we know that:
\begin{equation}
    \int \mathcal{K}^{*}_{j \rightarrow j+1} [(\boldsymbol{x}_{j}, \boldsymbol{v}_{j}) , (\boldsymbol{x}_{j+1},\boldsymbol{v}_{j+1}) ] \mathrm{d} \boldsymbol{x}_{j+1} \mathrm{d} \boldsymbol{v}_{j+1} =  1
\end{equation}

Thus, we can  integrate out the future terms ($j > i$) first , and then iteratively collapse the past terms ($j<i$):

\begin{align}
\rho_{t_{i}}(\boldsymbol{x},\boldsymbol{v}) &= \int \rho_{t_{i}}(\boldsymbol{x},\boldsymbol{v}|(\boldsymbol{x}_{i},\boldsymbol{v}_{i})) \mu_{0}(\boldsymbol{x}_{0},\boldsymbol{v}_{0}) \cdot \prod_{j=0 }^{K-1}\mathcal{K}^{*}_{j \rightarrow j+1}[(\boldsymbol{x}_{j}, \boldsymbol{v}_{j}), (\boldsymbol{x}_{j+1}, \boldsymbol{v}_{j+1})] \mathrm{d} \boldsymbol{s}_{0} \cdots  \mathrm{d} \boldsymbol{s}_{K}  \nonumber\\
&= \int \rho_{t_{i}}(\boldsymbol{x},\boldsymbol{v}|(\boldsymbol{x}_{i},\boldsymbol{v}_{i})) \mu_{0}(\boldsymbol{x}_{0},\boldsymbol{v}_{0}) \cdot  \prod_{j=0 }^{i-1}\mathcal{K}^{*}_{j \rightarrow j+1}[(\boldsymbol{x}_{j}, \boldsymbol{v}_{j}), (\boldsymbol{x}_{j+1}, \boldsymbol{v}_{j+1})] \mathrm{d} \boldsymbol{s}_{0} \cdots  \mathrm{d} \boldsymbol{s}_{i} \nonumber\\
&= \int \rho_{t_{i}}(\boldsymbol{x},\boldsymbol{v}|(\boldsymbol{x}_{i},\boldsymbol{v}_{i}))   \pi^{*}_{0 \rightarrow1}[(\boldsymbol{x}_{0}, \boldsymbol{v}_{0}), (\boldsymbol{x}_{1}, \boldsymbol{v}_{1})]\prod_{j=1 }^{i-1}\mathcal{K}^{*}_{j \rightarrow j+1}[(\boldsymbol{x}_{j}, \boldsymbol{v}_{j}), (\boldsymbol{x}_{j+1}, \boldsymbol{v}_{j+1})] \mathrm{d} \boldsymbol{s}_{0} \cdots  \mathrm{d} \boldsymbol{s}_{i} \nonumber\\
&= \int \rho_{t_{i}}(\boldsymbol{x},\boldsymbol{v}|(\boldsymbol{x}_{i},\boldsymbol{v}_{i}))  \mu_{1}(\boldsymbol{x}_{1},\boldsymbol{v}_{1})  \prod_{j=1 }^{i-1}\mathcal{K}^{*}_{j \rightarrow j+1}[(\boldsymbol{x}_{j}, \boldsymbol{v}_{j}), (\boldsymbol{x}_{j+1}, \boldsymbol{v}_{j+1})] \mathrm{d} \boldsymbol{s}_{1} \cdots  \mathrm{d} \boldsymbol{s}_{i} \nonumber\\
&\quad \vdots  \nonumber\\
&= \int  \rho_{t_{i}}(\boldsymbol{x},\boldsymbol{v}|(\boldsymbol{x}_{i},\boldsymbol{v}_{i})) \mu_{i}(\boldsymbol{x}_{i},\boldsymbol{v}_{i})  \mathrm{d} \boldsymbol{s}_{i} \nonumber\\
&= \int  (\mathcal{N}(\boldsymbol{x}|\boldsymbol{x}_{i}, \sigma_{x}^{2}\boldsymbol{I} ) \cdot  \mathcal{N}(\boldsymbol{v}| \boldsymbol{v}_{i} , \sigma_{v}^{2}\boldsymbol{I})) \mu_{i}(\boldsymbol{x}_{i},\boldsymbol{v}_{i})   \mathrm{d} \boldsymbol{s}_{i} \nonumber\\
&= (\mu_{i} *  g) (\boldsymbol{x},\boldsymbol{v} )
\end{align}
where $g (\boldsymbol{x} ,\boldsymbol{v}) = (\mathcal{N}(\boldsymbol{x}|\boldsymbol{0}, \sigma_{x}^{2}\boldsymbol{I} ) \cdot  \mathcal{N}(\boldsymbol{v}| \boldsymbol{0} , \sigma_{v}^{2}\boldsymbol{I}))$. When $(\sigma_{\bm{x}}, \sigma_{\bm{v}})\rightarrow \bm{0}$, we have $\rho_{t_{i}}(\boldsymbol{x}, \boldsymbol{v}) \rightarrow \mu_{i}(\boldsymbol{x},\boldsymbol{v})$ in distribution. 

If $\mu_i(\bm{x}, \bm{v})$ is absolutely continuous with respect to the Lebesgue measure and admits a density in $L^1(\mathbb{R}^{2d})$, we can guarantee strong convergence in the $L^1$ norm:
    $\lim_{\sigma \to 0} \| \rho_{t_{i}}(\bm{x}, \bm{v}) - \mu_{i}(\bm{x}, \bm{v}
    ) \|_{L^1} = 0$ for all $i=0, 1, ..., K$.

\end{proof}

\subsection{Proof of Proposition 4.7}

\label{app:exactly_solve}

First, we prove that the Marginal Probability Path constructed by TracingFlow converges to the optimal solution of the DOAT problem when $\sigma_{\boldsymbol{x}} \rightarrow 0, \sigma_{\boldsymbol{v}} \rightarrow 0$.

We investigate directly in the augmented space. Let $\boldsymbol{s} = [\boldsymbol{x}, \boldsymbol{v}]^{T} \in \mathcal{X} \times \mathcal{V}$, and consider the cost of the DOAT problem. The subsequent position of a particle starting from the initial point $\boldsymbol{s}_{0}$ can be represented by the flow map $\boldsymbol{\phi}_{t}(\boldsymbol{s}_{0})$. Therefore, the cost can be written as:

\begin{align}
\mathcal{L}_{\text{DOAT}}( \boldsymbol{a}, \rho) &=  \int_{t_{0}}^{t_{k}} \int_{\mathcal{X} \times \mathcal{V}} \dfrac{1}{2} \| \boldsymbol{a}(\boldsymbol{s}, t) \|^{2} \rho_{t}(\boldsymbol{s})  \mathrm{d} \boldsymbol{x}  \mathrm{d} \boldsymbol{v} \mathrm{d} t \nonumber\\
&= \int_{t_{0}}^{t_{k}} \int_{\mathcal{X} \times \mathcal{V}} \dfrac{1}{2} \|\boldsymbol{a}(\boldsymbol{\phi}_{t}(\boldsymbol{s}_{0}), t) \|^{2} \det\left(\dfrac{\partial \boldsymbol{\phi}}{\partial \boldsymbol{s}_{0}}\right)^{-1} \rho_{0}(\boldsymbol{s}_{0}) \det\left(\dfrac{\partial \boldsymbol{\phi}}{\partial \boldsymbol{s}_{0} }\right) \mathrm{d} \boldsymbol{s}_{0}  \mathrm{d} t \nonumber\\
&= \int_{\mathcal{X} \times \mathcal{V}} \rho_{0}(\boldsymbol{s}_{0}) \mathrm{d} \boldsymbol{s}_{0}  \int_{t_{0}}^{t_{k}} \dfrac{1}{2} \|\boldsymbol{a}(\boldsymbol{\phi}_{t}(\boldsymbol{s}_{0}), t) \|^{2} \mathrm{d} t \nonumber\\
&= \int_{\Omega}\left(\int_{t_{0}}^{t_{k}}  \dfrac{1}{2} \| \boldsymbol{\gamma} '' (t) \|^{2} \mathrm{d} t \right) \mathrm{d} \mathbb{P}[\boldsymbol{\gamma}]
\end{align}

where $\mathbb{P}$ is a probability measure in the path space, satisfying the constraint $(e_{t_{i}})_{\#} \mathbb{P} = \mu_{i}(\boldsymbol{x}, \boldsymbol{v})$. Here, $e_{t}$ is the evaluation map $e_{t}[\boldsymbol{\gamma}] = \boldsymbol{\gamma}(t)$, and $(e_{t_{i}})_{\#} \mathbb{P}$ denotes the push-forward of the probability measure by the evaluation map. According to \cref{proposition: optimal_traj_of_particle}, the minimum cost path connecting $\boldsymbol{s}_{i}$ and $\boldsymbol{s}_{i+1}$ is a cubic spline, and the minimum cost is $\mathcal{C}_{i \rightarrow i+1}[\boldsymbol{s}_{i} ,\boldsymbol{s}_{i+1}]$. Therefore, for any path $\boldsymbol{\gamma}$:
\begin{equation}
    \int_{t_{i}}^{t_{i+1}} \dfrac{1}{2} \| \boldsymbol{\gamma}''(t) \|^{2} \mathrm{d} t \ge \mathcal{C}_{i  \rightarrow i+1}[\boldsymbol{s}_{i} ,\boldsymbol{s}_{i+1}]
\end{equation}

Thus, the DOAT problem cost has a lower bound:
\begin{align}
\mathcal{L}_{\text{DOAT}} (\rho, \boldsymbol{a} ) &=  \int_{\Omega} \sum_{i=0}^{K-1} \left(\int_{t_{i}}^{t_{i+1}} \dfrac{1}{2} \| \boldsymbol{\gamma}'' (t)\|^{2} \mathrm{d} t\right) \mathrm{d} \mathbb{P}[\boldsymbol{\gamma}] \nonumber\\
&\ge  \int_{\Omega}\sum_{i=0}^{K-1} \mathcal{C}_{i  \rightarrow i+1}[\boldsymbol{s}_{i} ,\boldsymbol{s}_{i+1}] \mathrm{d} \mathbb{P}[\boldsymbol{\gamma}] \nonumber\\
&\ge  \int \sum_{i=0}^{K-1}  \mathcal{C}_{i  \rightarrow i+1}[\boldsymbol{s}_{i} ,\boldsymbol{s}_{i+1}] \mathrm{d} \pi^{*} (\boldsymbol{s}_{i}, \boldsymbol{s}_{i+1})
\end{align}

where $\pi^{*} (\boldsymbol{s}_{i}, \boldsymbol{s}_{i+1})$ is the Optimal Transport Plan for the SOAT problem between $\mu(\boldsymbol{s}_{i})$ and $\mu(\boldsymbol{s}_{i+1})$.

Consider the path constructed by TracingFlow. In the case where $\sigma_{\boldsymbol{x}}, \sigma_{\boldsymbol{v}}\rightarrow 0$, we have $\rho_{t}(\boldsymbol{x}, \boldsymbol{v} | \boldsymbol{z})  = \rho_{t} (\boldsymbol{s}| \boldsymbol{z})\rightarrow \delta(\boldsymbol{s} - \boldsymbol{\gamma}^{*}_{\boldsymbol{z}}(t))$, where $\boldsymbol{\gamma}^{*}_{\boldsymbol{z}}$ is the optimal single-particle trajectory connecting $\boldsymbol{z}=  [\boldsymbol{s}_{0}, \boldsymbol{s}_{1},\cdots ,\boldsymbol{s}_{K}]$. We calculate the transport cost of the constructed path:
\begin{align}
\mathcal{L}_{\text{TF}} &=   \int q(\boldsymbol{z}) \mathrm{d} \boldsymbol{z} \sum_{i=0}^{K-1}\int_{t_{i}}^{t_{i+1}} \|  \boldsymbol{\gamma}{''}_{\boldsymbol{z}}^{* } (t)  \|^{2} \mathrm{d} t \nonumber\\
&= \int q(\boldsymbol{z})  \mathrm{d}\boldsymbol{z}   \sum_{i=0}^{K-1}  \mathcal{C}_{i  \rightarrow i+1 } [\boldsymbol{s}_{i}, \boldsymbol{s}_{i+1}] \nonumber\\
&= \int \mu_{0}(\boldsymbol{s}_{0}) \prod_{i=0}^{K-1} \mathcal{K}^{*}_{i  \rightarrow i+1 }[\boldsymbol{s}_{i} , \boldsymbol{s}_{i+1}] \left(\sum_{j=0}^{K-1}  \mathcal{C}_{j  \rightarrow j+1 } [\boldsymbol{s}_{j}, \boldsymbol{s}_{j+1}]\right) \mathrm{d} \boldsymbol{z}
\end{align}

Here, using the definition of the propagator:

\begin{equation}
\begin{split}
    &\int \mu_{0}(\boldsymbol{s}_{0})\prod_{i=0}^{K-1} \mathcal{K}^{*}_{i  \rightarrow i+1 }[\boldsymbol{s}_{i} , \boldsymbol{s}_{i+1}] \left( \mathcal{C}_{j  \rightarrow j+1 } [\boldsymbol{s}_{j}, \boldsymbol{s}_{j+1}]\right) \mathrm{d} \boldsymbol{z} \\
    &\qquad = \int \pi^{*}_{j  \rightarrow j+1 }[\boldsymbol{s}_{j} , \boldsymbol{s}_{j+1}] \mathcal{C}_{j  \rightarrow j+1 } [\boldsymbol{s}_{j}, \boldsymbol{s}_{j+1}] \mathrm{d} \boldsymbol{s}_{j} \mathrm{d} \boldsymbol{s}_{j+1}
\end{split}
\end{equation}

Therefore:
\begin{equation}
    \mathcal{L}_{\text{TF}} =  \sum_{i=0}^{K-1} \int \pi^{*}_{i  \rightarrow i+1 }[\boldsymbol{s}_{i} , \boldsymbol{s}_{i+1}] \mathcal{C}_{i  \rightarrow i+1 } [\boldsymbol{s}_{i}, \boldsymbol{s}_{i+1}] \mathrm{d} \boldsymbol{s}_{i} \mathrm{d} \boldsymbol{s}_{i+1}
\end{equation}
This coincides exactly with the lower bound of the DOAT problem cost derived above. Thus, the path constructed by TracingFlow is exactly the optimal solution to the DOAT problem in the case where $\sigma_{\boldsymbol{x}}, \sigma_{\boldsymbol{v}}\rightarrow 0$.

Next, we prove that under the construction of TracingFlow, the acceleration field $\boldsymbol{a}(\boldsymbol{s},t)$ is single-valued with respect to $\boldsymbol{s}$ and $t$. By \cref{thm:non_collision}, under the Optimal Transport Plan, the trajectories of flow maps starting from different points must not intersect. This implies that for a given time $t$, any point $\boldsymbol{s}$ in the augmented space is traversed by at most one flow map trajectory. Therefore, the acceleration field $\boldsymbol{a}(\boldsymbol{s},t)$ possesses single-valuedness. In other words, the solution constructed by TracingFlow can indeed be expressed by a single optimal acceleration field $\boldsymbol{a}(\boldsymbol{s},t)$.

\subsection{Proof of Proposition 4.8}
\label{app:velocity}
\begin{proof}
Let $\gamma:[0,T]\to \mathbb{R}^{d}$ be a path that is $C^{1}$ on $[0, T]$ and piecewise $C^{2}$ on each interval $[t_{j-1}, t_{j}]$.

The optimal solution is given by a piecewise cubic polynomial:

\begin{equation}
\begin{split}
    \gamma(t_{j-1}+\tau) 
    &= \frac{(\bm{v}_{j-1}+\bm{v}_{j})\Delta t_j-2(\bm{x}_{j}-\bm{x}_{j-1})}{(\Delta t_j)^{3}}\tau^{3} \\
    &\quad + \frac{3(\bm{x}_{j}-\bm{x}_{j-1})-(2\bm{v}_{j-1}+\bm{v}_{j})\Delta t_j}{(\Delta t_j)^{2}}\tau^{2} + \bm{v}_{j-1}\tau+\bm{x}_{j-1},
\end{split}
\end{equation}
for $\tau\in [0, \Delta t_{j}]$, where $\Delta t_{j}=t_{j}-t_{j-1}$, 

    According to Corollary \ref{cost}, it suffices to minimize

    \begin{align}
    \mathcal{J}(\bm{v}_0, \bm{v}_1, \dots, \bm{v}_K) = 
    & 2\sum_{j=1}^{K}\frac{1}{(\Delta t_{j})^3} \bigg\{ (\|\bm{v}_{j-1}\|^2+\langle \bm{v}_{j-1},\bm{v}_{j}\rangle + \|\bm{v}_j\|^2)(\Delta t_j)^2 \notag \\
    &\qquad\qquad\qquad + 3\langle \bm{v}_{j-1}+\bm{v}_j, x_{j-1}-x_j\rangle \Delta t_j + 3\|x_{j-1}-x_j\|^2 \bigg\}
\end{align}
$\mathcal{J}\geq0$ always holds, so the the formula above is a semi-positive quadratic form.

Take differentiation, we have 
\begin{align}
        \nabla_{\bm{v}_j}\mathcal{J} &= 2\left(\frac{\bm{v}_{j-1}+2\bm{v}_j}{\Delta t_j}+\frac{2\bm{v}_j+\bm{v}_{j+1}}{\Delta t_{j+1}}\right) + 6\left(\frac{\bm{x}_{j-1}-\bm{x}_{j}}{(\Delta t_j)^2}+\frac{\bm{x}_j-\bm{x}_{j+1}}{(\Delta t_{j+1})^2}\right), \quad \text{for } j=1,\dots,K-1, \\
        \nabla_{\bm{v}_0}\mathcal{J} &= \frac{2(2\bm{v}_0+\bm{v}_1)}{\Delta t_1}+\frac{6(\bm{x}_0-\bm{x}_1)}{(\Delta t_1)^2}, 
        \nabla_{\bm{v}_K}\mathcal{J} = \frac{2(\bm{v}_{K-1}+2\bm{v}_K)}{\Delta t_K}+\frac{6(\bm{x}_{K-1}-\bm{x}_K)}{(\Delta t_K)^2}.
    \end{align}

Set the gradients to zero and rearranging the terms to separate the velocities $\bm{v}$ and positions $\bm{x}$, we obtain:
    
    For $j=0$:
    \begin{equation}
    \frac{2}{\Delta t_1}\bm{v}_0 + \frac{1}{\Delta t_1}\bm{v}_1 = \frac{3(\bm{x}_1-\bm{x}_0)}{(\Delta t_1)^2}.
    \end{equation}
    For $j=1, \dots, K-1$:
    \begin{equation}
    \frac{1}{\Delta t_j}\bm{v}_{j-1} + 2\left(\frac{1}{\Delta t_j} + \frac{1}{\Delta t_{j+1}}\right)\bm{v}_j + \frac{1}{\Delta t_{j+1}}\bm{v}_{j+1} = 3\left(\frac{\bm{x}_j-\bm{x}_{j-1}}{(\Delta t_j)^2} + \frac{\bm{x}_{j+1}-\bm{x}_j}{(\Delta t_{j+1})^2}\right).
    \end{equation}
    For $j=K$:
    \begin{equation}
    \frac{1}{\Delta t_K}\bm{v}_{K-1} + \frac{2}{\Delta t_K}\bm{v}_K = \frac{3(\bm{x}_K-\bm{x}_{K-1})}{(\Delta t_K)^2}.
    \end{equation}

Denote $V=[\bm{v}_0, \bm{v}_1, \dots, \bm{v}_K]\in \mathbb{R}^{d\times(K+1)}$ and
    writing this system in matrix form $VA=B$, 
    where 
\begin{equation}
A =
\begin{bmatrix}
\frac{2}{\Delta t_1} & \frac{1}{\Delta t_1} & 0 & \cdots & 0 & 0 \\
\frac{1}{\Delta t_1} & 2\left(\frac{1}{\Delta t_1} + \frac{1}{\Delta t_2}\right) & \frac{1}{\Delta t_2} & \cdots & 0 & 0 \\
0 & \frac{1}{\Delta t_2} & 2\left(\frac{1}{\Delta t_2} + \frac{1}{\Delta t_3}\right) & \ddots & \vdots & \vdots \\
\vdots & \vdots & \ddots & \ddots & \frac{1}{\Delta t_{K-1}} & 0 \\
0 & 0 & \cdots & \frac{1}{\Delta t_{K-1}} & 2\left(\frac{1}{\Delta t_{K-1}} + \frac{1}{\Delta t_K}\right) & \frac{1}{\Delta t_K} \\
0 & 0 & \cdots & 0 & \frac{1}{\Delta t_K} & \frac{2}{\Delta t_K}
\end{bmatrix}
\in \mathbb{R}^{(K+1)\times(K+1)}
\end{equation}

\begin{equation}
\begin{split}
    B = \Bigg[ & \frac{3(\bm{x}_1-\bm{x}_0)}{(\Delta t_1)^2}, 3\left(\frac{\bm{x}_1-\bm{x}_{0}}{(\Delta t_1)^2} + \frac{\bm{x}_{2}-\bm{x}_1}{(\Delta t_{2})^2}\right) , \dots, \\
    & 3\left(\frac{\bm{x}_{K-1}-\bm{x}_{K-2}}{(\Delta t_{K-1})^2} + \frac{\bm{x}_{K}-\bm{x}_{K-1}}{(\Delta t_{K})^2}\right), \frac{3(\bm{x}_K-\bm{x}_{K-1})}{(\Delta t_K)^2} \Bigg] \in \mathbb{R}^{d\times(K+1)}
\end{split}
\end{equation}
The solution satisfies $\gamma'(t_j) = \bm{v}_j$ for all $j=0,1,...K.$    
    
    Since $A$ is strictly diagonally dominant, the solution $V$ is unique.
\end{proof}

\subsection{Proof of Theorem A.1}

\begin{theorem}
Let $\boldsymbol{a}_{\boldsymbol{\theta}}(\boldsymbol{x}, \boldsymbol{v}, t)$ be $L_\theta$-Lipschitz continuous of $(\boldsymbol{x}, \boldsymbol{v})$. The Wasserstein-2 distance between the generated distribution $\hat \rho_{t_{i}}^{(\text{pos})} (\boldsymbol{x}) = \int \hat \rho_{t_i} (\boldsymbol{x}, \boldsymbol{v}) \mathrm{d} \boldsymbol{v}$ and the true distribution  $\mu_i^{(\text{pos})} (\boldsymbol{x})$ in the position space is bounded by
\begin{equation}
    \mathcal{W}_{2}^2(\hat \rho_{t_{i}}^{(\text{pos})},\mu_i^{(\text{pos})})  \le 2e^{\kappa(t_i-t_0)}\left(\mathcal{L}_{v_{0}}+(t_i - t_0)\mathcal{L}_{\text{AFM}}|_{[t_0, t_{i}]} \right)
\end{equation}
where the cumulative AFM loss and the initial velocity loss are defined as:
\begin{equation*}
\begin{aligned}
    \mathcal{L}_{\text{AFM}}|_{[t_0, t_{i}]} = \mathbb{E}_{t , (\boldsymbol{x}, \boldsymbol{v})\sim \rho_{t}} \|\boldsymbol{a}_{\boldsymbol{\theta}}(\boldsymbol{x}, \boldsymbol{v}, t ) -  \boldsymbol{a}(\boldsymbol{x}, \boldsymbol{v}, t)\|^{2}, 
    \mathcal{L}_{v_0} = \mathbb{E}_{(\boldsymbol{x}_0, \boldsymbol{v}_0)\sim \rho_{t_0}} \| \boldsymbol{v}_{0,\boldsymbol{\xi}}(\boldsymbol{x}_0) - \boldsymbol{v}_0 \|^2.
\end{aligned}
\end{equation*}

and $\kappa$ is a constant only depending on $L_\theta$. 
\end{theorem}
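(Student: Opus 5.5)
We bound $\mathcal{W}_2$ by a synchronous coupling in augmented space. Let $\boldsymbol{s}(t)=(\boldsymbol{x}(t),\boldsymbol{v}(t))$ follow the true dynamics $\dot{\boldsymbol{x}}=\boldsymbol{v}$, $\dot{\boldsymbol{v}}=\boldsymbol{a}(\boldsymbol{x},\boldsymbol{v},t)$, started from $(\boldsymbol{x}_0,\boldsymbol{v}_0)\sim\rho_{t_0}$. By the marginal-field result (Theorem on the marginal acceleration field) and \cref{thm:marginal_reconstruct}, in the limit $\sigma\to0$ the law of $\boldsymbol{s}(t_i)$ is $\mu_i$, and its position marginal is $\mu_i^{(\text{pos})}$. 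Driven by the same initial position $\boldsymbol{x}_0$, let $\hat{\boldsymbol{s}}(t)=(\hat{\boldsymbol{x}}(t),\hat{\boldsymbol{v}}(t))$ follow the learned dynamics, with $\hat{\boldsymbol{x}}(t_0)=\boldsymbol{x}_0$, $\hat{\boldsymbol{v}}(t_0)=\boldsymbol{v}_{0,\boldsymbol{\xi}}(\boldsymbol{x}_0)$ and $\dot{\hat{\boldsymbol{v}}}=\boldsymbol{a}_{\boldsymbol{\theta}}(\hat{\boldsymbol{x}},\hat{\boldsymbol{v}},t)$, so that $\hat{\boldsymbol{x}}(t_i)\sim\hat\rho^{(\text{pos})}_{t_i}$. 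This pair is a coupling of the two position laws. Hence
\[
\mathcal{W}_2^2(\hat\rho^{(\text{pos})}_{t_i},\mu_i^{(\text{pos})})\le \mathbb{E}\|\hat{\boldsymbol{x}}(t_i)-\boldsymbol{x}(t_i)\|^2\le \mathbb{E}\,D(t_i),
\]
where $D(t)=\|\hat{\boldsymbol{x}}-\boldsymbol{x}\|^2+\|\hat{\boldsymbol{v}}-\boldsymbol{v}\|^2$.

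Next we differentiate $D$ along the coupled trajectories. For the position part, $\tfrac{d}{dt}\|\hat{\boldsymbol{x}}-\boldsymbol{x}\|^2=2\langle\hat{\boldsymbol{x}}-\boldsymbol{x},\hat{\boldsymbol{v}}-\boldsymbol{v}\rangle\le D$. For the velocity part, split the acceleration error as
\[
\boldsymbol{a}_{\boldsymbol{\theta}}(\hat{\boldsymbol{s}},t)-\boldsymbol{a}(\boldsymbol{s},t)=[\boldsymbol{a}_{\boldsymbol{\theta}}(\hat{\boldsymbol{s}},t)-\boldsymbol{a}_{\boldsymbol{\theta}}(\boldsymbol{s},t)]+[\boldsymbol{a}_{\boldsymbol{\theta}}(\boldsymbol{s},t)-\boldsymbol{a}(\boldsymbol{s},t)].
\]
The first bracket is at most $L_\theta\sqrt{D}$ by the Lipschitz assumption. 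For the second bracket, Young's inequality gives $2\langle\hat{\boldsymbol{v}}-\boldsymbol{v},\boldsymbol{e}\rangle\le \epsilon D+\epsilon^{-1}\|\boldsymbol{e}\|^2$. Taking $\epsilon$ of order $1/(t_i-t_0)$ is what produces the factor $(t_i-t_0)$ in front of the AFM loss.

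Collecting terms gives $\tfrac{d}{dt}\mathbb{E}D\le(1+2L_\theta+\epsilon)\mathbb{E}D+\epsilon^{-1}\mathbb{E}\|\boldsymbol{e}(\boldsymbol{s}(t),t)\|^2$. Crucially, $\boldsymbol{s}(t)\sim\rho_t$, so integrating the last term over $[t_0,t_i]$ yields $(t_i-t_0)\mathcal{L}_{\text{AFM}}|_{[t_0,t_i]}$, the time-averaged loss over the interval. Applying Grönwall with $\mathbb{E}D(t_0)=\mathcal{L}_{v_0}$ and $\epsilon^{-1}=t_i-t_0$ gives
\[
\mathbb{E}D(t_i)\le e^{(1+2L_\theta)(t_i-t_0)+1}\bigl(\mathcal{L}_{v_0}+(t_i-t_0)^2\,\overline{\mathcal{L}}\bigr),
\]
where $\overline{\mathcal{L}}$ is the time average.

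The remaining step, and the main obstacle, is matching the exact constants $2$ and $\kappa$ depending only on $L_\theta$. The factor $e^{1}$ from Young must not leak into $\kappa$ with a dependence on $t_i-t_0$. I would first try a slightly different route: write $\hat{\boldsymbol{v}}-\boldsymbol{v}$ in integral form, apply $\|\alpha+\beta\|^2\le2\|\alpha\|^2+2\|\beta\|^2$, and use Cauchy–Schwarz on $\int\boldsymbol{e}\,dt$, which gives $(t_i-t_0)\int\mathbb{E}\|\boldsymbol{e}\|^2dt$ and explains both the prefactor $2$ and the length factor. The Lipschitz term then feeds into an integral Grönwall inequality with $\kappa=2(1+L_\theta)^2$ or similar, with the time-normalisation of the $t$-expectation in $\mathcal{L}_{\text{AFM}}$ absorbed by interpreting it as the integral over $[t_0,t_i]$. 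I expect Cauchy–Schwarz to reintroduce a factor of $(t_i-t_0)$ in the Lipschitz feedback term, so $\kappa$ may need to absorb such a factor. A secondary point is that the limit $\sigma\to0$ must be taken so that the true flow is well defined, using the single-valuedness from \cref{thm:non_collision}.
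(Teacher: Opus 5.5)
Your coupling is sound, and so is the inequality $\tfrac{d}{dt}D\le(1+2L_\theta+\epsilon)D+\epsilon^{-1}\|\boldsymbol{e}\|^2$. But the proposal stalls on an obstacle you created yourself, so as written it does not prove the statement.

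\textbf{The gap.} The horizon factor in front of the AFM loss does not have to come from Young's inequality. The $t$-expectation in $\mathcal{L}_{\text{AFM}}|_{[t_0,t_i]}$ is a time average; the paper reads it that way, and so do you one line earlier. Hence $\int_{t_0}^{t_i}\mathbb{E}_{\rho_t}\|\boldsymbol{e}\|^2\,dt=(t_i-t_0)\mathcal{L}_{\text{AFM}}|_{[t_0,t_i]}$ already supplies the factor.

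Choosing $\epsilon^{-1}=t_i-t_0$ therefore counts the factor twice and adds a stray $e^{1}$. The resulting bound does not imply the theorem. With zero AFM loss and $t_i-t_0\to0$ it gives $e\,\mathcal{L}_{v_0}$, while the claim requires $2\mathcal{L}_{v_0}$. No $\kappa$ can fix this, since $e^{\kappa(t_i-t_0)}\to1$.

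Your fallback (integral form plus Cauchy--Schwarz) fails for the reason you suspect. Cauchy--Schwarz on the Lipschitz term produces $(t_i-t_0)L_\theta^2\int D$. Gr\"onwall then gives an exponent quadratic in $t_i-t_0$, i.e.\ a $\kappa$ depending on the horizon, which the statement forbids.

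\textbf{The fix.} Take $\epsilon=1$; this is exactly the paper's Young step with weights $\tfrac12,\tfrac12$. Along each coupled pair, $\tfrac{d}{dt}D\le(2+2L_\theta)D+\|\boldsymbol{e}(\boldsymbol{s}(t),t)\|^2$. Apply Gr\"onwall pathwise and then take expectations, using $\mathbb{E}D(t_0)=\mathcal{L}_{v_0}$ and $\boldsymbol{s}(t)\sim\rho_t$. This gives
\[
\mathcal{W}_2^2(\hat\rho^{(\text{pos})}_{t_i},\mu_i^{(\text{pos})})\le e^{(2L_\theta+2)(t_i-t_0)}\bigl(\mathcal{L}_{v_0}+(t_i-t_0)\mathcal{L}_{\text{AFM}}|_{[t_0,t_i]}\bigr),
\]
which is the theorem with $\kappa=2L_\theta+2$ and without the factor $2$.

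\textbf{Comparison with the paper.} Once repaired, your route differs from the paper's and is slightly sharper.
\begin{itemize}
\item The paper inserts the bridging measure $\nu_t=(\boldsymbol{\phi}^{\boldsymbol{\theta}}_t)_{\#}\rho_{t_0}$.
\item It runs one Gr\"onwall argument for identical initial states under the true and learned fields.
\item It runs a second one for two initial laws under the learned field, bounding the initial $\mathcal{W}_2^2$ by $\mathcal{L}_{v_0}$ through the identity coupling in position.
\item It combines the two via $(a+b)^2\le 2a^2+2b^2$, which is where its factor $2$ comes from.
\item Its $\kappa=2L_\theta+3$ comes from bounding the transport term $2\langle\hat{\boldsymbol{x}}-\boldsymbol{x},\hat{\boldsymbol{v}}-\boldsymbol{v}\rangle$ by $2D$ rather than $D$, as you do.
\end{itemize}
Your single synchronous coupling handles both error sources in one estimate. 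The paper's split keeps the field error and the initial-velocity error separate, at the cost of that constant. Both arguments rely on the same standing assumption: the exact field has a flow map with $\rho_t=(\boldsymbol{\phi}_t)_{\#}\rho_{t_0}$ whose position marginals at $t_i$ are $\mu_i^{(\text{pos})}$.
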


\label{app:w2_bound}

\begin{proof}

We address the problem directly within the augmented space $\mathcal{X} \times \mathcal{V}$. Let the system state be denoted by $\boldsymbol{s} = [\boldsymbol{x}, \boldsymbol{v}]^{T}$, and let $\mu_{i}(\boldsymbol{s})$ represent the data distribution in this space. We define a flow map $\boldsymbol{\phi}_{t}:[t_{0}, t_{K}] \times \mathcal{X} \times \mathcal{V} \rightarrow \mathcal{X} \times \mathcal{V}$, which is governed by the exact marginal acceleration field $\boldsymbol{a}(\boldsymbol{s},t)$ defined in \cref{eq:marginal_acc_field}. 

Using $(\boldsymbol{\phi}_{t})_{\#}$ to denote the push-forward operator, the time-dependent density evolves as $\rho_t=(\boldsymbol{\phi}_t)_{\#}\rho_{t_0}$, where $\rho_{t_0}=\mu_0$. Similarly, let $\boldsymbol{\phi}^{\boldsymbol{\theta}}_t$ be the flow map induced by the learned acceleration field $\boldsymbol{a}_{\boldsymbol{\theta}}(\boldsymbol{s}, t)$. The generated distribution $\hat{\rho}_t$ then evolves according to $\hat{\rho}_t =(\boldsymbol{\phi}^{\boldsymbol{\theta}}_t)_{\#}\hat{\rho}_{t_0}$. It is important to note that $\hat{\rho}_{t_0} \neq \rho_{t_0}$, as the conditional distribution of the initial velocity is approximated via a neural network.

We introduce a bridging measure $\nu_t = (\bm{\phi}^{\theta}_{t})_{\#}\rho_{t_0}$, then the Wasserstein-2 distance  between joint distributions can be decomposed as
\begin{equation}
    \mathcal{W}_2^2(\rho_t, \hat{\rho}_t)\leq\left( {\mathcal{W}_{2}(\rho_t, \nu_t)}+{\mathcal{W}_2(\nu_t, \hat{\rho}_t)}\right)^2\leq 2\left(\mathcal{W}_{2}^2(\rho_t, \nu_t)+\mathcal{W}_2^2(\nu_t, \hat{\rho}_t)\right)\\
\end{equation}

\textbf{Estimation of Term 1}: Define $Q_t = \int \mu_0(\bm{s}_0)\|\bm{\phi}_t^{\theta}(\bm{s}_0)-\bm{\phi}_{t}(\bm{s}_0)\|^2\mathrm{d}\bm{s}_0, $here $\mu_{0}=\rho_{t_0}$ according to definition. Noting that $Q_{t_0}=0$ since the starting points are identical. Define $A = \begin{bmatrix} 0 & I \\ 0 & 0 \end{bmatrix}$.

Taking the time derivative, we have:
\begin{align}
\frac{\mathrm{d}Q_t}{\mathrm{d}t} &= 2\int \mu_0(\bm{s}_0)(\bm{\phi}_t^{\theta}(\bm{s}_0)-\bm{\phi}_t(\bm{s}_0))^T \left(\frac{\mathrm{d}\bm{\phi}_t^{\theta}(\bm{s}_0)}{\mathrm{d}t}-\frac{\mathrm{d}\bm{\phi}_t(\bm{s}_0)}{\mathrm{d}t}\right)\mathrm{d}\bm{s}_0  \nonumber\\
&= 2\int \mu_0(\bm{s}_0)(\bm{\phi}_t^{\theta}(\bm{s}_0)-\bm{\phi}_t(\bm{s}_0))^T \left(A\bm{\phi}_t^{\theta}(\bm{s}_0)-A\bm{\phi}_t(\bm{s}_0)+
\begin{bmatrix}
    \bm{0}\\ \bm{a}_{\theta}(\bm{\phi}_t^{\theta}(\bm{s}_0), t)
\end{bmatrix}-
\begin{bmatrix}
    \bm{0}\\ \bm{a}(\bm{\phi}_t(\bm{s}_0), t)
\end{bmatrix}\right)\mathrm{d}\bm{s}_0 \nonumber\\
&= 2\int \mu_0(\bm{s}_0) \Bigg[ (\bm{\phi}_t^{\theta}(\bm{s}_0)-\bm{\phi}_t(\bm{s}_0))^T A (\bm{\phi}_t^{\theta}(\bm{s}_0)-\bm{\phi}_t(\bm{s}_0)) \nonumber\\
&\quad + (\bm{\phi}_t^{\theta}(\bm{s}_0)-\bm{\phi}_t(\bm{s}_0))^T \begin{bmatrix}
    \bm{0}\\ \bm{a}_{\theta}(\bm{\phi}_t^{\theta}(\bm{s}_0), t)-\bm{a}_{\theta}(\bm{\phi}_t(\bm{s}_0), t)
\end{bmatrix} \nonumber \\
&\quad + (\bm{\phi}_t^{\theta}(\bm{s}_0)-\bm{\phi}_t(\bm{s}_0))^T \begin{bmatrix}
    \bm{0}\\ \bm{a}_{\theta}(\bm{\phi}_t(\bm{s}_0), t)-\bm{a}(\bm{\phi}_t(\bm{s}_0), t)
\end{bmatrix} \Bigg] \mathrm{d}\bm{s}_0
\end{align}

We bound the three terms inside the integral separately. Let $\Delta \bm{\phi}_t(\bm{s}_0) = \bm{\phi}_t^{\theta}(\bm{s}_0)-\bm{\phi}_t(\bm{s}_0)$.
\begin{enumerate}
    \item Since $\|A\| \le 1$ , we have $\Delta \bm{\phi}_t(\bm{s}_0)^T A \Delta \bm{\phi}_t (\bm{s}_0)\le \|\Delta \bm{\phi}_t(\bm{s}_0)\|^2$.
    \item Using the Lipschitz continuity of the network $\bm{a}_{\theta}$ with constant $L_{\theta}$:
  
    \begin{align}
    \Delta \bm{\phi}_t(\bm{s}_0)^T \begin{bmatrix} \bm{0} \\ \bm{a}_{\theta}(\bm{\phi}_t^{\theta}(\bm{s}_0), t)-\bm{a}_{\theta}(\bm{\phi}_t(\bm{s}_0), t) \end{bmatrix} 
    &\le \|\Delta \bm{\phi}_t(\bm{s}_0)\| \cdot \|\bm{a}_{\theta}(\bm{\phi}_t^{\theta}(\bm{s}_0), t)-\bm{a}_{\theta}(\bm{\phi}_t(\bm{s}_0), t)\| \notag \\
    &\le L_{\theta} \|\Delta \bm{\phi}_t(\bm{s}_0)\|^2.
\end{align}
    \item Using Cauchy-Schwarz and the inequality $2xy \le x^2 + y^2$:
    \begin{align}
    \Delta \bm{\phi}_t(\bm{s}_0)^T \begin{bmatrix} \bm{0} \\ \bm{a}_{\theta}(\bm{\phi}_t(\bm{s}_0), t)-\bm{a}(\bm{\phi}_t(\bm{s}_0), t) \end{bmatrix} &\le \|\Delta \bm{\phi}_t(\bm{s}_0)\| \cdot\|\bm{a}_{\theta}(\bm{\phi}_t(\bm{s}_0), t)-\bm{a}(\bm{\phi}_t(\bm{s}_0), t)\| \nonumber\\
    &\le \frac{1}{2}\|\Delta \bm{\phi}_t(\bm{s}_0)\|^2 + \frac{1}{2}\|\bm{a}_{\theta}(\bm{\phi}_t(\bm{s}_0), t)-\bm{a}(\bm{\phi}_t(\bm{s}_0), t)\|^2.
    \end{align}
\end{enumerate}

Substituting these bounds back into the derivative:
\begin{align}
\frac{\mathrm{d}Q_t}{\mathrm{d}t} &\le 2\int \mu_0(\bm{s}_0) \left[ (1 + L_{\theta} + \frac{1}{2}) \|\Delta \bm{\phi}_t(\bm{s}_0)\|^2 + \frac{1}{2} \|\bm{a}_{\theta}(\bm{\phi}_t(\bm{s}_0), t)-\bm{a}(\bm{\phi}_t(\bm{s}_0), t)\|^2 \right] \mathrm{d}\bm{s}_0 \nonumber\\
&= (2L_{\theta} + 3) Q_t + \int \mu_0(\bm{s}_0) \|\bm{a}_{\theta}(\bm{\phi}_t(\bm{s}_0), t)-\bm{a}(\bm{\phi}_t(\bm{s}_0), t)\|^2 \mathrm{d}\bm{s}_0
\end{align}

By Gronwall's inequality, since $Q_{t_0}=0$:
\begin{equation}
Q_t \le e^{(2L_\theta+3)(t-t_0)} \int_{t_0}^{t}  \left( \int \mu_0(\bm{s}_0) \|\bm{a}_{\theta}(\bm{\phi}_{\tau}(\bm{s}_0), \tau)-\bm{a}(\bm{\phi}_{\tau}(\bm{s}_0), \tau)\|^2 \mathrm{d}\bm{s}_0 \right) \mathrm{d}\tau
\end{equation}
By the definition of Wasserstein-2 distance, we have:
\begin{align}t_0
\mathcal{W}_{2}^{2} (\rho_{t}, \nu_{t}) &= \min_{\pi} \int \|\bm{s}_1-\bm{s}_2\|^2\mathrm{d}\pi\quad\text{s.t.} \int \pi\mathrm{d}\boldsymbol{s}_{1} =\rho_{t}, \int \pi \mathrm{d}\boldsymbol{s}_{2} =\nu_t
\nonumber\\ 
&\leq  \int \mu_{0}(\boldsymbol{s}_{0}) \| \boldsymbol{\phi}^{\boldsymbol{\theta}}_{t} (\boldsymbol{s}_{0}) - \boldsymbol{\phi}_{t} (\boldsymbol{s}_{0})  \|^{2} \mathrm{d} \boldsymbol{s}_{0}\nonumber\\
& = Q_t \le e^{(2L_\theta+3)(t-t_0)} \int_{t_0}^{t} \mathbb{E}_{\bm{s} \sim \mu_{\tau}} \|\bm{a}_{\theta}(\bm{s}, \tau)-\bm{a}(\bm{s}, \tau)\|^2 \mathrm{d}\tau\\  
&= e^{(2L_\theta+3)(t-t_0)}(t-t_0)\mathcal{L}_{\text{AFM}}|_{[t_0, t]}. 
\end{align}

\textbf{Estimation of Term 2}:Take $\pi_0(\bm{s}_0, \bm{s}_0^{\xi})$ as the optimal coupling that attains $\mathcal{W}_2^2(\nu_{t_0}, \hat{\rho}_{t_0}) = \int \|\bm{s}_0-\bm{s}_{0}^{\xi}\|^2\mathrm{d}\pi_0(\bm{s_0}, \bm{s}_0^{\xi})$. Note that $\nu_{t_0} = \mu_0$ (the true data distribution).

From a pair of starting points $(\bm{s}_0, \bm{s}_0^{\xi})\sim\pi_0$, define the square distance between trajectories under the same approximate flow as $\Delta(t)=\|\bm{\phi}_t^{\theta}(\bm{s}_0)-\bm{\phi}_t^{\theta}(\bm{s}_0^{\xi})\|^2$. Taking the time derivative, we have:
\begin{align}
    \frac{\mathrm{d}\Delta(t)}{\mathrm{d}t} &=  2{(\bm{\phi}_t^{\theta}(\bm{s}_0)-\bm{\phi}_t^{\theta}(\bm{s}_0^{\xi}))^T}
    \left( \frac{\mathrm{d}\bm{\phi}_t^{\theta}(\bm{s}_0)}{\mathrm{d}t} - \frac{\mathrm{d}\bm{\phi}_t^{\theta}(\bm{s}_0^{\xi})}{\mathrm{d}t} \right) \nonumber\\
    &= 2{(\bm{\phi}_t^{\theta}(\bm{s}_0)-\bm{\phi}_t^{\theta}(\bm{s}_0^{\xi}))^T}\left( A(\bm{\phi}_t^{\theta}(\bm{s}_0)-\bm{\phi}_t^{\theta}(\bm{s}_0^{\xi})) + \begin{bmatrix} \bm{0} \\ \bm{a}_{\theta}(\bm{\phi}_t^{\theta}(\bm{s}_0), t) - \bm{a}_{\theta}(\bm{\phi}_t^{\theta}(\bm{s}_0^{\xi}), t) \end{bmatrix} \right) \nonumber\\
    &\le 2\left( \|A\| \cdot \|\bm{\phi}_t^{\theta}(\bm{s}_0)-\bm{\phi}_t^{\theta}(\bm{s}_0^{\xi})\|^2 + \|\bm{\phi}_t^{\theta}(\bm{s}_0)-\bm{\phi}_t^{\theta}(\bm{s}_0^{\xi})\| \cdot \|\bm{a}_{\theta}(\bm{\phi}_t^{\theta}(\bm{s}_0), t) - \bm{a}_{\theta}(\bm{\phi}_t^{\theta}(\bm{s}_0^{\xi}), t)\| \right) \nonumber\\
    &\le 2(1 + L_\theta) \Delta(t).
\end{align}
Again by Gronwall's inequality, we have:
\begin{equation}
    \Delta(t) \le e^{2(1+L_\theta)(t-t_0)} \Delta(t_0) = e^{2(1+L_\theta)(t-t_0)} \|\bm{s}_0 - \bm{s}_0^{\xi}\|^2.
\end{equation}
Since $\nu_t = (\bm{\phi}_t^{\theta})_{\#} \nu_{t_0}$ and $\hat{\rho}_{t_0} = (\bm{\phi}_t^{\theta})_{\#} \hat{\rho}_{t_0}$, the push-forward of the optimal coupling $\pi_t = (\bm{\phi}_t^{\theta}, \bm{\phi}_t^{\theta})_{\#} \pi_0$ is a valid coupling for $(\nu_t, \hat{\rho}_t)$.
Integrate the squared distance over all starting pairs with respect to $\pi_0$, we obtain:
\begin{align}
    \mathcal{W}_2^2(\nu_t, \hat{\rho}_t) &\le \int \|\bm{\phi}_t^{\theta}(\bm{s}_0)-\bm{\phi}_t^{\theta}(\bm{s}_0^{\xi})\|^2 \mathrm{d}\pi_0(\bm{s}_0, \bm{s}_0^{\xi}) \nonumber\\
    &\le \int e^{2(1+L_\theta)(t-t_0)} \|\bm{s}_0 - \bm{s}_0^{\xi}\|^2 \mathrm{d}\pi_0(\bm{s}_0, \bm{s}_0^{\xi}) \nonumber\\
    &= e^{2(1+L_\theta)(t-t_0)} \mathcal{W}_2^2(\nu_{t_0}, \hat{\rho}_{t_0})
\end{align}
Let $\tilde{\pi}(\bm{s_0}, \bm{s}_0^{\xi})$ be the coupling between $\nu_0$ and $\hat{\rho}_0$ induced by the identity mapping on the position space, i.e., pairing $\boldsymbol{s}_0=(\boldsymbol{x}_0, \boldsymbol{v}_0)$ with $\boldsymbol{s}_0^{\xi}=(\boldsymbol{x}_0, \boldsymbol{v}_0^{\xi})$. 
We can bound Wasserstein distance by $L_2$ loss:
\begin{align}
\mathcal{W}_2^2(\nu_{t_0}, \hat{\rho}_{t_0})
&\leq \int\|\bm{s}_0-\bm{s}_0^{\xi}\|^2\mathrm{d}\tilde{\pi}(\bm{s}_0, \bm{s}_0^{\xi})\nonumber \\
& =\int \left(\|\bm{x}_0-\bm{x}_0^{\xi}\|^2+\|\bm{v}_0-\bm{v}_0^{\xi}\|^2\right)\mathrm{d}\tilde{\pi}(\bm{x}_0, \bm{v}_0, \bm{x}_0^{\xi}, \bm{v}_0^{\xi})\nonumber \\
&=\int \|\bm{v}_0-\bm{v}^{\xi}_0\|^2\mathrm{d}\tilde{\pi}(\bm{x}_0, \bm{v}_0, {\bm{x}}_0^{\xi}, {\bm{v}}_0^{\xi})=
\mathcal{L}_{v_0}
\end{align}

Thus $\mathcal{W}_2^2(\nu_t, \hat{\rho}_t)\leq e^{2(1+L_\theta)(t-t_0)}\mathcal{L}_{v_0}.$

Combining the bounds above, we have

\begin{align}
\mathcal{W}_2^2(\hat{\rho}_{t_i}, \rho_{t_i}) 
&\leq 2\left[\mathcal{W}_2^2(\hat{\rho}_{t_i}, \nu_{t_i})+\mathcal{W}_2^2(\nu_{t_i}, \rho_{t_i})\right] \notag \\
&\leq 2\left[e^{(2L_\theta+3)(t_i-t_0)}\cdot (t_i-t_0)\mathcal{L}_{\text{AFM}}|_{[t_0, t_i]} + e^{(2L_\theta+2)(t_i-t_0)}\mathcal{L}_{v_0}\right].
\end{align}

Define $\kappa={(2L_\theta+3)}$,then the final bound is obtained by:
\begin{align}
\mathcal{W}_{2}^2(\hat \rho^{\text{(pos)}}_{t_i}, \rho^{(\text{pos})}_{t_i} = \mu_i^{\text{(pos)}})
& =\min_{\pi_{\boldsymbol{x}}} \int \|\boldsymbol{x}_{1} - \boldsymbol{x}_{2} \|^2 \mathrm{d} \pi_{\boldsymbol{x}}(\boldsymbol{x}_{1} , \boldsymbol{x}_{2}) \nonumber 
\quad \text{s.t.} \int \pi_{\boldsymbol{x}} \mathrm{d}\boldsymbol{x}_{2} =\rho^{(\text{pos})}_{t_i}, \quad \int \pi_{\boldsymbol{x}}\mathrm{d}\boldsymbol{x}_{1}  =\hat \rho_{t_i}^{\text{(pos)}} \nonumber \\
&\le \min_{\pi} \int \|\boldsymbol{x}_{1} - \boldsymbol{x}_{2} \|^2 \mathrm{d} \pi(\boldsymbol{s}_{1} , \boldsymbol{s}_{2}) 
\quad \text{s.t.} \int \pi\mathrm{d}\boldsymbol{s}_{2}  =\rho_{t_i}, \quad \int \pi\mathrm{d}\boldsymbol{s}_{1}  =\hat \rho_{t_i} \nonumber \\
&\le \min_{\pi} \int \| \boldsymbol{s}_{1} - \boldsymbol{s}_{2} \|^2 \mathrm{d} \pi(\boldsymbol{s}_{1}, \boldsymbol{s}_{2}) \label{eq:norm_ineq} \\
&=\mathcal{W}_2^2(\hat{\rho}_{t_i}, \rho_{t_i}) 
\leq 2e^{\kappa(t_i-t_0)}\left((t_i- t_0)\mathcal{L}_{\text{AFM}}|_{[t_0, t_i]}+\mathcal{L}_{v_0}\right).
\label{eq:final_bound}
\end{align}

This completes the proof. Note that our derivation holds even for multi-valued or stochastic cases, since the upper bound on the Wasserstein distance is established via a valid coupling, which remains well-defined regardless of whether the mapping is deterministic.

\end{proof}

\section{Datasets and Evaluation Metric}

\label{app:datasets_and_eval_metric}

\subsection{Experiment Setup}

\label{app:experimental setup}

The experiments were conducted on a shared high-performance computing (HPC) cluster. All experimental runs utilized GPUs equipped with 80GB of memory. Both the network $\boldsymbol{a}_{\theta}(\boldsymbol{x}, \boldsymbol{v}, t)$, employed to approximate acceleration, and the network $\boldsymbol{v}_{\phi}(\boldsymbol{x}, \boldsymbol{v}, t)$, employed to approximate velocity, utilize residual architectures. Specifically, $\boldsymbol{a}_{\theta}(\boldsymbol{x}, \boldsymbol{v}, t)$ consists of 4 hidden layers, while $\boldsymbol{v}_{\phi}(\boldsymbol{x}, \boldsymbol{v}, t)$ comprises 2 hidden layers. The Optimal Transport (OT) calculations were implemented using the Python Optimal Transport (POT) library \citep{flamary2021pot}, and the training process of neural networks were implemented using Pytorch \citep{pytorch}.

In all experiments, TracingFlow employs a completely consistent set of hyperparameters. When solving for OT using the Sinkhorn algorithm, the regularization coefficient is set to $\epsilon = 1\times 10^{-3}$. When incorporating biological priors, the penalty coefficient for barcode mismatches is set to $p_{0}=25$. Since the number of cells at each time point in the datasets processed in this paper is on the order of $10^{3}$, we do not utilize mini-batch OT; instead, we compute the Transport Plan on the full dataset.

\subsection{Evaluation Metrics}

\label{app:evaluation metrics}

We employ the 1-Wasserstein Distance ($\mathcal{W}_{1}$) and the 2-Wasserstein Distance ($\mathcal{W}_{2}$) to quantify the discrepancy between the predicted distribution and the ground truth distribution. These metrics are defined as follows:
\begin{equation}
    \mathcal{W}_{1}(\mu, \nu) = \min_{\pi \in \Pi(\mu, \nu)} \int \|\boldsymbol{x} - \boldsymbol{y}\| \mathrm{d} \pi(\boldsymbol{x}, \boldsymbol{y})
\end{equation}
\begin{equation}
    \mathcal{W}_{2}(\mu, \nu) = \left( \min_{\pi \in \Pi(\mu, \nu)} \int \|\boldsymbol{x} - \boldsymbol{y}\|_{2}^{2} \mathrm{d} \pi(\boldsymbol{x}, \boldsymbol{y}) \right)^{1/2}
\end{equation}
For the evaluation of datasets containing lineage information, we utilize the lineage-weighted $\mathcal{W}_{1}$ and $\mathcal{W}_{2}$ distances. These are defined based on the decomposition of the distributions. Consider the distribution generated by the model, denoted as $\mu(\boldsymbol{x})$, and the reference distribution provided by the dataset, denoted as $\nu(\boldsymbol{x})$. Both consist of $L$ lineages and can be decomposed as:
\begin{equation}
    \mu(\boldsymbol{x}) = \sum_{i=1}^{L} w_{i} \mu^{(i)}(\boldsymbol{x}), \quad \nu(\boldsymbol{x}) = \sum_{i=1}^{L} v_{i} \nu^{(i)}(\boldsymbol{x})
\end{equation}
where $\mu^{(i)}(\boldsymbol{x})$ and $\nu^{(i)}(\boldsymbol{x})$ represent the distributions of the $i$-th lineage within $\mu(\boldsymbol{x})$ and $\nu(\boldsymbol{x})$, respectively, satisfying $\int \mu^{(i)}(\boldsymbol{x}) \mathrm{d}\boldsymbol{x} = 1$ and $\int \nu^{(i)}(\boldsymbol{x}) \mathrm{d}\boldsymbol{x} = 1$. The terms $w_{i}$ and $v_{i}$ denote the weights of the $i$-th lineage in $\mu(\boldsymbol{x})$ and $\nu(\boldsymbol{x})$, respectively, subject to the constraint $\sum_{i=1}^{L} w_{i} = \sum_{i=1}^{L} v_{i} = 1$. The lineage-weighted $\mathcal{W}_{1}$ and $\mathcal{W}_{2}$ distances are defined as:
\begin{equation}
    \mathcal{W}_{1}^{(L)}(\mu \| \nu) = \sum_{i=1}^{L} v_{i} \mathcal{W}_{1}(\mu^{(i)}, \nu^{(i)}), \quad \mathcal{W}_{2}^{(L)}(\mu \| \nu) = \sum_{i=1}^{L} v_{i} \mathcal{W}_{2}(\mu^{(i)}, \nu^{(i)})
\end{equation}
In essence, we first calculate the $\mathcal{W}_{1}$ and $\mathcal{W}_{2}$ distances between the distributions of corresponding lineages in the two datasets. Subsequently, we compute a weighted average of these distances using the proportions of each lineage given by the reference distribution $\nu(\boldsymbol{x})$. It is important to note that the lineage-weighted $\mathcal{W}_{1}$ and $\mathcal{W}_{2}$ metrics serve solely as indicators of the model's ability to preserve lineage prior information while learning the dynamics. They are not true distance metrics in the mathematical sense, as they are evidently asymmetric with respect to $\mu$ and $\nu$.

\subsection{Datasets}

\label{app:datasets}

\vspace{0.5em}

\noindent \textbf{3D Simulation Lineage Data.} Consider the following dynamics with three lineage barcodes and a twist structure:    

\begin{equation}
\left\{
\begin{aligned}
dX_{t}^{(0)} &= v \, dt + \sigma_{x} dB_{t}^{X} \\
dY_{t}^{(0)} &=          \sigma_{y} dB_{t}^{Y} \\
dZ_{t}^{(0)} &=  \sigma_{z} dB_{t}^{Z}
\end{aligned}
\right.
\quad
\left\{
\begin{aligned}
dX_{t}^{(1)} &= v \, dt                + \sigma_{x} dB_{t}^{X} \\
dY_{t}^{(1)} &= -\mu Z_{t}^{(1)} \, dt + \sigma_{y} dB_{t}^{Y} \\
dZ_{t}^{(1)} &= \mu  Y_{t}^{(1)} \, dt + \sigma_{z} dB_{t}^{Z}
\end{aligned}
\right.
\quad
\left\{
\begin{aligned}
dX_{t}^{(2)} &= v \, dt                 + \sigma_{x} dB_{t}^{X} \\
dY_{t}^{(2)} &= \mu  Z_{t}^{(2)} \, dt  + \sigma_{y} dB_{t}^{Y} \\
dZ_{t}^{(2)} &= - \mu Y_{t}^{(2)} \, dt + \sigma_{z} dB_{t}^{Z}
\end{aligned}
\right.
\end{equation}
We take $v=3.0, \mu=3.14, \sigma_x=\sigma_y=\sigma_z=0.1$ and time points $t=0, 1, 2$. Initialize $(X_0^{(0)}, Y_0^{(0)}, Z_0^{(0)})\overset{}{\sim}\mathcal{N}([0, 0, 0], 0.5I)$, $(X_0^{(1)}, Y_0^{(1)}, Z_0^{(1)})\overset{}{\sim}\mathcal{N}([0,-2,0], 0.5I), (X_0^{(2)}, Y_0^{(2)}, Z_0^{(2)}){\sim}\mathcal{N}([0,2,0], 0.5I)$ and sample $500$ particles for each.

\begin{figure}[hbt!]
    \centering
    \begin{subfigure}{0.32\linewidth}
        \centering
        \includegraphics[width=\linewidth]{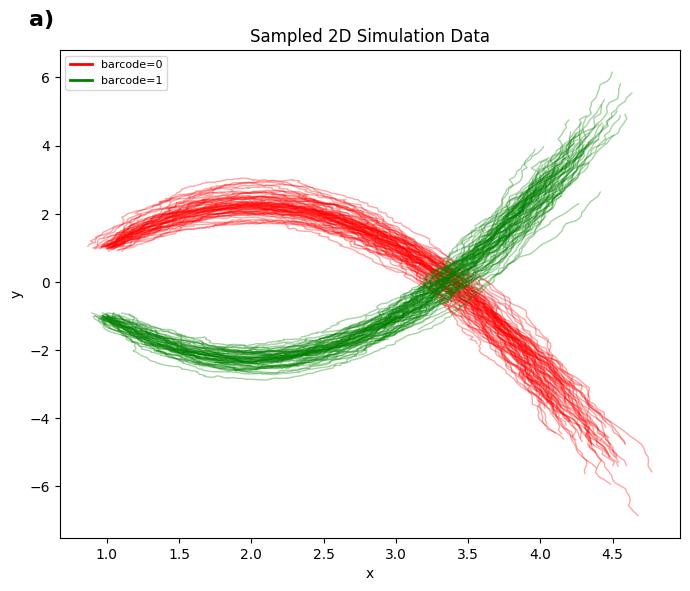}
        
        \label{2D}
    \end{subfigure}%
    \begin{subfigure}{0.6\linewidth}
        \centering
        \includegraphics[width=\linewidth]{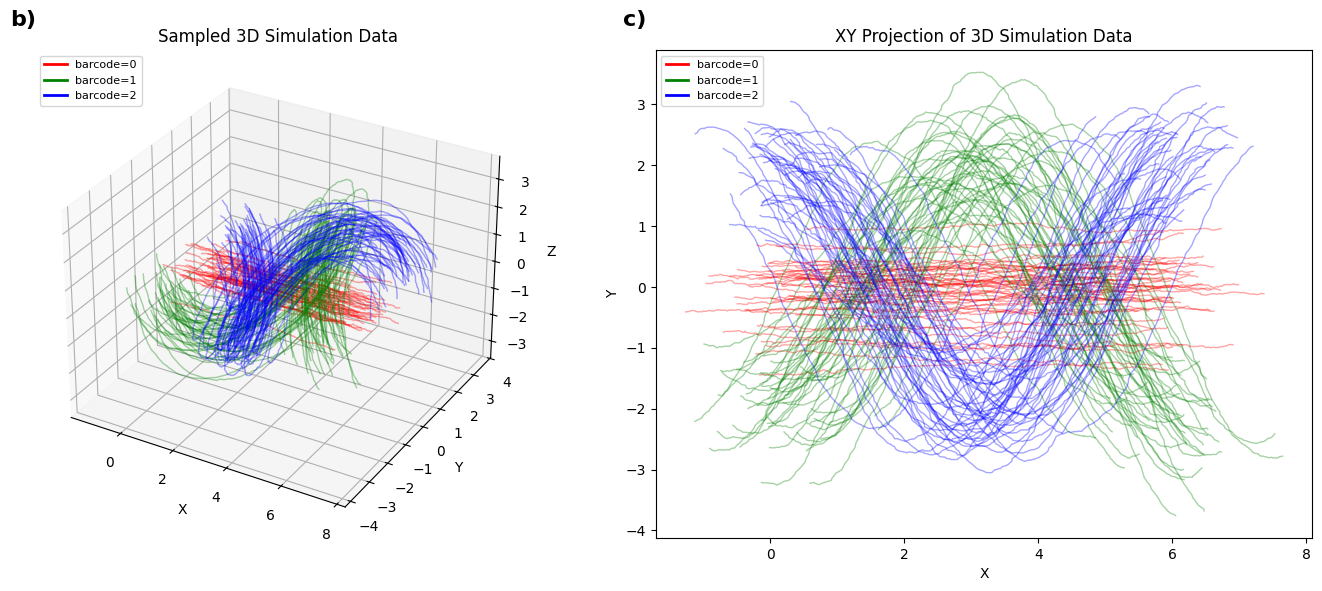}
        
        \label{3D}
    \end{subfigure}
    
    \caption{Dynamics of simulation datasets. (a)2D simulation data, (b)3D simulation data and (c)3D simulation data projected to XY plane. }
    \label{fig:2D_3D_combined}
\end{figure}

\vspace{0.5em}

\noindent \textbf{Cite Data.} We utilized the Cite-seq dataset introduced in \cite{lance2022multimodal}, which comprises 31,240 cells collected across 4 distinct time points. To evaluate the performance of TracingFlow on real-world biological data, we applied Principal Component Analysis (PCA) to reduce the dimensionality of the gene expression profiles to 100 and 5 dimensions, respectively.

\vspace{0.5em}

\noindent \textbf{Embryoid Bodies Data.} We employed the Embryoid Bodies (EB) dataset from \citep{moon2019visualizing}, consisting of 16,819 cells sampled at 5 time points. We utilized PCA to reduce the dimensionality of the gene expression data to 5 dimensions.

\vspace{0.5em}

\noindent \textbf{Hematopoiesis Data.} We employed the Hematopoiesis dataset from \citep{weinreb2020lineage}, consisting of 130,887 cells sampled at 3 time points. Part of the cells are recorded with barcodes. We selected 22,329 cells with barcodes from the dataset and  utilized PCA to reduce the dimensionality of gene expression data to 50 dimensions. When processing this dataset, we filtered for data where barcodes were present at all three time points. The barcodes for the remaining data were set to undefined, incurring no additional transport cost during the incorporation of biological priors (\cref{article:lineage}).

\noindent \textbf{Mexico Gulf Data} To assess the capability and robustness of Tracing Flow in capturing long-term system dynamics, we utilize the Mexico Gulf dataset, following \cite{theodoropoulos2025momentum}. This two-dimensional dataset comprises 9 time points.

\section{Experiment Details}

\label{app:experiments}

\subsection{Experiment Details across various datasets}

\label{app:exp_on_datasets}

\noindent \textbf{Simulation 2D Data}    \cref{tab:APP_2D_Simlulation}  presents the performance of TracingFlow and other algorithms on the 2D Simulation dataset. On average, TracingFlow reconstructs the data distribution at each time point with higher accuracy. 

\begin{table}[ht]
    \centering
    \caption{\(\mathcal{W}_{1}\) and \(\mathcal{W}_{2}\) distances between the generated and ground-truth distributions at each time point on the 2D Simulation dataset.}
    \label{tab:APP_2D_Simlulation}
    \resizebox{\linewidth}{!}{ 
    \begin{tabular}{llcccccccc}
        \toprule
        \multirow{2}{*}{\textbf{Method}} & \multirow{2}{*}{\textbf{Dynamics}} & \multicolumn{2}{c}{\textbf{\(t=1\)}} & \multicolumn{2}{c}{\textbf{\(t=2\)}} & \multicolumn{2}{c}{\textbf{\(t=3\)}} & \multicolumn{2}{c}{\textbf{\(t=4\)}} \\
        \cmidrule(lr){3-4} \cmidrule(lr){5-6} \cmidrule(lr){7-8} \cmidrule(lr){9-10}
         & & \(\mathcal{W}_{1}\) & \(\mathcal{W}_{2}\) & \(\mathcal{W}_{1}\) & \(\mathcal{W}_{2}\) & \(\mathcal{W}_{1}\) & \(\mathcal{W}_{2}\) & \(\mathcal{W}_{1}\) & \(\mathcal{W}_{2}\) \\
        \midrule
        OT-CFM & \multirow{2}{*}{1st Order} & \textbf{0.1215} {\scriptsize \(\pm\) 0.0164} & \textbf{0.1441} {\scriptsize \(\pm\) 0.0164} & 0.3644 {\scriptsize \(\pm\) 0.1177} & 0.4143 {\scriptsize \(\pm\) 0.1168} & 0.5863 {\scriptsize \(\pm\) 0.2615} & 0.6238 {\scriptsize \(\pm\) 0.2657} & 1.0356 {\scriptsize \(\pm\) 0.2325} & 1.1867 {\scriptsize \(\pm\) 0.2890} \\
        SF\(^2\)M & & 0.2389 {\scriptsize \(\pm\) 0.1217} & 0.2668 {\scriptsize \(\pm\) 0.1305} & 0.5700 {\scriptsize \(\pm\) 0.2574} & 0.6108 {\scriptsize \(\pm\) 0.2654} & 1.0630 {\scriptsize \(\pm\) 0.2769} & 1.3557 {\scriptsize \(\pm\) 0.2770}& 2.0986 {\scriptsize \(\pm\) 1.3198} & 2.2938 {\scriptsize \(\pm\) 1.3193} \\
        \midrule
        3MSBM & \multirow{5}{*}{2nd Order} & 1.4029 {\scriptsize \(\pm\) 0.6892} & 1.4459 {\scriptsize \(\pm\) 0.6596} & 0.9318 {\scriptsize \(\pm\) 0.2553} & 1.0181 {\scriptsize \(\pm\) 0.2050} & 1.8079 {\scriptsize \(\pm\) 0.7761} & 1.9162 {\scriptsize \(\pm\) 0.7529} & \textbf{0.5993} {\scriptsize \(\pm\) 0.6176} & \textbf{0.7265} {\scriptsize \(\pm\) 0.6935} \\
        MMFM & & 1.3226 {\scriptsize \(\pm\) 0.1774} & 1.9287 {\scriptsize \(\pm\) 0.3002} & 1.8387 {\scriptsize \(\pm\) 0.5624} & 1.1779 {\scriptsize \(\pm\) 0.5114} & 1.8158 {\scriptsize \(\pm\) 0.1794} & 1.8740 {\scriptsize \(\pm\) 0.1701} & 4.4984 {\scriptsize \(\pm\) 0.2937} & 4.6059 {\scriptsize \(\pm\) 0.2977} \\
        HRF & & 1.2036 {\scriptsize \(\pm\) 0.0425} & 1.2096 {\scriptsize \(\pm\) 0.0244} & 0.9955 {\scriptsize \(\pm\) 0.0921}& 1.0009 {\scriptsize \(\pm\) 0.0873}& 1.9829 {\scriptsize \(\pm\) 0.1311}& 2.0040 {\scriptsize \(\pm\) 0.1582}& 1.9740 {\scriptsize \(\pm\) 0.1873}& 2.4379 {\scriptsize \(\pm\) 0.2157}\\
        CAF & & 0.4051 {\scriptsize \(\pm\) 0.1712} & 0.5117 {\scriptsize \(\pm\) 0.1738} & 0.8857 {\scriptsize \(\pm\) 0.3630}& 0.8996 {\scriptsize \(\pm\) 0.3689} & 1.3274 {\scriptsize \(\pm\) 0.1194} & 1.3731 {\scriptsize \(\pm\) 0.1107} & 3.9358 {\scriptsize \(\pm\) 0.4572} & 3.7880 {\scriptsize \(\pm\) 0.4530}\\
        \textbf{TF(Ours)} & & 0.1609 {\scriptsize \(\pm\) 0.0232} & 0.1787 {\scriptsize \(\pm\) 0.0258} & \textbf{0.1753} {\scriptsize \(\pm\) 0.0299} & \textbf{0.2045} {\scriptsize \(\pm\) 0.0277} & \textbf{0.4091} {\scriptsize \(\pm\) 0.0523} & \textbf{0.4821} {\scriptsize \(\pm\) 0.0698} & {0.6404} {\scriptsize \(\pm\) 0.2661}& 1.1436 {\scriptsize \(\pm\) 0.1577}\\
        \bottomrule
    \end{tabular}
    }
\end{table}


\noindent \textbf{Cite 5D \& Cite 100D Data}  \ We present the distribution reconstruction performance of each algorithm on the Cite 5D and Cite 100D datasets in \cref{tab:exp_APP_cite_5D} and \cref{tab:exp_APP_cite_100D}, respectively. On these datasets TracingFlow achieves superior reconstruction accuracy at all time steps except for $t=3$. The trajectories learned by TracingFlow on these two datasets are illustrated in  \cref{fig:exp_APP_cite_5D_and_100D}.

\begin{table}[H]
    \centering
    \caption{\(\mathcal{W}_{1}\) and \(\mathcal{W}_{2}\) distances between the generated and ground-truth distributions at each time point on the Cite 5D dataset}
    \label{tab:exp_APP_cite_5D}
    \resizebox{\linewidth}{!}{ 
    \begin{tabular}{llcccccc}
        \toprule
        \multirow{2}{*}{\textbf{Method}} & \multirow{2}{*}{\textbf{Dynamics}} & \multicolumn{2}{c}{\textbf{\(t=1\)}} & \multicolumn{2}{c}{\textbf{\(t=2\)}} & \multicolumn{2}{c}{\textbf{\(t=3\)}} \\
        \cmidrule(lr){3-4} \cmidrule(lr){5-6} \cmidrule(lr){7-8}
         & & \(\mathcal{W}_{1}\) & \(\mathcal{W}_{2}\) & \(\mathcal{W}_{1}\) & \(\mathcal{W}_{2}\) & \(\mathcal{W}_{1}\) & \(\mathcal{W}_{2}\) \\
        \midrule
        OT-CFM & \multirow{2}{*}{1st Order} & 0.5615  {\scriptsize \(\pm\) 0.0156} & 0.6484  {\scriptsize \(\pm\) 0.0268} & 0.8472  {\scriptsize \(\pm\) 0.0463}& 0.9416  {\scriptsize \(\pm\) 0.0443} & 1.0536  {\scriptsize \(\pm\) 0.0534} & 1.1566  {\scriptsize \(\pm\) 0.0573} \\
        SF\(^2\)M & & 0.5932 {\scriptsize \(\pm\) 0.0249} & 0.6797 {\scriptsize \(\pm\) 0.0424} & 0.7768 {\scriptsize \(\pm\) 0.0451} & 0.8670 {\scriptsize \(\pm\) 0.0480} & 0.9251 {\scriptsize \(\pm\) 0.0279} & 1.0221 {\scriptsize \(\pm\) 0.0352} \\
        \midrule
        3MSBM & \multirow{5}{*}{2nd Order} & 5.3735  {\scriptsize \(\pm\) 0.3961} & 5.4244  {\scriptsize \(\pm\) 0.3582}& 4.3118  {\scriptsize \(\pm\) 0.4589} & 4.5458  {\scriptsize \(\pm\) 0.4141}& 1.2254  {\scriptsize \(\pm\) 0.2998}& 1.3427  {\scriptsize \(\pm\) 0.3307}\\
        MMFM & & 0.5335  {\scriptsize \(\pm\) 0.0134} & 0.5911  {\scriptsize \(\pm\) 0.0057} & 2.2290  {\scriptsize \(\pm\) 0.2151} & 2.4557  {\scriptsize \(\pm\) 0.1613} & 4.8413  {\scriptsize \(\pm\) 0.2538} & 5.2261  {\scriptsize \(\pm\) 0.4061} \\
        HRF & & 0.4225 {\scriptsize \(\pm\) 0.0152} & 0.4622 {\scriptsize \(\pm\) 0.0185}& 0.6227 {\scriptsize \(\pm\) 0.0421} & 0.6814 {\scriptsize \(\pm\) 0.0392}& \textbf{0.7638} {\scriptsize \(\pm\) 0.0738}& \textbf{0.8394} {\scriptsize \(\pm\) 0.0713}\\
        CAF & & 5.1724{\scriptsize \(\pm\) 0.0452} & 5.6242{\scriptsize \(\pm\) 0.0763} & 5.4333 {\scriptsize \(\pm\) 0.0914} & 5.4637 {\scriptsize \(\pm\) 0.1425} & 3.0710 {\scriptsize \(\pm\) 0.1853}& 3.1354 {\scriptsize \(\pm\) 0.2672}\\
        \textbf{TF (Ours)} & & \textbf{0.2682} {\scriptsize \(\pm\) 0.0241} & \textbf{0.2916}{\scriptsize \(\pm\) 0.0312} & \textbf{0.6094} {\scriptsize \(\pm\) 0.0615} & \textbf{0.6516} {\scriptsize \(\pm\) 0.0684}& {0.8808} {\scriptsize \(\pm\) 0.1208} & {0.9725} {\scriptsize \(\pm\) 0.1329}\\
        \bottomrule
    \end{tabular}
    }
\end{table}

\begin{table}[H]
    \centering
    \caption{$\mathcal{W}_{1}$ and $\mathcal{W}_{2}$ distances between the generated and ground-truth distributions at each time point on the Cite 100D dataset}
    \label{tab:exp_APP_cite_100D}
    \resizebox{\linewidth}{!}{ 
    \begin{tabular}{llcccccc}
        \toprule
        \multirow{2}{*}{\textbf{Method}} & \multirow{2}{*}{\textbf{Dynamics}} & \multicolumn{2}{c}{\textbf{$t=1$}} & \multicolumn{2}{c}{\textbf{$t=2$}} & \multicolumn{2}{c}{\textbf{$t=3$}} \\
        \cmidrule(lr){3-4} \cmidrule(lr){5-6} \cmidrule(lr){7-8}
         & & $\mathcal{W}_{1}$ & $\mathcal{W}_{2}$ & $\mathcal{W}_{1}$ & $\mathcal{W}_{2}$ & $\mathcal{W}_{1}$ & $\mathcal{W}_{2}$ \\
        \midrule
        OT-CFM & \multirow{2}{*}{1st Order} & 10.1136 {\scriptsize \(\pm\) 0.0104} & 10.1873 {\scriptsize \(\pm\) 0.0135} & 10.2504 {\scriptsize \(\pm\) 0.0209} & 10.3145 {\scriptsize \(\pm\) 0.0201} & {11.2682} {\scriptsize \(\pm\) 0.0938} & {11.3642} {\scriptsize \(\pm\) 0.0997} \\
        SF$^2$M & & 10.3598 {\scriptsize \(\pm\) 0.0051} & 10.4448 {\scriptsize \(\pm\) 0.0053} & 11.4647 {\scriptsize \(\pm\) 0.0384} & 11.5456 {\scriptsize \(\pm\) 0.0402} & 15.2032 {\scriptsize \(\pm\) 0.1813} & 15.4367 {\scriptsize \(\pm\) 0.1949} \\
        \midrule
        3MSBM & \multirow{5}{*}{2nd Order} & 19.9677{\scriptsize \(\pm\) 0.1373} & 20.1015 {\scriptsize \(\pm\) 0.1105} & 15.3218 {\scriptsize \(\pm\) 0.1232} & 15.4089 {\scriptsize \(\pm\) 0.1343}& 12.0229 {\scriptsize \(\pm\) 0.2895} & 12.1395 {\scriptsize \(\pm\) 0.3621} \\
        MMFM & & 9.8726 {\scriptsize \(\pm\) 0.0383} & 9.9305 {\scriptsize \(\pm\) 0.0361} & 12.1374 {\scriptsize \(\pm\) 0.2422} & 12.3076 {\scriptsize \(\pm\) 0.2495} & 15.5960 {\scriptsize \(\pm\) 1.4972} & 16.1222 {\scriptsize \(\pm\) 2.0195} \\
        HRF & & 9.9660 {\scriptsize \(\pm\) 0.0215}& 10.0580 {\scriptsize \(\pm\) 0.0341}& 9.9451 {\scriptsize \(\pm\) 0.0632}& 9.9986 {\scriptsize \(\pm\) 0.1526}& \textbf{10.5485} {\scriptsize \(\pm\) 0.1082}& \textbf{10.6442} {\scriptsize \(\pm\) 0.1254}\\
        CAF & & 18.1182 {\scriptsize \(\pm\) 0.3214}& 18.1971 {\scriptsize \(\pm\) 0.2845}& 17.4947 {\scriptsize \(\pm\) 0.4267}& 17.5767 {\scriptsize \(\pm\) 0.4912}& 14.6158 {\scriptsize \(\pm\) 0.5632}& 15.1773 {\scriptsize \(\pm\) 0.5215}\\
        \textbf{TF (Ours)} & & \textbf{2.8955} {\scriptsize \(\pm\) 0.1429}& \textbf{3.2763} {\scriptsize \(\pm\) 0.1776}& \textbf{9.8235} {\scriptsize \(\pm\) 0.1489}& \textbf{9.8773}{\scriptsize \(\pm\) 0.1827} & 11.4910 {\scriptsize \(\pm\) 0.1693}& 11.6054{\scriptsize \(\pm\) 0.2064} \\
        \bottomrule
    \end{tabular}
    }
\end{table}

\begin{figure}[hbt!]
    \centering
    \includegraphics[width=0.80\linewidth]{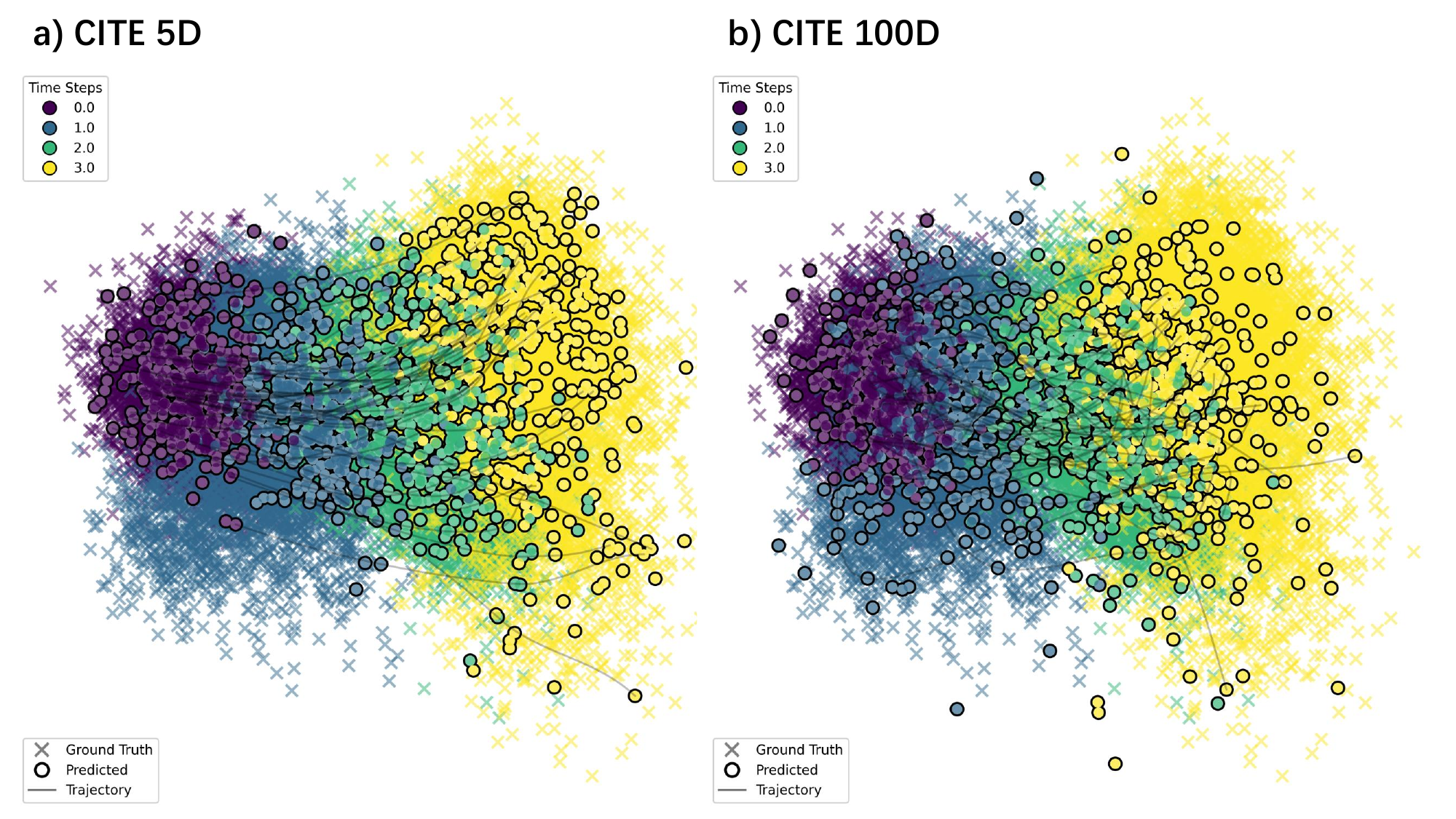}
    \caption{Trajectories learned by TracingFlow on the (a) Cite 5D and (b) Cite 100D datasets, projected to 2D using PCA.}
    \label{fig:exp_APP_cite_5D_and_100D}
\end{figure}

\noindent \textbf{EB 5D Data} \cref{tab:APP_EB_5D} displays the interpolation accuracy on the EB 5D dataset, evaluated by holding out one time point during training. TracingFlow outperformed other methods in reconstruction accuracy for all time points except $t=1$. \cref{fig:exp_appendix_EB} illustrates the interpolated distributions generated by TracingFlow across the four time points.

\begin{table}[H]
    \centering
    \caption{$\mathcal{W}_{1}$ and $\mathcal{W}_{2}$ distances between predicted and ground-truth distributions at held-out time points on the EB 5D dataset.}
    \label{tab:APP_EB_5D}
    \resizebox{\linewidth}{!}{ 
    \begin{tabular}{llcccccccc}
        \toprule
        \multirow{2}{*}{\textbf{Method}} & \multirow{2}{*}{\textbf{Dynamics}} & \multicolumn{2}{c}{\textbf{$t=1$}} & \multicolumn{2}{c}{\textbf{$t=2$}} & \multicolumn{2}{c}{\textbf{$t=3$}} & \multicolumn{2}{c}{\textbf{$t=4$}} \\
        \cmidrule(lr){3-4} \cmidrule(lr){5-6} \cmidrule(lr){7-8} \cmidrule(lr){9-10}
         & & $\mathcal{W}_{1}$ & $\mathcal{W}_{2}$ & $\mathcal{W}_{1}$ & $\mathcal{W}_{2}$ & $\mathcal{W}_{1}$ & $\mathcal{W}_{2}$ & $\mathcal{W}_{1}$ & $\mathcal{W}_{2}$ \\
        \midrule
        OT-CFM & \multirow{2}{*}{1st Order} 
        & 3.4335 {\scriptsize $\pm$ 0.1465} & 3.8462 {\scriptsize $\pm$ 0.1449} 
        & 4.8404 {\scriptsize $\pm$ 0.1496} & 5.5792 {\scriptsize $\pm$ 0.1358} 
        & 5.0713 {\scriptsize $\pm$ 0.2839} & 5.6448 {\scriptsize $\pm$ 0.2861} 
        & 7.1368 {\scriptsize $\pm$ 0.3799} & 7.6938 {\scriptsize $\pm$ 0.4574} \\
        
        SF$^2$M & 
        & \textbf{3.4204} {\scriptsize $\pm$ 0.0637} & \textbf{3.6713} {\scriptsize $\pm$ 0.0589} 
        & \textbf{3.5511} {\scriptsize $\pm$ 0.0394} & 3.9441 {\scriptsize $\pm$ 0.0418} 
        & 4.3175 {\scriptsize $\pm$ 0.2517} & 4.8738 {\scriptsize $\pm$ 0.3247} 
        & \textbf{6.9516} {\scriptsize $\pm$ 0.0616} & 7.4727 {\scriptsize $\pm$ 0.0795} \\
        \midrule
        
        3MSBM & \multirow{3}{*}{2nd Order} 
        & 6.1086 {\scriptsize $\pm$ 1.1215} & 6.5434 {\scriptsize $\pm$ 1.2227} 
        & 7.3626 {\scriptsize $\pm$ 0.8043} & 8.0596 {\scriptsize $\pm$ 1.0252} 
        & 7.5862 {\scriptsize $\pm$ 1.1170} & 8.1734 {\scriptsize $\pm$ 0.7900} 
        & 7.7029 {\scriptsize $\pm$ 0.4241} & \textbf{7.1022} {\scriptsize $\pm$ 0.3971} \\
        
        MMFM & 
        & 10.7803 {\scriptsize $\pm$ 0.6718} & 11.8376 {\scriptsize $\pm$ 0.7823} 
        & 5.7932 {\scriptsize $\pm$ 0.2162} & 6.2242 {\scriptsize $\pm$ 0.1828} 
        & 9.3326 {\scriptsize $\pm$ 0.5617} & 9.9837 {\scriptsize $\pm$ 0.4994} 
        & 13.2046 {\scriptsize $\pm$ 2.1528} & 14.3464 {\scriptsize $\pm$ 2.2982} \\
        
        \textbf{TF (Ours)} & 
        & 3.5611 {\scriptsize $\pm$ 0.0362} & 3.8930 {\scriptsize $\pm$ 0.0243} 
        & 3.5002 {\scriptsize $\pm$ 0.0983} & \textbf{3.8035} {\scriptsize $\pm$ 0.1086} 
        & \textbf{3.5643} {\scriptsize $\pm$ 0.1140} & \textbf{3.7947} {\scriptsize $\pm$ 0.4123} 
        & 7.3639 {\scriptsize $\pm$ 0.1316} & 7.6523 {\scriptsize $\pm$ 0.1976} \\
        \bottomrule
    \end{tabular}
    }
\end{table}

\begin{figure}[hbt!]
    \centering
    \includegraphics[width=\linewidth]{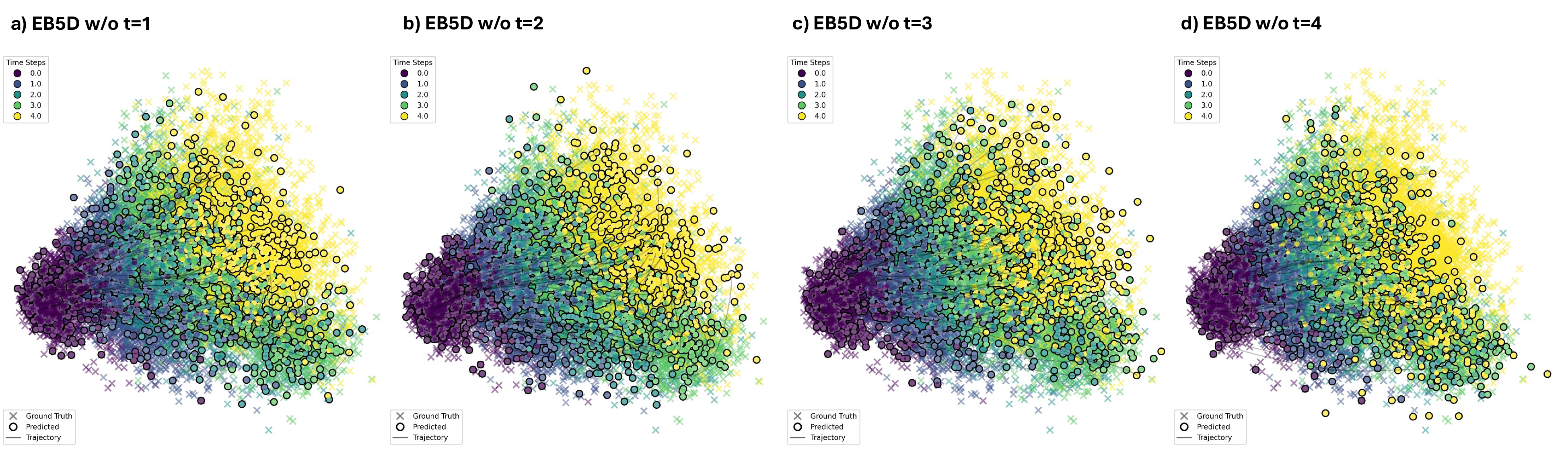}
    \caption{Predicted distributions by TracingFlow at four held-out time points, projected to 2D using PCA.}
    \label{fig:exp_appendix_EB}
\end{figure}

\noindent \textbf{3D Simulation Lineage Dataset}  \cref{tab:APP_TAB_3D_SimLineage} demonstrates the ability of various algorithms to preserve biological priors on the 3D Simulation Lineage dataset. Since the trajectories learned by TracingFlow align with biological priors (see \cref{fig:exp_lineage_tracing_1}), it achieves lower lineage-weighted $\mathcal{W}{1}$ and $\mathcal{W}{2}$ scores on average.

\begin{table}[H]
    \centering
    \caption{lineage-weighted $\mathcal{W}_{1}$ and $\mathcal{W}_{2}$ distances between the generated and ground-truth distributions at each time point on the 3D Simulation Lineage dataset}
    \label{tab:APP_TAB_3D_SimLineage}
    \resizebox{0.9\linewidth}{!}{ 
    \begin{tabular}{llcccc}
        \toprule
        \multirow{2}{*}{\textbf{Method}} & \multirow{2}{*}{\textbf{Dynamics}} & \multicolumn{2}{c}{\textbf{$t=1$}} & \multicolumn{2}{c}{\textbf{$t=2$}} \\
        \cmidrule(lr){3-4} \cmidrule(lr){5-6}
         & & $\mathcal{W}_{1}$ & $\mathcal{W}_{2}$ & $\mathcal{W}_{1}$ & $\mathcal{W}_{2}$ \\
        \midrule
        OT-CFM & \multirow{2}{*}{1st Order} & 2.6080 {\scriptsize $\pm$ 0.0029} & 2.6869 {\scriptsize $\pm$ 0.0030}& 1.8686 {\scriptsize $\pm$ 0.0382}& 1.9456 {\scriptsize $\pm$ 0.0403}\\
        SF$^2$M & & 2.8262 {\scriptsize $\pm$ 0.0409}& 2.8373 {\scriptsize $\pm$ 0.0462}& 0.2247 {\scriptsize $\pm$ 0.0617}& 0.2533 {\scriptsize $\pm$ 0.0568}\\
        \midrule
        3MSBM & \multirow{6}{*}{2nd Order} & 4.0141 {\scriptsize $\pm$ 0.8788}& 4.2262 {\scriptsize $\pm$ 0.8596}& 0.7548 {\scriptsize $\pm$ 0.0726}& 0.6751 {\scriptsize $\pm$ 0.0573}\\
        MMFM & & 1.8585 {\scriptsize $\pm$ 0.0350}& 2.1618 {\scriptsize $\pm$ 0.0496}& 3.5695 {\scriptsize $\pm$ 0.5692}& 4.1028 {\scriptsize $\pm$ 0.6037}\\
        HRF & & 2.9150 {\scriptsize $\pm$ 0.2595}& 2.9265 {\scriptsize $\pm$ 0.3901}& 0.6717 {\scriptsize $\pm$ 0.5491}& 0.6905 {\scriptsize $\pm$ 0.4143}\\
        CAF & & 1.6284 {\scriptsize $\pm$ 0.2249}& 1.6623 {\scriptsize $\pm$ 0.3726}& 3.7328 {\scriptsize $\pm$ 0.3305}& 3.8471 {\scriptsize $\pm$ 0.3316}\\
        TF w/o Bio. Prior & & 2.8267 {\scriptsize $\pm$ 0.2896} & 2.8376 {\scriptsize $\pm$ 0.3714} & {0.2317} {\scriptsize $\pm$ 0.0512} & {0.2687} {\scriptsize $\pm$ 0.0892} \\
        \textbf{TF(Ours)} & & \textbf{0.4282} {\scriptsize $\pm$ 0.1606}& \textbf{0.4989} {\scriptsize $\pm$ 0.1585}& 0.4795 {\scriptsize $\pm$ 0.2680}& 0.5667 {\scriptsize $\pm$ 0.2654}\\
        \bottomrule
    \end{tabular}
    }
\end{table}


\noindent \textbf{Hematopoiesis Dataset}  \cref{tab:APP_Real_Lineage} presents the performance of different algorithms in retaining biological priors on real-world datasets, measured by the lineage-weighted $\mathcal{W}{1}, \mathcal{W}{2}$ distance. TracingFlow surpassed other algorithms at all time points. We also evaluated the algorithms without introducing biological priors by Vanilla $\mathcal{W}_1$, $\mathcal{W}_2$ distance, and as shown in \cref{tab:APP_Real_Lineage_2}, TracingFlow maintained superior performance. \cref{fig:exp_appendix_real_lineage} visualizes the data points generated by TracingFlow on the real-world dataset under both conditions (with and without biological priors).

\begin{table}[H]
    \centering
    \caption{lineage-weighted $\mathcal{W}_{1}$ and $\mathcal{W}_{2}$ distances between the generated and ground-truth distributions at each time point on the Hematopoiesis dataset}
    \label{tab:APP_Real_Lineage}
    \resizebox{0.9\linewidth}{!}{ 
    \begin{tabular}{llcccc}
        \toprule
        \multirow{2}{*}{\textbf{Method}} & \multirow{2}{*}{\textbf{Dynamics}} & \multicolumn{2}{c}{\textbf{$t=1$}} & \multicolumn{2}{c}{\textbf{$t=2$}} \\
        \cmidrule(lr){3-4} \cmidrule(lr){5-6}
         & & $\mathcal{W}_{1}$ & $\mathcal{W}_{2}$ & $\mathcal{W}_{1}$ & $\mathcal{W}_{2}$ \\
        \midrule
        OT-CFM & \multirow{2}{*}{1st Order} & 13.2972 {\scriptsize $\pm$ 0.0804}& 13.7134 {\scriptsize $\pm$ 0.0831}& 16.6918 {\scriptsize $\pm$ 0.1973}& 17.0807 {\scriptsize $\pm$ 0.2025}\\
        SF$^2$M & & 14.1427 {\scriptsize $\pm$ 0.0341}& 14.4199 {\scriptsize $\pm$ 0.0303}& 23.2913 {\scriptsize $\pm$ 0.2590}& 23.5314 {\scriptsize $\pm$ 0.2640}\\
        \midrule
        3MSBM & \multirow{6}{*}{2nd Order} & 17.6487 {\scriptsize $\pm$ 0.5250} & 17.8710 {\scriptsize $\pm$ 0.5204} & 19.0139 {\scriptsize $\pm$ 1.4418}& 19.4144 {\scriptsize $\pm$ 1.3622}\\
        MMFM & & 18.4040 {\scriptsize $\pm$ 2.8129}& 19.8538 {\scriptsize $\pm$ 2.8906}& 36.0412 {\scriptsize $\pm$ 3.0429}& 40.2282 {\scriptsize $\pm$ 3.3934}\\
        HRF & & 14.1007 {\scriptsize $\pm$ 0.0804}& 14.3122 {\scriptsize $\pm$ 0.0831}& 16.8135 {\scriptsize $\pm$ 0.1973}& 17.1154 {\scriptsize $\pm$ 0.2025}\\
        CAF & & 13.1098 {\scriptsize $\pm$ 0.0628}&  13.3263 {\scriptsize $\pm$ 0.0742}& 17.3232 {\scriptsize $\pm$ 0.0580} & 17.6685 {\scriptsize $\pm$ 0.0572}\\
        TF w/o Bio. Prior & & 14.1260 {\scriptsize $\pm$ 0.0526}& 14.5958 {\scriptsize $\pm$ 0.0537} & 19.1073 {\scriptsize $\pm$  0.1932}& 19.4262 {\scriptsize $\pm$ 0.2375}\\
        \textbf{TF} & & \textbf{11.5203} {\scriptsize $\pm$ 0.0695}& \textbf{12.0151} {\scriptsize $\pm$ 0.0809}& \textbf{16.5736} {\scriptsize $\pm$ 0.2603}& \textbf{16.9027} {\scriptsize $\pm$ 0.2999}\\
        \bottomrule
    \end{tabular}
    }
\end{table}

\begin{table}
\centering

\begin{tabular}{l l l l}
\hline

\end{tabular}

\end{table}

\begin{table}[H]
    \centering
    \caption{$\mathcal{W}{1}$ and $\mathcal{W}{2}$ distances between the generated and ground-truth distributions at each time point on the Hematopoiesis dataset}
    \label{tab:APP_Real_Lineage_2}
    \resizebox{0.75\linewidth}{!}{ 
    \begin{tabular}{llcccccc}
        \toprule
        \multirow{2}{*}{\textbf{Method}} & \multirow{2}{*}{\textbf{Dynamics}} & \multicolumn{2}{c}{\textbf{$t=1$}} & \multicolumn{2}{c}{\textbf{$t=2$}} & \multicolumn{2}{c}{\textbf{Average}} \\
        \cmidrule(lr){3-4} \cmidrule(lr){5-6} \cmidrule(lr){7-8} 
         & & $\mathcal{W}_{1}$ & $\mathcal{W}_{2}$ & $\mathcal{W}_{1}$ & $\mathcal{W}_{2}$ & $\mathcal{W}_{1}$ & $\mathcal{W}_{2}$ \\
        \midrule
        OT-CFM & \multirow{2}{*}{First-order} & 9.8780 & 10.6132 & 10.7621 & 11.4466 & 10.3201 & 11.0299 \\
        SF$^2$M & & 10.2352 & 10.8017 & 13.2507 & 13.6879 & 11.7430 & 12.2428 \\
        \midrule
        3MSBM & \multirow{5}{*}{Second-order} & 12.8214 & 13.2501 & 20.0262 & 20.6374 & 16.4238 & 16.9438 \\
        MMFM & & 10.4305 & 10.8695 & 22.3269 & 23.5322 & 16.3787 & 17.2008 \\
        HRF & & 10.6233 & 11.4510 & 10.9570 & 11.6201 & 10.7902 & 11.5356 \\
        CAF & & 21.8716 & 22.3309 & 17.1251 & 18.3914 & 19.4984 & 20.3612 \\
        TF w/o Bio. Prior & & \textbf{7.0139} & \textbf{7.4140} & \textbf{10.1642} & \textbf{10.5925} & \textbf{8.5891} & \textbf{9.0032} \\
        \bottomrule
    \end{tabular}
    }
\end{table}

\begin{figure}[hbt!]
    \centering
    \includegraphics[width=0.85\linewidth]{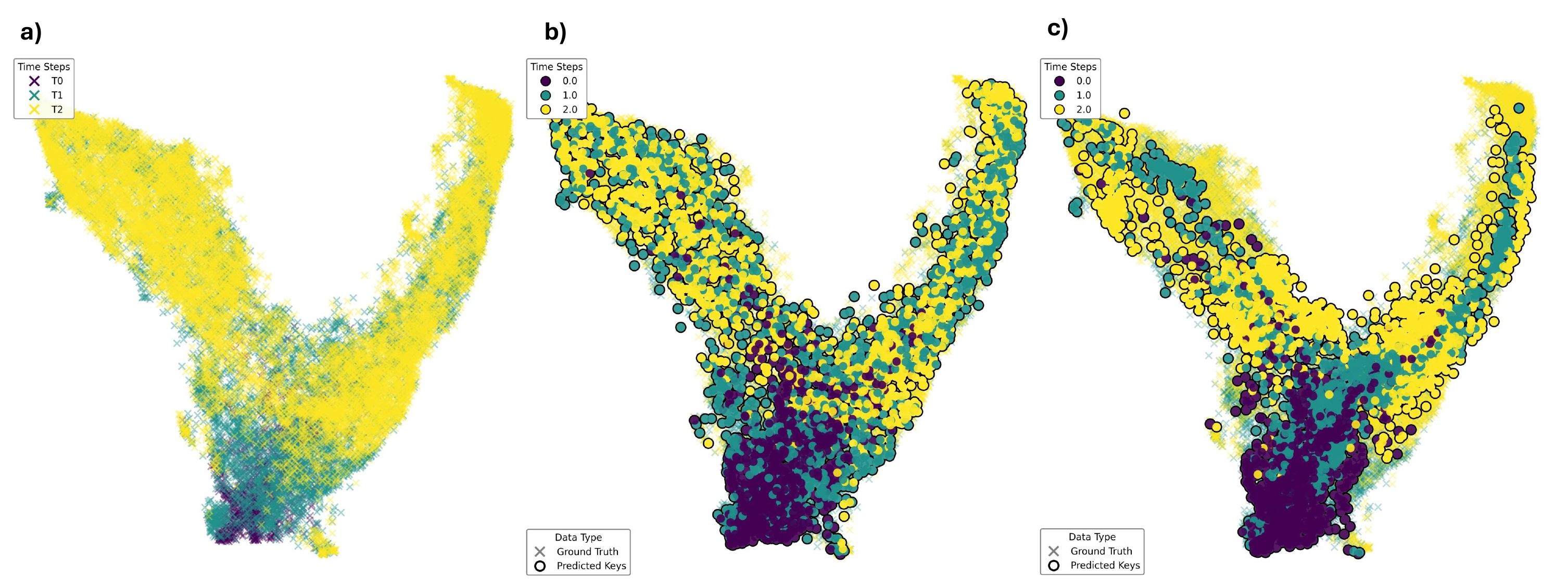}
    \caption{Original data (a), and data points generated by TracingFlow on the Hematopoiesis dataset  without biological priors (b) and with biological priors (c), projected to 2D using UMAP.}
    \label{fig:exp_appendix_real_lineage}
\end{figure}

\noindent \textbf{Mexico Gulf Dataset} \  \cref{tab:mexico_gulf} presents the performance of various algorithms on the Mexico Gulf dataset. Tracing Flow achieves the best distribution reconstruction accuracy at all time points, yielding smooth trajectories.

\begin{table}[H]
    \centering
    \caption{$\mathcal{W}_{1}$ and $\mathcal{W}_{2}$ distances between the generated and ground-truth distributions at each time point on the Mexico Gulf dataset}
    \label{tab:mexico_gulf}
    
    \resizebox{\linewidth}{!}{ 
    \begin{tabular}{llcccccccc}
        \toprule
        \multirow{2}{*}{\textbf{Method}} & \multirow{2}{*}{\textbf{Dynamics}} & \multicolumn{2}{c}{\textbf{$t=1$}} & \multicolumn{2}{c}{\textbf{$t=2$}} & \multicolumn{2}{c}{\textbf{$t=3$}} & \multicolumn{2}{c}{\textbf{$t=4$}} \\
        \cmidrule(lr){3-4} \cmidrule(lr){5-6} \cmidrule(lr){7-8} \cmidrule(lr){9-10}
         & & $\mathcal{W}_{1}$ & $\mathcal{W}_{2}$ & $\mathcal{W}_{1}$ & $\mathcal{W}_{2}$ & $\mathcal{W}_{1}$ & $\mathcal{W}_{2}$ & $\mathcal{W}_{1}$ & $\mathcal{W}_{2}$ \\
        \midrule
        OT-CFM & \multirow{2}{*}{First-order} & 0.0336 & 0.0375 & 0.0791 & 0.0887 & 0.1367 & 0.1415 & 0.1409 & 0.1474 \\
        SF$^2$M & & 0.0768 & 0.0917 & 0.0802 & 0.0964 & 0.0819 & 0.0968 & 0.0809 & 0.0987 \\
        \midrule
        3MSBM & \multirow{5}{*}{Second-order} & 0.4407 & 0.6305 & 0.5565 & 0.7617 & 0.5887 & 0.9113 & 0.5231 & 1.0714 \\
        MMFM & & 0.0346 & 0.0364 & 0.2961 & 0.3008 & 1.0538 & 1.0571 & 1.8755 & 1.8766 \\
        HRF & & 0.6205 & 0.6217 & 1.2004 & 1.2011 & 1.5334 & 1.5337 & 1.4986 & 1.5013 \\
        CAF & & 1.7659 & 1.7663 & 2.4713 & 2.4714 & 2.0742 & 2.0755 & 1.3040 & 1.3071 \\
        TF & & \textbf{0.0128} & \textbf{0.0145} & \textbf{0.0301} & \textbf{0.0339} & \textbf{0.0407} & \textbf{0.0458} & \textbf{0.0348} & \textbf{0.0456} \\
        \bottomrule
    \end{tabular}
    }
    
    \vspace{0.1em} 
    
    \resizebox{\linewidth}{!}{ 
    \begin{tabular}{llcccccccccc}
        \toprule
        \multirow{2}{*}{\textbf{Method}} & \multirow{2}{*}{\textbf{Dynamics}} & \multicolumn{2}{c}{\textbf{$t=5$}} & \multicolumn{2}{c}{\textbf{$t=6$}} & \multicolumn{2}{c}{\textbf{$t=7$}} & \multicolumn{2}{c}{\textbf{$t=8$}} & \multicolumn{2}{c}{\textbf{Average}} \\
        \cmidrule(lr){3-4} \cmidrule(lr){5-6} \cmidrule(lr){7-8} \cmidrule(lr){9-10} \cmidrule(lr){11-12}
         & & $\mathcal{W}_{1}$ & $\mathcal{W}_{2}$ & $\mathcal{W}_{1}$ & $\mathcal{W}_{2}$ & $\mathcal{W}_{1}$ & $\mathcal{W}_{2}$ & $\mathcal{W}_{1}$ & $\mathcal{W}_{2}$ & $\mathcal{W}_{1}$ & $\mathcal{W}_{2}$ \\
        \midrule
        OT-CFM & \multirow{2}{*}{First-order} & 0.1318 & 0.1370 & 0.1364 & 0.1425 & 0.1610 & 0.1799 & 0.1871 & 0.2236 & 0.1258 & 0.1373 \\
        SF$^2$M & & 0.1030 & 0.1191 & 0.0851 & 0.0956 & 0.1401 & 0.1494 & 0.1965 & 0.2068 & 0.1056 & 0.1193 \\
        \midrule
        3MSBM & \multirow{5}{*}{Second-order} & 0.2185 & 1.2783 & 0.5362 & 2.0436 & 0.8474 & 4.5953 & 1.3606 & 13.2434 & 0.6340 & 3.0669 \\
        MMFM & & 2.3926 & 2.3928 & 2.7214 & 2.7215 & 2.9191 & 2.9199 & 2.9930 & 2.9945 & 1.7857 & 1.7875 \\
        HRF & & 1.3022 & 1.3075 & 0.9476 & 0.9518 & 0.6480 & 0.6535 & 0.3296 & 0.3341 & 1.0101 & 1.0131 \\
        CAF & & 0.7086 & 0.7124 & 0.4669 & 0.4751 & 0.6672 & 0.6914 & 0.9641 & 0.9914 & 1.3028 & 1.3113 \\
        TF & & \textbf{0.0545} & \textbf{0.0657} & \textbf{0.0778} & \textbf{0.0824} & \textbf{0.1039} & \textbf{0.1133} & \textbf{0.1206} & \textbf{0.1309} & \textbf{0.0594} & \textbf{0.0665} \\
        \bottomrule
    \end{tabular}
    }
\end{table}

\subsection{Training time and scalability of TracingFlow} 

\label{app:scalability}

To evaluate the scalability of TracingFlow, we report the training time required by each method across various datasets, as presented in Table~\ref{tab:APP_training_time}. Experimental results indicate that Flow Matching algorithms based on second-order dynamics (e.g., 3 MSBM and our TracingFlow) generally require more training time compared to first-order dynamics algorithms (e.g., OT-CFM and SF$^2$M). Notably, the training time of TracingFlow is slightly lower than that of 3 MSBM, demonstrating that our method does not introduce additional computational overhead.

\begin{table}[H]
    \centering
    \caption{Training time comparison (in seconds) of different methods across various datasets.}
    \label{tab:APP_training_time}
    \resizebox{0.75\linewidth}{!}{ 
    \begin{tabular}{lccccc}
        \toprule
        \textbf{Method} & \textbf{Simulation} & \textbf{Cite 5D} & \textbf{Cite 100D} & \textbf{SimLineage-3D} & \textbf{RealLineage} \\
        \midrule
        OT-CFM & 12 s & 175 s & 218 s & 13 s & 196 s \\
        SF$^2$M & 26 s & 79 s & 84 s & 34 s & 61 s \\ 
        MMFM & 42 s & 96 s & 179 s & 68 s & 201 s \\
        3 MSBM& 220 s & 495 s & 910 s & 244 s & 855 s \\
        CAF & 94 s & 274 s & 382 s & 256 s & 264 s \\
        HRF & 247 s & 271 s & 359 s & 289 s & 347 s \\
        TF  & 225 s & 319 s & 806 s & 269 s & 742 s \\
        \bottomrule
    \end{tabular}
    }
\end{table}


%

\subsection{Sensitivity Analysis on Minibatch-OT} 

\label{app:minibatch}

In all experiments presented in this paper, the full dataset was utilized to compute the OT plan. However, as the scale of the data increases, the computational cost of the OT plan grows rapidly, making the adoption of the minibatch computation method described in \cref{article:minibatch-OT} inevitable. We conducted a sensitivity analysis on the Cite-5D dataset to evaluate the impact of batch size in minibatch-OT computation on the accuracy of distribution reconstruction. The results are presented in Table~\ref{tab:APP_sensitivity_batchsize}. Experimental results demonstrate that TracingFlow consistently and accurately reconstructs the distributions at each time point when the batch size is adjusted within a certain range ( batch size $ \sim 10^{3}$).

\begin{table}[H]
    \centering
    \caption{Sensitivity analysis of batch size for Minibatch-OT on the Cite-5D dataset.}
    \label{tab:APP_sensitivity_batchsize}
    \resizebox{0.75\linewidth}{!}{ 
    \begin{tabular}{lcccccccc}
        \toprule
        \multirow{2}{*}{\textbf{Batch Size}} & \multicolumn{2}{c}{\textbf{$t=1$}} & \multicolumn{2}{c}{\textbf{$t=2$}} & \multicolumn{2}{c}{\textbf{$t=3$}} & \multicolumn{2}{c}{\textbf{Average}} \\
        \cmidrule(lr){2-3} \cmidrule(lr){4-5} \cmidrule(lr){6-7} \cmidrule(lr){8-9}
         & $\mathcal{W}_{1}$ & $\mathcal{W}_{2}$ & $\mathcal{W}_{1}$ & $\mathcal{W}_{2}$ & $\mathcal{W}_{1}$ & $\mathcal{W}_{2}$ & $\mathcal{W}_{1}$ & $\mathcal{W}_{2}$ \\
        \midrule
        1000 & 0.4344 & 0.4798 & 0.6522 & 0.7183 & 0.7658 & 0.8479 & 0.6175 & 0.6820 \\
        2000 & 0.4196 & 0.4624 & 0.5640 & 0.6251 & 0.7972 & 0.8820 & 0.5936 & 0.6565 \\
       
        Full Dataset & 0.2868 & 0.3185 & 0.5487 & 0.5997 & 0.7689 & 0.8499 & 0.5348 & 0.5894 \\
        \bottomrule
    \end{tabular}
    }
\end{table}
\newpage
\section{Pseudocode for Algorithm}
\label{app:pseudocode_for_algo}
The pseudocode for the training process of Tracing Flow is provided in \cref{alg:training_offline}.

\begin{algorithm}[H]
\caption{Tracing-Flow Matching Training}
\label{alg:training_offline}
\begin{algorithmic}[1]
\REQUIRE 
    Snapshot datasets $\mathcal{D}_k = \{\boldsymbol{x}_k^{(i)}\}_{i=1}^{N_k}$ at times $t_0 < t_1 < \dots < t_K$; \\
    Acceleration Network $\boldsymbol{a}_{\boldsymbol{\theta}}$; Velocity Network $\boldsymbol{v}_{\boldsymbol{\xi}}$; \\
    Hyperparameters: batch size $B_{\text{traj}}$, learning rates $\eta_{\boldsymbol{a}}, \eta_{\boldsymbol{v}}$.
    
\STATE \textbf{Initialize:} 
    Estimated velocity sets $\mathcal{V}_k = \{\boldsymbol{v}_k^{(i)}\}_{i=1}^{N_k} \leftarrow \{\mathbf{0}\}$ for all $k$.

\WHILE{Velocity Estimates Not Converged}
    \STATE \textcolor{blue}{\textbf{Stage 1: Velocity Assignment (Iterative SOAT)}}
    \STATE Initialize velocity accumulators $\hat{\mathcal{V}}_k^{(i)} \leftarrow \emptyset$ for all $k, i$.
    
    \STATE \textcolor{blue}{\textit{// Step 1.1: Solve Optimal Transport}}
    \FOR{$k = 0$ to $K-1$}
        \STATE Compute cost matrix $\mathcal{C}_{i \rightarrow i+1}$ with / w.o. biological prior.
        \STATE Compute optimal coupling $\pi^*_{k \to k+1}$ between $(\boldsymbol{x}_k, \boldsymbol{v}_k)$ and $(\boldsymbol{x}_{k+1}, \boldsymbol{v}_{k+1})$ by solving the SOAT Problem.
        \STATE Compute propagator (transition) matrix $\mathcal{K}^*_{k \to k+1}$ by normalizing $\pi^*_{k \to k+1}$:
        \STATE \quad $\mathcal{K}^*_{k \to k+1}(i, j) = \frac{\pi^*_{k \to k+1}(i, j)}{\sum_{j'} \pi^*_{k \to k+1}(i, j')}$
    \ENDFOR

    \STATE \textcolor{blue}{\textit{// Step 1.2: Trajectory Sampling \& Velocity Update}}
    \STATE Sample $B_{\text{traj}}$ discrete trajectories $T_m = \{(\boldsymbol{x}_{k}^{(s_{m,k})}, \boldsymbol{v}_{k}^{(s_{m,k})})\}_{k=0}^{K}$ for $m=1 \dots B_{\text{traj}}$.
    \STATE \quad $\cdot$ Initial indices $s_{m,0}$ are sampled uniformly.
    \STATE \quad $\cdot$ Subsequent indices $s_{m,k+1} \sim \mathcal{K}^*_{k \to k+1}(s_{m,k}, \cdot)$.
    
    \FOR{$m=1$ to $B_{\text{traj}}$}   
        \STATE Compute optimal path velocities $\{\hat{\boldsymbol{v}}_{k}^{(s_{m,k})}\}_{k=0}^K$ for trajectory $T_m$ using Proposition  \ref{prop:cubic_spline}.
        \STATE Accumulate: $\hat{\mathcal{V}}_k^{(s_{m,k})} \leftarrow \hat{\mathcal{V}}_k^{(s_{m,k})} \cup \{\hat{\boldsymbol{v}}_k^{(s_{m,k})}\}$ for $k = 0 \dots K$.
    \ENDFOR

    \STATE \textcolor{blue}{\textit{// Step 1.3: Average Accumulated Velocities}}
    \FOR{$k = 0$ to $K$, $i = 1$ to $N_k$} 
        \IF{$\hat{\mathcal{V}}_k^{(i)} \neq \emptyset$}
            \STATE $\boldsymbol{v}_k^{(i)} \leftarrow \frac{1}{|\hat{\mathcal{V}}_k^{(i)}|} \sum_{\boldsymbol{v} \in \hat{\mathcal{V}}_k^{(i)}} \boldsymbol{v}$ 
        \ENDIF
    \ENDFOR  
\ENDWHILE

\STATE \textcolor{blue}{\textbf{Stage 2: Final Transport Plan Computation}}
\STATE Compute optimal couplings $\pi^*$ and propagators $\mathcal{K}^*$ using the converged velocities $\mathcal{V}_k$.

\STATE \textcolor{blue}{\textbf{Stage 3: Neural Network Training}}
\WHILE{Not Converged}
    \STATE Sample random time $t \sim \mathcal{U}[t_0, t_K]$ and coupling $\boldsymbol{z} \sim q(\boldsymbol{z})$.
    \STATE Compute Acceleration Matching Loss:
    \STATE \quad $\mathcal{L}_{\text{CAFM}} = \mathbb{E}_{(\boldsymbol{x}, \boldsymbol{v}) \sim \rho_{t}(\cdot|z) , \boldsymbol{z} \sim q(\boldsymbol{z}) , t \sim \mathcal{U}[t_0,t_K]} \Big[ \| \boldsymbol{a}_{\boldsymbol{\theta}}(\boldsymbol{x}, \boldsymbol{v}, t) - \boldsymbol{a}(\boldsymbol{x}, \boldsymbol{v}, t|z) \|^{2} \Big]$
    
    \STATE Compute Initial Velocity Loss:
    \STATE \quad $\mathcal{L}_{\boldsymbol{v}_0} = \mathbb{E}_{(\boldsymbol{x}, \boldsymbol{v}) \sim \rho_{0}} \Big[ \| \boldsymbol{v}_{\boldsymbol{\xi}}(\boldsymbol{x}) - \boldsymbol{v} \|^{2} \Big]$
    
    \STATE Update parameters (Gradient Descent):
    \STATE \quad $\boldsymbol{\theta} \leftarrow \boldsymbol{\theta} - \eta_{\boldsymbol{a}} \nabla_{\boldsymbol{\theta}} \mathcal{L}_{\text{CAFM}}$
    \STATE \quad $\boldsymbol{\xi} \leftarrow \boldsymbol{\xi} - \eta_{\boldsymbol{v}} \nabla_{\boldsymbol{\xi}} \mathcal{L}_{\boldsymbol{v}_0}$
\ENDWHILE
\end{algorithmic}
\end{algorithm}

\section{Discussion}

\label{app:discussion}

\subsection{Relation to other Algorithms}
\label{app:other_algo}

\paragraph{Relation to 3 MSBM} 3 MSBM considers the following stochastic optimal control problem:
\begin{equation}
    \min \mathbb{E}_{\rho_{t}} \int [\|\boldsymbol{a}_{t} \|^{2}] \mathrm{d} t
\end{equation}
\begin{equation}
    \mathrm{d} \boldsymbol{m}_{t} = A\boldsymbol{m}_{t} \mathrm{d} t + \boldsymbol{u}_{t} \mathrm{d} t + g \mathrm{d} \boldsymbol{W}_{t}, \quad \boldsymbol{x}_{n} \sim q_{n} = \int \pi_{n}(\boldsymbol{x}, \boldsymbol{v} ) \mathrm{d} \boldsymbol{v}_{n}
\end{equation}
where $\boldsymbol{m}_{t} = [\boldsymbol{x}_{t}, \boldsymbol{v}_{t}]^{T}$, $A= \begin{bmatrix} 0 & 0 \\ 0 & 1\end{bmatrix}$, and $g = \begin{bmatrix} 0 & 0 \\ 0 & \sigma\end{bmatrix}$. This problem can be viewed as a ``stochastic'' version of the VM-DOAT problem proposed in this paper. 3 MSBM also provides a simulation-free training method to regress the acceleration field. However, 3 MSBM does not explicitly solve this stochastic optimal control problem using the form of optimal coupling plus optimal single-particle trajectories. Its optimal coupling requires iterative solutions similar to Rectified Flow \citep{liu2022flow} : a process of repeatedly selecting the coupling and training $a_{\theta}$. Furthermore, it relies on a heuristic method to estimate the initial velocity (via normal initialization and iteratively solving forward and backward SDEs), rather than incorporating this estimation as part of the optimal control problem. Experiments show that Tracing Flow achieves better distribution reconstruction accuracy than 3MSBM.

\paragraph{Relation to MMFM} In MMFM, the single-particle path is similarly specified by:
\begin{equation}
    \int \|\gamma''(t) \|^{2} \mathrm{d} t, \quad \boldsymbol{x}_{k} = \gamma(t_{k})
\end{equation}
which corresponds to a natural spline. However, MMFM does not solve an optimal control problem over distributions. Furthermore, the velocity field in MMFM remains single-valued with respect to $\boldsymbol{x}$, which prevents it from learning trajectories that cross in the position space.

\paragraph{Relation to OAT-FM} OAT-FM introduces a DOAT problem formulation consistent with this paper; however, the objective of OAT-FM is not to genuinely solve the DOAT problem within the augmented space $\mathcal{X}\times \mathcal{V}$, but rather to fine-tune a pre-trained FM model. It employs the loss:
\begin{align}
\mathcal{L}_{\text{CAFM}}  &= \mathbb{E}_{\pi^{*}[(\boldsymbol{x}_{0}, \boldsymbol{v}_{0}), (\boldsymbol{x}_{1},\boldsymbol{v}_{1})]}\big[ \alpha \| \dfrac{\boldsymbol{x}_{t} - \boldsymbol{x}_{0}}{t} - \dfrac{\boldsymbol{v}_{0} + \boldsymbol{v}_{\boldsymbol{\theta}}}{2} \|^{2} + (1- \alpha) \|\boldsymbol{v}_{\theta}(\boldsymbol{x}_{t}, t) - \boldsymbol{v}_{0} \|^{2} \nonumber\\
&+ \alpha \| \dfrac{\boldsymbol{x}_{1} - \boldsymbol{x}_{t}}{1-t} - \dfrac{ \boldsymbol{v}_{\boldsymbol{\theta}} + \boldsymbol{v}_{1}}{2} \|^{2} + (1- \alpha) \|\boldsymbol{v}_{1} -  \boldsymbol{v}_{\theta}(\boldsymbol{x}_{t}, t)  \|^{2}\big]
\end{align}
to enforce the velocity field along the line segment connecting any two samples to be as parallel to that line as possible. As a fine-tuning approach, it effectively still addresses a distribution transport problem involving only two time points within the position space $\mathcal{X}$, rather than tackling a multi-marginal optimal control problem. Furthermore, OAT-FM does not establish a connection between its loss function and the cost along each trajectory $\dfrac{1}{2} \int_{0}^{1} \|{\ddot \gamma}\|^{2}\mathrm{d} t$, nor does it explicitly learn the acceleration field $\boldsymbol{a}(\boldsymbol{x}, \boldsymbol{v},t)$.

\subsection{Is $\mathcal{S} =  \mathcal{X} \times \mathcal{V}$  a Phase Space?}

\label{app:discussion_phase_space}


In the Conclusion and Limitation section, we noted that if $\boldsymbol{x}$ is regarded as the generalized coordinate and $\boldsymbol{v}$ as its time derivative, the term $\int \frac{1}{2} \|\boldsymbol{a} \|^{2} \mathrm{d} t$ cannot be interpreted as a physical action, as the action in classical mechanics typically does not involve second-order time derivatives. Here, we offer an alternative physical interpretation of this control objective.

\begin{proposition}
Consider a $d$-dimensional second-order dynamical system governed by the equations:
\begin{align}
\dot{\boldsymbol{x}} &= \boldsymbol{v} \\
\dot{\boldsymbol{v}} &= \boldsymbol{a} 
\end{align}
where $\boldsymbol{x}, \boldsymbol{v}, \boldsymbol{a} \in \mathbb{R}^{d}$. The optimal control objective is given by:
\begin{equation}
    \min   \frac{1}{2} \int_{0}^T \|\boldsymbol{a}\|^{2} \mathrm{d} t 
\end{equation}

The evolution of this system under the optimal control law is equivalent to a Hamiltonian system in $4d$-dimensional phase space with the following Hamiltonian:
\begin{equation}
    H = \boldsymbol{p}_{1}^{T} \boldsymbol{q}_{2} + \frac{1}{2} \|\boldsymbol{p}_{2}\|^{2}
\end{equation}
where $\boldsymbol{q}_{1}, \boldsymbol{q}_{2} \in \mathbb{R}^{d}$ are the canonical coordinates, and $\boldsymbol{p}_{1}, \boldsymbol{p}_{2} \in \mathbb{R}^{d}$ are the canonical momenta (note that $\boldsymbol{q}_{1}$ does not appear in the Hamiltonian). Here, $\boldsymbol{q}_{1}$ and $\boldsymbol{q}_{2}$ correspond to $\boldsymbol{x}$ and $\boldsymbol{v}$ in the optimal control problem, respectively, while $\boldsymbol{p}_{2}$ corresponds to $\boldsymbol{a}$.
\end{proposition}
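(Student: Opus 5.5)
The plan is to apply Pontryagin's maximum principle directly to the single-particle control problem and read off the Hamiltonian. We treat the state as $(\boldsymbol{q}_1,\boldsymbol{q}_2)=(\boldsymbol{x},\boldsymbol{v})\in\mathbb{R}^{2d}$ with dynamics $\dot{\boldsymbol{q}}_1=\boldsymbol{q}_2$, $\dot{\boldsymbol{q}}_2=\boldsymbol{a}$. We introduce costates $\boldsymbol{p}_1,\boldsymbol{p}_2\in\mathbb{R}^d$ and form the control Hamiltonian in the normal case (sign convention for minimization written as a maximization of $-\tfrac12\|\boldsymbol a\|^2$):
\begin{equation*}
\mathcal{H}(\boldsymbol{q},\boldsymbol{p},\boldsymbol{a}) = \boldsymbol{p}_1^{T}\boldsymbol{q}_2 + \boldsymbol{p}_2^{T}\boldsymbol{a} - \tfrac12\|\boldsymbol{a}\|^2 .
\end{equation*}
Since $\mathcal H$ is strictly concave in $\boldsymbol a$, the maximization over $\boldsymbol a$ gives the feedback law $\boldsymbol{a}^*=\boldsymbol{p}_2$. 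This is the identification $\boldsymbol p_2\leftrightarrow\boldsymbol a$ claimed in the statement. Substituting back yields the reduced Hamiltonian
\begin{equation*}
H=\boldsymbol{p}_1^{T}\boldsymbol{q}_2+\tfrac12\|\boldsymbol{p}_2\|^2 .
\end{equation*}
Here $\boldsymbol q_1$ is absent because neither the dynamics nor the running cost depend on position.

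Next we write Hamilton's canonical equations for $H$ on the $4d$-dimensional space $(\boldsymbol q_1,\boldsymbol q_2,\boldsymbol p_1,\boldsymbol p_2)$:
\begin{equation*}
\dot{\boldsymbol q}_1=\partial_{\boldsymbol p_1}H=\boldsymbol q_2,\quad
\dot{\boldsymbol q}_2=\partial_{\boldsymbol p_2}H=\boldsymbol p_2,\quad
\dot{\boldsymbol p}_1=-\partial_{\boldsymbol q_1}H=0,\quad
\dot{\boldsymbol p}_2=-\partial_{\boldsymbol q_2}H=-\boldsymbol p_1 .
\end{equation*}
The first two equations reproduce the controlled dynamics with $\boldsymbol a=\boldsymbol p_2$. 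The last two give $\boldsymbol p_1$ constant and $\boldsymbol p_2$ affine in $t$. Hence $\boldsymbol q_1^{(4)}=\ddot{\boldsymbol p}_2=0$, so $\boldsymbol q_1$ is a cubic.

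To establish the claimed equivalence, we compare this Hamiltonian flow with \cref{proposition: optimal_traj_of_particle}. Its Euler--Lagrange equation $\gamma^{(4)}=0$ characterizes the optimal trajectory. Conversely, every cubic with given boundary data $(\boldsymbol x_0,\boldsymbol v_0,\boldsymbol x_T,\boldsymbol v_T)$ arises from the canonical equations by choosing $\boldsymbol p_1=-\dddot{\boldsymbol q}_1$ and $\boldsymbol p_2(0)=\ddot{\boldsymbol q}_1(0)$, since the map from initial costates to terminal state is a linear bijection. Therefore the optimal trajectories are exactly the $(\boldsymbol q_1,\boldsymbol q_2)$-projections of Hamiltonian trajectories satisfying the two-point boundary conditions.

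The main obstacle is making the word ``equivalent'' precise. This requires justifying that the Pontryagin conditions are both necessary and sufficient here, and excluding the abnormal multiplier case. Pontryagin multipliers are a pair $(\lambda_0,\boldsymbol p)\neq 0$, and the normal case above corresponds to $\lambda_0\neq0$. In the abnormal case $\lambda_0=0$, the control term reduces to $\boldsymbol p_2^T\boldsymbol a$; maximizing it over unbounded controls forces $\boldsymbol p_2\equiv0$. The costate equation $\dot{\boldsymbol p}_2=-\boldsymbol p_1$ then forces $\boldsymbol p_1\equiv 0$, so the whole multiplier pair vanishes, a contradiction. Sufficiency follows from convexity: the problem is linear dynamics with a strictly convex quadratic cost, and the uniqueness already proven in \cref{proposition: optimal_traj_of_particle} identifies the Hamiltonian solution as the unique minimizer.
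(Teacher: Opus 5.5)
Your proposal is correct. Its computational core is exactly the paper's proof: write Hamilton's equations for $H$, integrate them to get $\boldsymbol p_1$ constant, $\boldsymbol p_2$ affine and $\boldsymbol q_1$ cubic, then match against \cref{proposition: optimal_traj_of_particle}. What differs is the framing. The paper simply posits $H$ and checks that its flow reproduces the cubic family. It reads off $\boldsymbol p_2=\dot{\boldsymbol q}_2=\boldsymbol a$ and leaves the converse to the remark that the integration constants are ``determined by the boundary conditions''. You instead derive $H$ from Pontryagin's principle, so it appears as the optimal-control Hamiltonian with $\boldsymbol a^*=\boldsymbol p_2$ as the optimal feedback. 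This explains why this particular $H$ arises, rather than merely exhibiting one Hamiltonian whose flow yields cubics. You also make the converse explicit through invertibility of $(\boldsymbol p_2(0),\boldsymbol p_1)\mapsto(\boldsymbol q_2(T),\boldsymbol q_1(T))$. For fixed initial state this map is affine rather than linear, and its coordinatewise $2\times 2$ linear part has determinant $T^4/12\neq 0$; that is worth stating. The paper's route is shorter and suffices for a verification-level reading of ``equivalent''; yours gives a genuine two-sided correspondence. One simplification: your abnormal-multiplier discussion is not needed. Set $\boldsymbol h=\gamma-\gamma^*$. Two integrations by parts, using $\gamma^{*(4)}=0$ and the vanishing of $\boldsymbol h,\boldsymbol h'$ at $0$ and $T$, give $\int_0^T\|\gamma''\|^2\,\mathrm dt=\int_0^T\|\gamma^{*\prime\prime}\|^2\,\mathrm dt+\int_0^T\|\boldsymbol h''\|^2\,\mathrm dt$. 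Together with surjectivity of that map, this already shows that the projected Hamiltonian trajectories are exactly the unique minimizers, without using the necessity half of the maximum principle.
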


\begin{proof}
    We derive the equations of motion directly using Hamilton's canonical equations:
\begin{equation}
    \dot{\boldsymbol{q}}_{1} = \frac{\partial H}{\partial \boldsymbol{p}_{1}} = \boldsymbol{q}_{2}, \quad \dot{\boldsymbol{q}}_{2} = \frac{\partial H}{\partial \boldsymbol{p}_{2}} = \boldsymbol{p}_{2}, \quad \dot{\boldsymbol{p}}_{1} = - \frac{\partial H}{\partial \boldsymbol{q}_{1}} = \boldsymbol{0}, \quad \dot{\boldsymbol{p}}_{2} = - \frac{\partial H}{\partial \boldsymbol{q}_{2}} = -\boldsymbol{p}_{1}
\end{equation}

This implies that $\boldsymbol{p}_{1}$ is constant. Solving these equations sequentially yields:
\begin{align}
\boldsymbol{p}_{2}(t) &= \boldsymbol{c}_{2}' - \boldsymbol{p}_{1}t \\
\boldsymbol{q}_{2}(t) &= \boldsymbol{c}_{1}'  + \boldsymbol{c}_{2}' t - \frac{1}{2} \boldsymbol{p}_{1} t^{2}   \\
\boldsymbol{q}_{1}(t) &= \boldsymbol{c}_{0}' + \boldsymbol{c}_{1}' + \frac{1}{2} \boldsymbol{c}_{2}' t^{2} - \frac{1}{6} \boldsymbol{p}_{1} t^{3}
\end{align}
Comparing this with the results in  \cref{eq:single_particle_minimizer}, we observe that $\boldsymbol{q}_{1}(t)$ and $\boldsymbol{q}_{2}(t)$ satisfy the same equations as $\boldsymbol{x}(t)$ and $\boldsymbol{v}(t)$, with the undetermined constants determined by the boundary conditions. This completes the proof.
\end{proof}

Consequently, we observe that although the augmented space $\mathcal{S} = \mathcal{X} \times \mathcal{V}$ concatenates position $\boldsymbol{x}$ and velocity (momentum) $\boldsymbol{v}$, it \textbf{cannot} be interpreted as a \textbf{phase space}. Instead, it should be viewed as the \textbf{configuration space} of the classical mechanical system described by the Hamiltonian above. We can attempt to recover the Lagrangian from this Hamiltonian. The canonical equations established that $\dot{\boldsymbol{q}}_{1} = \boldsymbol{q}_{2}$ and $\dot{\boldsymbol{q}}_{2} = \boldsymbol{p}_{2}$. Applying the Legendre transformation:
\begin{align}
L &= \boldsymbol{p}_{1}^{T}\dot{\boldsymbol{q}}_{1} + \boldsymbol{p}_{2}^{T}\dot{\boldsymbol{q}}_{2} - H \nonumber\\
&= \boldsymbol{p}_{1}^{T} (\dot{\boldsymbol{q}}_{1} - \boldsymbol{q}_{2}) + \frac{1}{2} \|\dot{\boldsymbol{q}}_{2}\|^{2}
\end{align}
We find that the term $\boldsymbol{p}_{1}^{T}$ cannot be eliminated; the Lagrangian cannot be expressed solely as a function of the canonical coordinates and their first derivatives. This is a characteristic of constrained systems (in Dirac's theory of constraints, a constrained system is typically defined as one where the Hessian matrix of the Lagrangian is singular. To see this intuitively, if we regard $\boldsymbol{p}_{1}$ in the Lagrangian as $d$ new canonical coordinates, their corresponding canonical momenta are zero, implying the motion is confined to a submanifold of the phase space).

In \cref{eq:single_particle_cost}, we derived the minimum cost for the system to transition from $(\boldsymbol{x}_{1}, \boldsymbol{v}_{1})$ to $(\boldsymbol{x}_{2}, \boldsymbol{v}_{2})$. While this cost is a symmetric positive-definite quadratic form with respect to $\boldsymbol{x}$ and $\boldsymbol{v}$, it does not constitute a metric on the augmented space $\mathcal{S} = \mathcal{X} \times \mathcal{V}$. (If it were a metric, the augmented space would be flat, and the optimal control trajectories—geodesics—would be linear functions of $t$). Maupertuis' principle suggests a close relationship between the metric in the configuration space and the system's Lagrangian \citep{arnol2013mathematical}: if the Lagrangian takes the form:
\begin{equation}
  L(\boldsymbol{q}, \dot{\boldsymbol{q}}) = \frac{1}{2} m_{ij}(\boldsymbol{q}) \dot{q}^{i} \dot{q}^{j} - V(\boldsymbol{q})  
\end{equation}
then the configuration space possesses the metric:
\begin{equation}
    g_{ij}(\boldsymbol{q}) = 2(E - V(\boldsymbol{q})) m_{ij}(\boldsymbol{q})
\end{equation}
where $E$ is the total energy of the particle. For the optimal control problem discussed herein, we have seen that the Lagrangian cannot be written in such a form; therefore, it is fundamentally impossible to equip the configuration space $\mathcal{S} = \mathcal{X} \times \mathcal{V}$ with such a metric.

\subsection{Broader Impacts}
This paper presents work whose goal is to advance the field of machine learning. There are many potential societal consequences of our work, none of which we feel must be specifically highlighted here. However, as our algorithm is applied to single-cell data analysis, the fidelity of the generated trajectories is heavily dependent on the quality of the input data. Consequently, when utilizing our method for biological or medical research purposes, it is critical to employ high-quality datasets, incorporate established and accurate biological priors, and ensure that the algorithmic outputs are rigorously validated by domain experts.

\end{document}